\documentclass[pdflatex,sn-mathphys-num]{sn-jnl}% Math and Physical Sciences Numbered Reference Style
\usepackage{graphicx}%
\usepackage{subcaption}
\usepackage{multirow}%
\usepackage{amsmath,amssymb,amsfonts}%
\usepackage{amsthm}%
\usepackage{mathrsfs}%
\usepackage[title]{appendix}%
\usepackage{xcolor}%
\usepackage{textcomp}%
\usepackage{manyfoot}%
\usepackage{booktabs}%
\usepackage{algorithmicx}%
\usepackage{algpseudocode}%
\usepackage{listings}%
\usepackage[ruled,vlined]{algorithm2e}

\theoremstyle{thmstyleone}%
\newtheorem{theorem}{Theorem}[section]%  meant for continuous numbers
\newtheorem{proposition}[theorem]{Proposition}% 

\theoremstyle{thmstyletwo}%
\newtheorem{remark}{Remark}[section]%

\theoremstyle{thmstylethree}%
\newtheorem{assumption}{Assumption}[section]
\newtheorem{corollary}{Corollary}[section]

\begin{document}

\title[Cell-induced densification and tether formation in fibrous extracellular matrices]{Cell-induced densification and tether formation in fibrous extracellular matrices with biomimetic physics-informed neural networks}

% Authors (extracted from your .tex)
\author[1]{\fnm{Anci} \sur{Lin}}
% \email{<not provided in the tex>}

\author*[2,3]{\fnm{Zhiwen} \sur{Zhang}}\email{zhangzw@hku.hk}

\author*[1]{\fnm{Wenju} \sur{Zhao}}\email{zhaowj@sdu.edu.cn}

% Affiliations (parsed from your \address[...] blocks)
\affil[1]{\orgdiv{School of Mathematics},
          \orgname{Shandong University},
          \orgaddress{\city{Jinan},
                      \state{Shandong},
                      \postcode{250100},
                      \country{P.R. China}}}

\affil[2]{\orgdiv{Department of Mathematics},
          \orgname{The University of Hong Kong},
          \orgaddress{\street{Pokfulam Road},
                      \city{Hong Kong},
                      \state{Hong Kong SAR},
                      \country{P.R. China}}}

\affil[3]{\orgdiv{Materials Innovation Institute for Life Sciences and Energy (MILES)},
          \orgname{HKU-SIRI},
          \orgaddress{\city{Shenzhen},
                      \country{P.R. China}}}

%%==================================%%
%% Sample for unstructured abstract %%
%%==================================%%
%Existing methods often suffer from over-smoothing in near-field patterns.
\abstract{Nonconvex multi-well energies in cell-induced phase transitions give rise to fine-scale microstructures, low-regularity transition layers and sharp interfaces, all of which pose numerical challenges for physics-informed learning. Here we introduce biomimetic physics-informed neural networks (Bio-PINNs), which implement a near-to-far curriculum by progressively revealing the computational domain away from the cell boundary and combining this schedule with a deformation-uncertainty proxy that concentrates collocation points near evolving transition layers and tether-forming regions. Across single-cell and multicellular benchmarks, Bio-PINNs recover the densified phase more reliably near cell boundaries and in intercellular gaps, while capturing tether morphology more faithfully than representative ungated and residual-driven adaptive baselines.}

\keywords{Physics-informed neural networks; phase transitions; microstructure formation; adaptive collocation; nonconvex elasticity}

%%\pacs[JEL Classification]{D8, H51}

%%\pacs[MSC Classification]{35A01, 65L10, 65L12, 65L20, 65L70}

\maketitle

\section{Introduction}
Contractile cells mechanically remodel fibrous extracellular matrices (ECMs) by transmitting traction through focal adhesions, thereby driving compression-induced fibre microbuckling and densification phase transitions. Around single cells, this remodelling can produce pericellular densification and needle-like microstructures; in multicellular settings, it can generate intercellular tethers, that is, relatively straight densified bands that mechanically couple distant cells \cite{grekas2021cells}. A recent variational description captures these phenomena through a non-rank-one-convex strain-energy density regularized by a higher-gradient term that introduces an intrinsic length scale for phase-transition layers \cite{grekas2021cells,grekas2022approx}. The resulting solutions may contain sharp interfaces, competing morphologies, and strong sensitivity to cell spacing and arrangement \cite{ball1976convexity,kinderlehrer1991young,kohn1986optimal}.

To visually anchor the biological setting, Fig.~\ref{fig:intro-exp-num} juxtaposes experimentally observed mechanically interacting mammary acini with a representative Bio-PINN solution for a long-distance two-cell configuration. To parallel the visual logic of Fig.~2B in Shi et al.~\cite{shi2014rapid}, the numerical panel uses an in-panel Zoom~1 at the bottom to highlight the intercellular corridor and an external Zoom~2 on the right to highlight the pericellular band adjacent to the cell boundary. The experiment and simulation both exhibit these two motifs: a mechanically organized intercellular corridor between neighbouring structures and a localized pericellular band on the outer flank. This qualitative agreement helps motivate the mechanically interacting patterns that the numerical benchmarks studied below are designed to recover, and we return to this experimental panel in Sec.~\ref{sec:two-cell} as a qualitative biological anchor when interpreting the two-cell numerical morphologies.

\begin{figure}[t]
    \centering
    \includegraphics[width=0.9\textwidth]{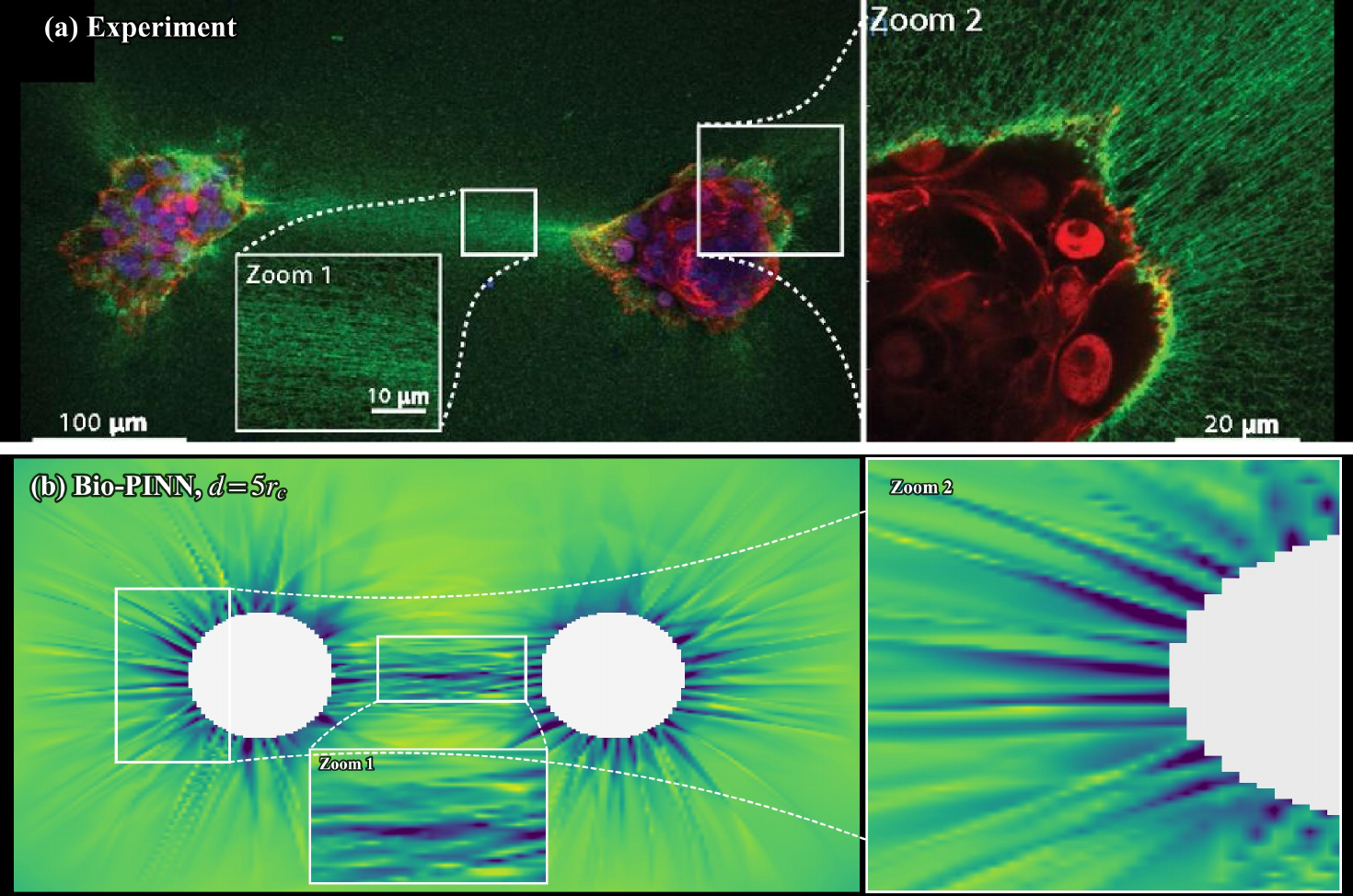}
    \caption{Experiment--simulation comparison for mechanically interacting acini. (a) Fluorescence image of two mammary acini embedded in collagen, adapted from Fig.~2B of Shi et al.~\cite{shi2014rapid} (included here with permission from the authors). (b) Bio-PINN solution in the long-distance weak-coupling regime, $d=5r_c$. The numerical panel mirrors the visual structure of the experimental panel: Zoom~1 is placed inside the main field at the bottom and highlights the intercellular corridor, whereas Zoom~2 is placed externally on the right and highlights the pericellular band on the outer cell flank. Both panels display the coexistence of an intercellular mechanically organized corridor and an outer pericellular band. In the numerical panel, colour denotes $J=\det F$, with smaller $J$ indicating stronger densification.}
    \label{fig:intro-exp-num}
\end{figure}

These features make cell-induced phase transitions challenging to resolve numerically. Nonconvex multi-well energies are commonly treated using higher-order or mixed discretizations, phase-field, gradient-enhanced or nonlocal models, continuation schemes, and relaxation-based approximations \cite{arnold2002dg,brenner2005c0ip,argyris1968tuba,ciarlet1978fem,hughes2005iga,brezzi1991mixed,glowinski1989augmented,allgower1990numerical,cahn1958free,allencahn1979,mindlin1964micro,eringen2002nonlocal,kinderlehrer1991young,kohn1986optimal}. For cell-induced remodelling, however, ultrathin bands and tethers demand extreme spatial resolution, while nonconvexity amplifies sensitivity to initialization, cell geometry, and regularization choices. Minimizing sequences may refine oscillations as they approach the infimum, undermining stability and complicating interpretation \cite{grekas2022approx}.

Physics-informed neural networks (PINNs) and related scientific machine-learning methods are attractive because they provide mesh-free approximations and can be trained directly from governing equations or variational principles \cite{raissi2020hfm,jin2021nsfnets,he2021ade,yang2021bpinn}. Yet their training is usually most reliable when the target solution is sufficiently regular \cite{raissi2019pinn,karniadakis2021piml}. In systems with sharp interfaces, discontinuities, or fine-scale structure, optimization often stagnates or converges to oversmoothed states even under adaptive sampling \cite{krishnapriyan2021failuremodes,wang2022pinnntk,daw2023r3}. For cell-induced phase transitions, this weakness is particularly pronounced. Residual-based indicators can saturate in relatively flat regions and miss mechanically decisive transition zones, global loss reweighting does not reflect the near-to-far progression of cell-mediated remodelling, and the premature introduction of high-frequency content can destabilize optimization near evolving interfaces \cite{daw2023r3,rahaman2019spectralbias,wang2022pinnntk,tancik2020fourier}.

Here we translate two physical cues from cell-mediated remodelling into the training procedure. First, remodelling begins in the pericellular region and propagates outward from the cell boundary, suggesting that learning should reveal the domain from near to far rather than all at once. Second, the relevant microstructures are organized around an intrinsic interfacial length scale, suggesting that collocation points should be concentrated near regions likely to develop transition layers and tethers. Guided by these cues, we develop biomimetic physics-informed neural networks (Bio-PINNs), an energy-based framework that couples a distance-gated near-to-far curriculum with an uncertainty-guided retain, resample, and release (R3) update using low-discrepancy proposals \cite{daw2023r3,e2018deepritz,kharazmi2021vpinn}. The gate prioritizes mechanically decisive near-field interactions before exposing the far field, while the adaptive sampler reallocates collocation effort towards evolving interfaces and tether-forming regions.

We also analyse the resulting adaptive dynamics and establish structural properties for Retain, Resample, and Release, including persistent coverage under gate expansion and quantitative near-to-far accumulation. Figure~\ref{fig:overview-bio-pinns} summarizes the workflow.

\begin{figure}[t]
    \centering
    \includegraphics[width=\textwidth]{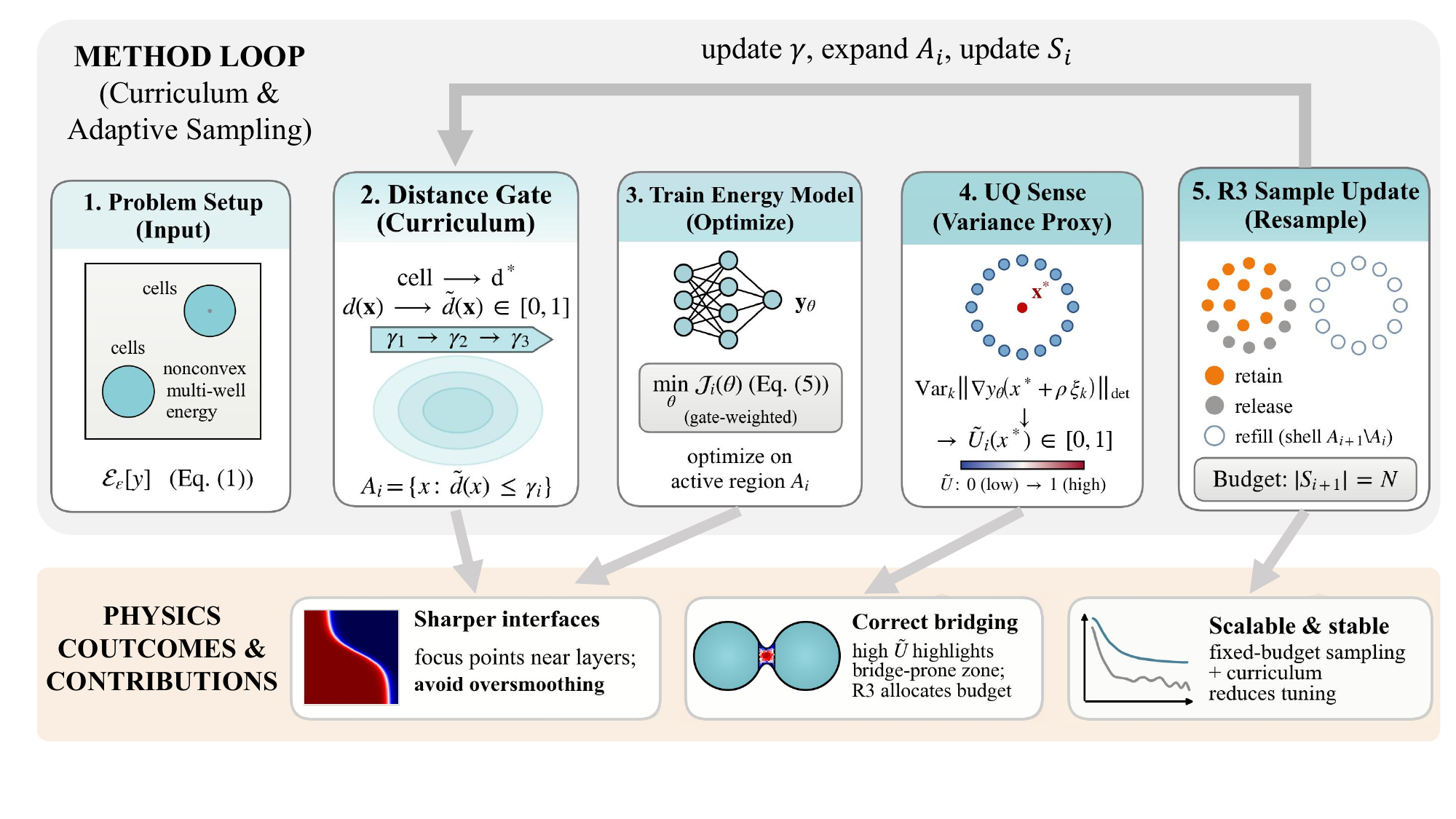}
    \caption{Cell-induced remodelling motivates the Bio-PINN workflow. A logistic distance gate $g_\gamma(x)$ progressively exposes the matrix from the near-cell region to the far field, while a deformation-uncertainty proxy drives an R3 retain, resample, and release update under a fixed collocation budget. Together, these steps focus training on pericellular transition layers and tether-forming regions as the active domain expands.}
    \label{fig:overview-bio-pinns}
\end{figure}

In single-cell benchmarks, Bio-PINNs yield lower near-cell $J$ values and more coherent pericellular densification than ungated or purely residual-driven baselines. In multicellular settings, Bio-PINNs better preserve separation-dependent tether morphologies across regularization regimes. Taken together, these results suggest that Bio-PINNs and related physics-aligned curricula can strengthen scientific machine learning for cell-induced phase transitions and, more broadly, for nonconvex variational problems with sharp interfacial structure.

\section{Results}\label{sec:results}

\subsection{Physical setting and learning framework}\label{sec:biopinn}
We begin with the physical patterns that the model must recover. Contractile cells in a fibrous extracellular matrix can drive strong pericellular densification and, when several cells interact, generate narrow intercellular tethers \cite{ball1976convexity,balljames1987mixtures,grekas2021cells,grekas2022approx}. In the continuum description studied here, these patterns arise as minimizers that form microstructure rather than as smooth single-phase states. The numerical challenge is therefore not simply to reduce a residual or an energy surrogate, but to recover sharp transition layers and geometry-dependent phase selection without introducing spurious deformation in the far field.

\subsubsection{Nonconvex variational target}
Let $\Omega\subset\mathbb R^2$ be a bounded Lipschitz domain, and let
$B_i=\{x\in\Omega:\ |x-c_i|<r_i\}$, $i=1,\dots,N_c$, denote the cell regions.
We work on the perforated domain
$\Omega_c:=\Omega\setminus\bigcup_{i=1}^{N_c} B_i$
and seek a deformation $y:\Omega_c\to\mathbb R^2$ by minimizing the energy
\begin{equation}\label{eq:total-energy}
	\mathcal{E}_\varepsilon[y]
	:=\int_{\Omega_c}\big(W(\nabla y)+\Phi(\det\nabla y)\big)\,dx
	+\frac{\varepsilon^2}{2}\int_{\Omega_c}\|\nabla^2 y\|^2\,dx .
\end{equation}
We denote the displacement by $u:=y-\mathrm{id}$ and the deformation gradient by
$F:=\nabla y=I+\nabla u$.

The strain energy $W$ is multi-well and non-rank-one-convex, encoding the coexistence of low-density and densified phases and promoting microstructure formation \cite{ball1976convexity,balljames1987mixtures}. The variational problem is therefore genuinely nonconvex. Minimizing sequences may develop fine-scale oscillations, the relaxed quasiconvex envelope can differ substantially from $W$, and standard lower-semicontinuity and stability properties may fail. As a result, minimizers can be non-unique and numerical approximations can become highly sensitive to initialization and regularization choices. The penalty $\Phi$ enforces orientation preservation and discourages interpenetration, for example through steep growth as $\det\nabla y\downarrow0$ \cite{ball1981invertibility}. The second-gradient term introduces an intrinsic interfacial length scale and leads to a fourth-order Euler--Lagrange system \cite{toupin1962couple,mindlin1964microstructure,grekas2022approx}. Cell contraction is imposed on each $\partial B_i$ through boundary data, and the resulting minimizers can exhibit sharp interfaces and competing tether morphologies.

\subsubsection{Two biomimetic priors}
Two physical cues guide the learning strategy. First, remodelling progresses from the pericellular region into the bulk ECM, so the computational domain should be revealed from near to far rather than all at once. Second, the relevant microstructures are controlled by an intrinsic interfacial length scale that is often represented through second derivatives of the displacement. We encode these cues by (i) a progressive distance gate that activates the domain from near to far and (ii) a deformation-uncertainty indicator that highlights regions likely to develop transition layers and tethers. The latter is used as a sampling signal for interfacial localization, so that collocation effort is concentrated where explicit second-gradient penalties would matter most.

\subsubsection{Energy-form PINN ansatz}
We represent the deformation by a neural approximation $y_\theta:\Omega_c\to\mathbb R^2$ based on a sufficiently regular feed-forward network $N_\theta$ and a standard lifting construction for outer Dirichlet data \cite{lagaris1998annpde}.
\begin{equation}\label{eq:ytheta}
	y_\theta(x)=\ell(x)+\varphi(x)N_\theta(x),
\end{equation}
where $\ell=g_D$ and $\varphi=0$ on $\Gamma_{\mathrm{out}}=\partial\Omega$. Inner-boundary conditions on
$\Gamma_{\mathrm{in}}:=\partial\Omega_c\setminus\Gamma_{\mathrm{out}}$  are
enforced weakly through a quadratic penalty $\mathcal P_{\mathrm{in}}[y_\theta]$. Training minimizes a
variational objective aligned with microstructure selection in nonconvex phase-transition models
\cite{e2018deepritz}, optionally augmented with weak residual terms \cite{kharazmi2021vpinn}.

\subsubsection{Near-to-far distance gating}
To impose a near-to-far curriculum, we introduce a normalized distance $\tilde d(x)\in[0,1]$ to the union of cell regions $\mathcal C=\cup_i B_i$ and define a smooth gate
\begin{equation}\label{eq:soft-gate-plain}
	g_{\gamma_i}(x)=\sigma\!\big(\alpha(\gamma_i-\tilde d(x))\big)\in(0,1)
\end{equation}
with steepness $\alpha>0$ and gate level $\gamma_i\in(0,1]$. Points with small $\tilde d(x)$ are activated early ($g_{\gamma_i}\approx1$), while the far field is progressively revealed as $\gamma_i$ increases. Figure~\ref{fig:gate_time_to_space} illustrates how this outward progression in remodelling can be translated into a distance-based gate, together with spatial snapshots of $g_{\gamma}(x)$ for different gate levels $\gamma$. The gate weights the training objective so that optimization prioritizes the currently active region, and the gate level is advanced automatically using a progress signal computed from the gate-weighted objective. In this way, an observed physical progression is converted into a numerical schedule that mitigates propagation failure and reduces premature smoothing across interfaces.

\begin{figure}[t]
  \centering
  \includegraphics[width=\linewidth]{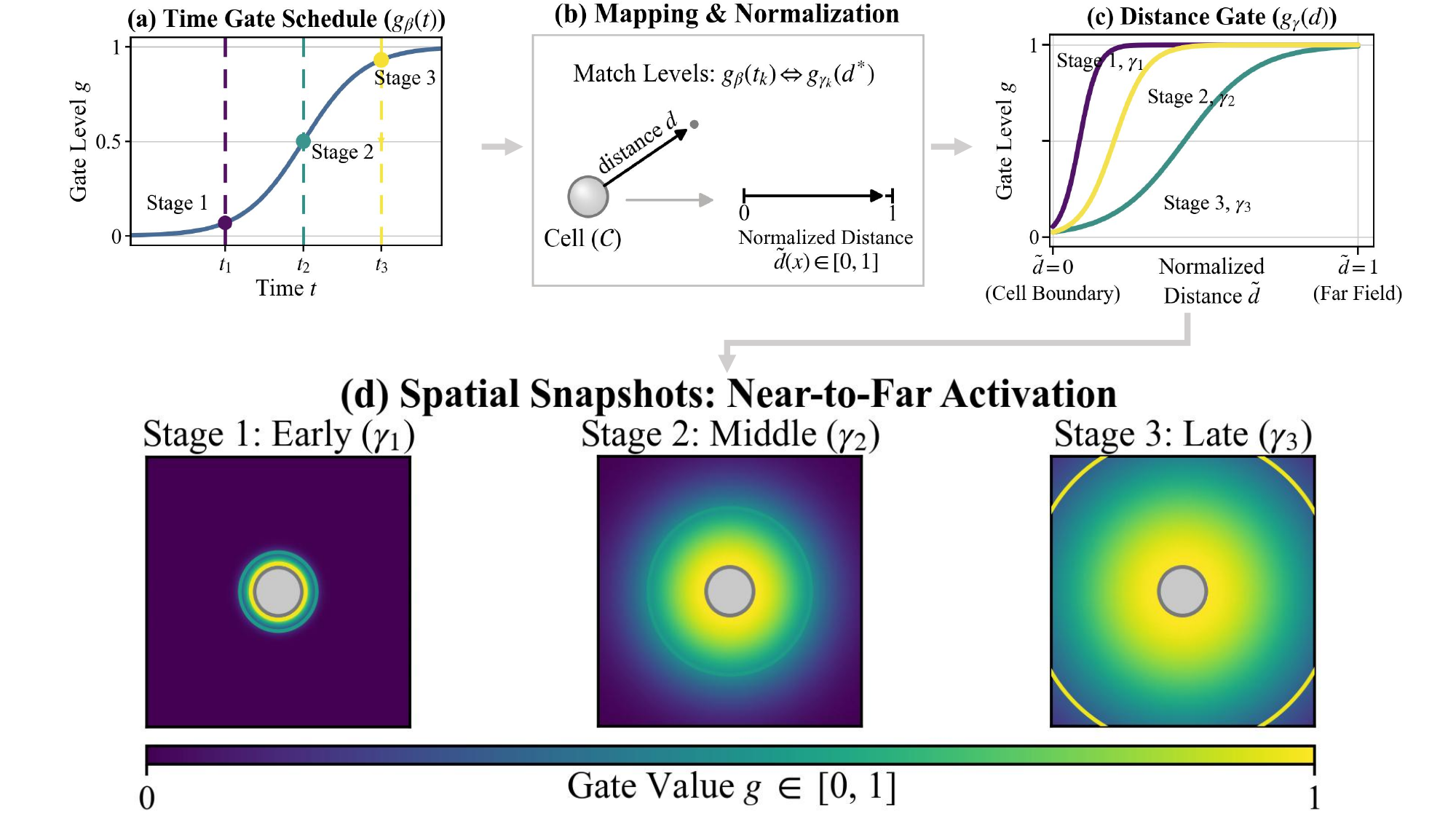}
\caption{Near-to-far activation derived from the physical progression of remodelling. The left panel shows a logistic time gate $g_{\beta}(t)=\sigma\!\left(\alpha(\beta-t)\right)$. The right panel shows its distance-based analogue $g_{\gamma}(\tilde d)=\sigma\!\left(\alpha(\gamma-\tilde d)\right)$, where $\tilde d$ is the normalized distance to the cell. The bottom panels show spatial snapshots of $g_{\gamma}(x)=\sigma\!\left(\alpha(\gamma-\tilde d(x))\right)$ for increasing $\gamma$, showing how the active region expands from the pericellular neighbourhood into the surrounding matrix.}
  \label{fig:gate_time_to_space}
\end{figure}

\subsubsection{Uncertainty as an interfacial-scale proxy}
Interfaces, sharp layers, and emerging tethers are precisely the regions where the learned deformation is most sensitive to local perturbations during training. Small changes in $y_\theta$ can switch phases or alter tether topology. We quantify this sensitivity through an uncertainty-quantification (UQ) proxy based on local deformation variability from an independent collocation.
\begin{equation}\label{eq:Ugrad}
	U_i(x)=\operatorname{Var}_k\!\Big(\|\nabla y_{\theta_i^{(k)}}(x)\|_F\Big),\qquad
	\widetilde U_i(x)=\mathcal N\!\big(U_i^{(\nabla y)}(x)\big)\in[0,1],
\end{equation}
where $\mathcal N(\cdot)$ denotes a stability normalization \cite{efron1979bootstrap,gal2016dropout,lakshminarayanan2017deepensembles}. Other features, such as $|\det\nabla y|$, can be used in the same manner. In this work, $\widetilde U_i$ serves as a local sampling indicator rather than as a calibrated uncertainty estimate. It highlights regions likely to host transition layers or tethers even when residual signals plateau, and it biases sampling towards locations where explicit second-gradient penalties would be most informative without repeatedly evaluating $\nabla^2 y_\theta$ across the full collocation set. Figure~\ref{fig:uq_proxy_local} schematizes the local probing procedure used to evaluate the UQ proxy.

\begin{figure}[t]
  \centering
  \includegraphics[width=0.88\linewidth]{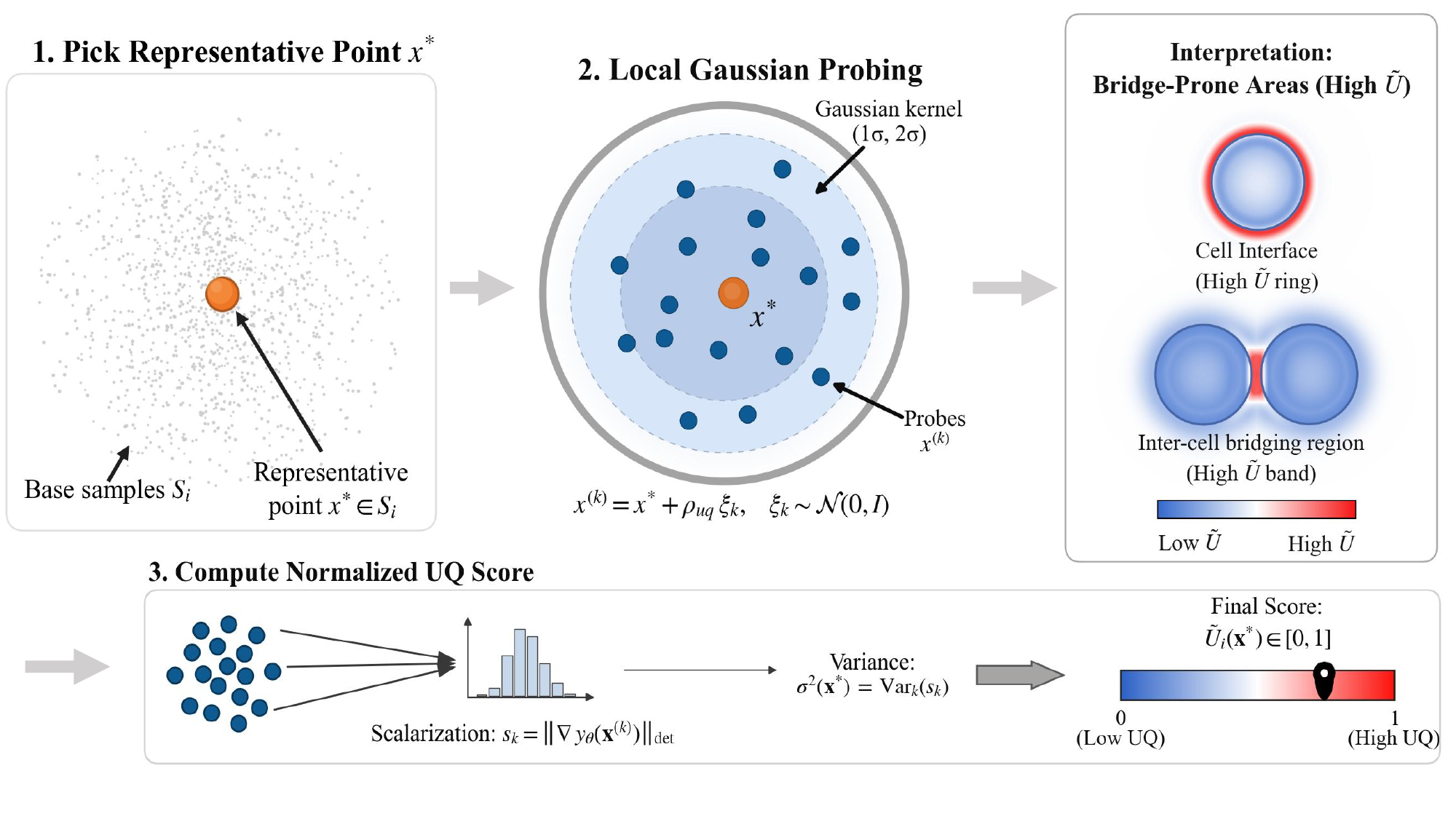}
\caption{Local probing used to detect regions likely to develop transition layers or tethers. Around each representative collocation point $x^\star$, Gaussian probes sample nearby deformations and estimate local variability, for example $\operatorname{Var}_k\|\nabla y_{\theta_i^{(k)}}(x^\star)\|_F$. The resulting normalized score $\widetilde U_i(x^\star)\in[0,1]$ is largest near evolving interfaces and intercellular interaction regions.}
  \label{fig:uq_proxy_local}
\end{figure}

\subsubsection{UQ-R3 adaptive collocation with low-discrepancy proposals}
Bio-PINNs maintain a fixed budget of bulk collocation points $\mathcal S_i$ inside the active region and update them through an uncertainty-driven retain, resample, and release (R3) step \cite{daw2023r3}. At each stage, points with high $\widetilde U_i$ are retained, low-uncertainty points are released, and new points are resampled to refill the budget. When the gate expands, Bio-PINNs inject samples into the newly revealed shell to ensure exploration of regions that have not yet been trained on. New points are generated using low-discrepancy designs, for example Hammersley points, to preserve coverage while focusing on uncertain structures. Figure~\ref{fig:uq_r3_loop} summarizes the resulting R3 update loop and shows how retained points and new proposals interact across stages.

\begin{figure}[t]
  \centering
  \includegraphics[width=\linewidth]{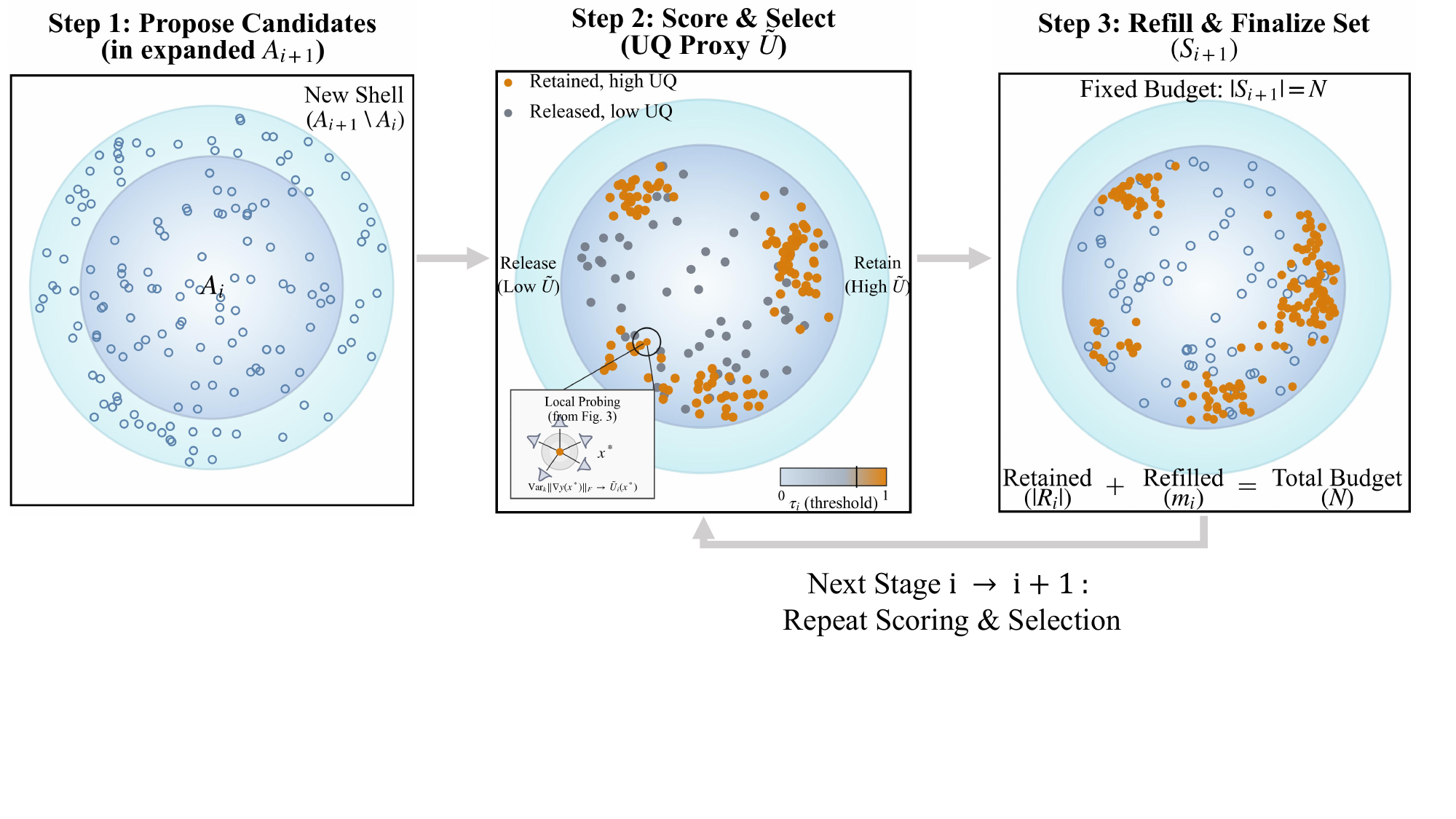}
  \vspace{-5em}
\caption{Adaptive collocation loop used in Bio-PINNs. In Step 1, candidate collocation points are proposed in the currently active region and, after gate expansion, in the newly opened shell using low-discrepancy designs. In Step 2, the UQ proxy $\widetilde U_i$ is evaluated and high-uncertainty points are retained while low-uncertainty points are released. In Step 3, the collocation set is refilled to preserve a fixed budget and the procedure is repeated, so that sampling progressively concentrates on transition layers and emerging tethers while maintaining coverage of the active domain.}
  \label{fig:uq_r3_loop}
\end{figure}

\subsubsection{Gate-weighted variational objective and staged training}
At stage $i$, Bio-PINN optimizes a gate-weighted empirical energy together with boundary penalties.
\begin{equation}\label{eq:gated-objective}
	\mathcal{J}_i(\theta)=\mathcal{E}^{(g)}_{\varepsilon,i}[y_\theta]+\mathcal{P}_{\mathrm{in}}[y_\theta],
\end{equation}
where $\mathcal{E}^{(g)}_{\varepsilon,i}$ is a gated estimator of \eqref{eq:total-energy} evaluated on $\mathcal S_i$. Training alternates between (i) several optimizer steps on $\mathcal{J}_i$ at fixed $\gamma_i$ and $\mathcal S_i$, (ii) advancing the gate level $\gamma_i$ based on objective progress, and (iii) updating $\mathcal S_i$ via UQ-R3. This staged loop couples where the model learns, through distance gating, with what it learns next, through uncertainty-driven sample reallocation, thereby providing a practical way to concentrate collocation effort on thin transition layers and tether microstructures as they emerge.

\subsection{Benchmark overview}\label{sec:experiments}
We next evaluate Bio-PINNs on benchmarks designed to probe two main signatures of cell-induced mechanical remodelling in this model, namely pericellular densification and separation-dependent tether formation. Unless stated otherwise, we visualize the learned deformation through the Jacobian $J=\det\nabla y_\theta$. Because the ratio of deformed to undeformed density equals $1/J$, smaller values of $J$ correspond to stronger densification, so $J$ distinguishes the undensified low-density matrix ($J\approx 1$) from the densified phase ($J\approx J_\star\approx 0.21$) \cite{grekas2022approx}. Throughout the Results, the central questions are whether training reaches the densified well in mechanically decisive regions and whether it does so without introducing spurious angular structure or spurious far-field deformation. For the two-cell benchmarks, the experimental panel of Fig.~\ref{fig:intro-exp-num} provides a recurring qualitative visual anchor: the features to track are the outer pericellular band and the mechanically organized intercellular corridor, which in the stronger-coupling regime sharpens into a densified tether.

The continuum model and the Bio-PINN training objective are described in Secs.~\ref{sec:methods:model} to \ref{sec:methods:objective}. Implementation details are given in Secs.~\ref{sec:methods:ritz} to \ref{sec:methods:objective}, with default hyperparameters listed in Supplementary Table~\ref{tab:ed-defaults}. We further verified that the main qualitative conclusions reported below are stable under representative sweeps of the resampling cadence, network capacity, and UQ-proxy settings (Supplementary Figs.~\ref{fig:ed-single-eps0-period} to \ref{fig:ed-three-eps0-uqrho}). Among these controls, the UQ probe variance $\rho_{uq}$ is especially interpretable. Smaller $\rho_{uq}$ yields more localized probing but can introduce noisier, more transient updates, whereas larger $\rho_{uq}$ increases spatial averaging and produces smoother but more diffuse updates while preserving the same interaction morphology.

% ============================================================
\subsection{Single-cell experiments}\label{sec:single-cell}

We begin with the simplest physical setting, namely a single contracting circular cell
$C=\{x:\lvert x-z\rvert=r_c\}\subset\Omega_c$ embedded in a fibrous matrix. In this case, the expected response is a densified pericellular band adjacent to the cell boundary, while the surrounding matrix remains close to the undensified state \cite{grekas2022approx}. The benchmark therefore isolates the basic difficulty of the problem. The method must resolve a thin pericellular transition layer without generating sampling-induced anisotropy in the surrounding ECM. Representative sensitivity studies for this setting are reported in Supplementary Figs.~\ref{fig:ed-single-eps0-period} to \ref{fig:ed-single-eps0-uqrho}, including sweeps over the R3 cadence $P$, network capacity, and UQ-proxy hyperparameters.

\subsubsection{Non-regularized limit with $\varepsilon_0=0$}\label{subsubsec:single-eps0}
We start from the non-regularized limit with $\varepsilon_0=0$, where sharp interfaces are not smoothed by an explicit second-gradient penalty. This is the most demanding single-cell regime because the solution must approach the densified well next to the cell while keeping the surrounding matrix near the undensified state. Figure~\ref{fig:detF} compares the Jacobian determinant $J=\det F$ across vanilla PINN~\cite{raissi2019pinn}, RAR-D~\cite{wu2023comprehensive}, residual-driven R3~\cite{daw2023r3}, and Bio-PINN. Here, the presence of a pericellular low-$J$ band alone is not sufficient evidence of success. The decisive diagnostics are the minimum attained value of $J$ and the azimuthal coherence of that band. The qualitative conclusions reported below remain stable under representative sweeps of the R3 cadence and model capacity (Supplementary Figs.~\ref{fig:ed-single-eps0-period} to \ref{fig:ed-single-eps0-width}).

\begin{figure}[t]
	\centering
	\begin{minipage}[t]{0.24\linewidth}
		\centering
		\includegraphics[width=\linewidth]{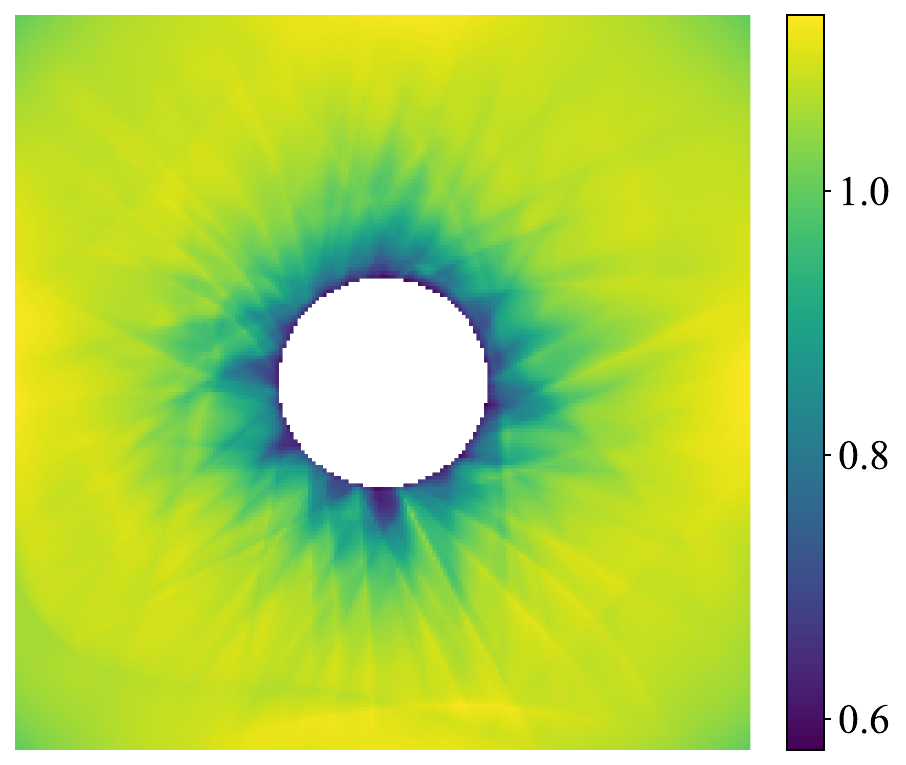}\\[-0.6em]
		{\scriptsize \textbf{(a)} PINN}
	\end{minipage}\hfill
	\begin{minipage}[t]{0.24\linewidth}
		\centering
		\includegraphics[width=\linewidth]{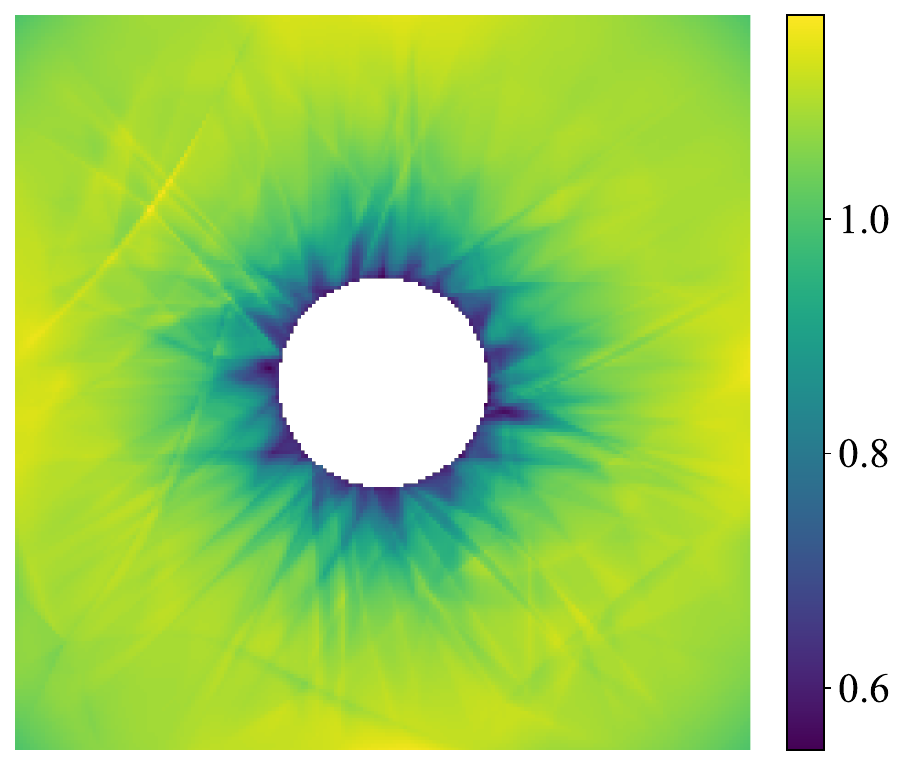}\\[-0.6em]
		{\scriptsize \textbf{(b)} RAR-D}
	\end{minipage}\hfill
	\begin{minipage}[t]{0.24\linewidth}
		\centering
		\includegraphics[width=\linewidth]{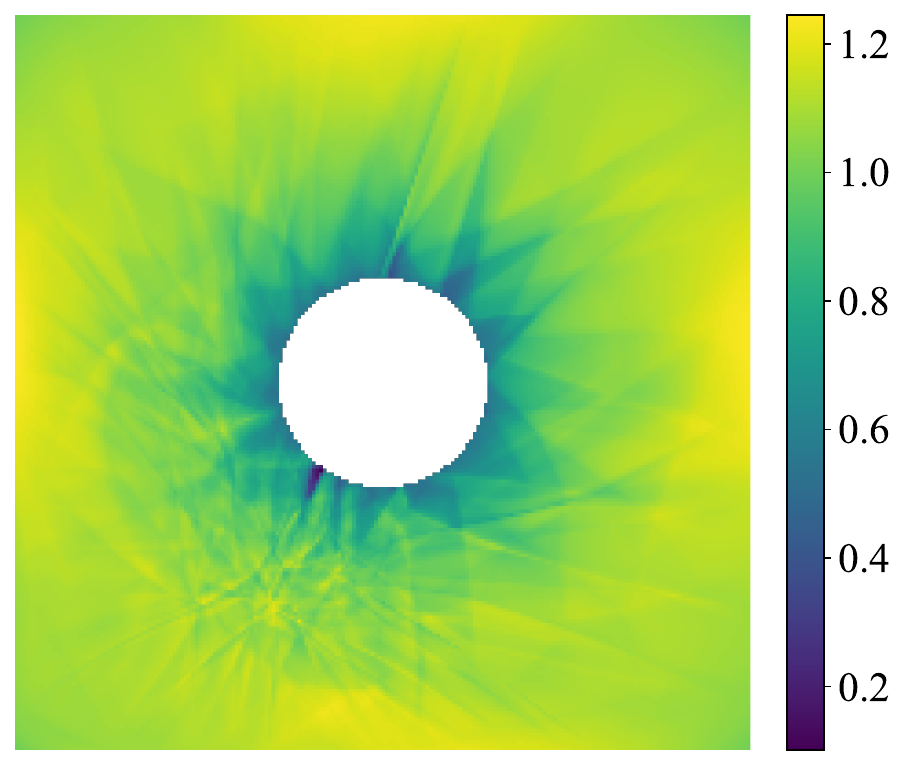}\\[-0.6em]
		{\scriptsize \textbf{(c)} R3}
	\end{minipage}\hfill
	\begin{minipage}[t]{0.24\linewidth}
		\centering
		\includegraphics[width=\linewidth]{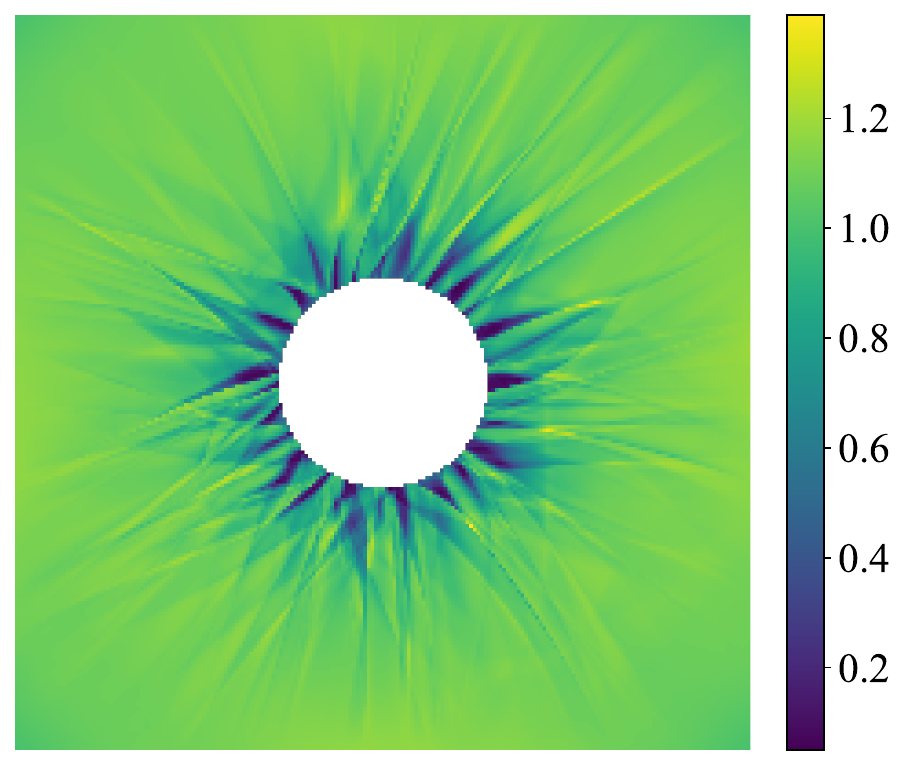}\\[-0.6em]
		{\scriptsize \textbf{(d)} Bio-PINN}
	\end{minipage}
\caption{Single-cell remodelling without second-gradient regularization ($\varepsilon_0 = 0$). The Jacobian determinant $J=\det F$ (with $F=\nabla y_\theta$) is shown for (a) PINN, (b) RAR-D, (c) R3, and (d) Bio-PINN. The benchmark asks whether the method can recover a densified pericellular band while leaving the surrounding matrix close to the undensified state. Among the methods shown here, Bio-PINN yields the lowest near-cell $J$ values together with a more azimuthally uniform pericellular band.}
	\label{fig:detF}
\end{figure}

In Fig.~\ref{fig:detF}, vanilla PINN in panel (a) captures the presence and approximate location of a pericellular low-$J$ band, but the transition layer remains diffuse and exhibits spurious radial streaks that extend into the surrounding matrix. These features do not form a coherent pericellular densification pattern. The minimum value indicated by the colour scale remains substantially above the densified-well level, staying around $J\gtrsim 0.6$. Optimization therefore does not reach the neighbourhood of the densified well near the cell boundary.

RAR-D in panel (b) sharpens the near-boundary structure by enriching regions of large residual. However, a residual-driven strategy alone does not enforce a near-to-far training progression. As a result, the pericellular band remains azimuthally non-uniform and weak spurious structure persists in the surrounding matrix. The minimum value of $J$ again stays elevated at about $J\gtrsim 0.6$, indicating that the solution is still not driven into the densified-well neighbourhood.

Residual-driven R3 in panel (c) concentrates samples more aggressively in difficult regions. In this energy-only setting, it can lower $J$ and sharpen parts of the transition layer. At the same time, repeated resampling onto thin high-error sets can amplify localized modes. Low-$J$ values then appear as discontinuous, spoke-like radial streaks radiating from the pericellular band rather than as a coherent and azimuthally uniform densification front. This behaviour reflects an unstable trade-off between fitting a sharp transition layer and avoiding sampling-induced anisotropy.

By contrast, Bio-PINN in panel (d) combines near-to-far gating with UQ-R3. The gate delays far-field participation and encourages a staged outward propagation from the cell, while the uncertainty proxy directs resampling towards the evolving transition layer rather than repeatedly over-refining narrow and potentially noisy sets. The resulting solution forms a contiguous densified pericellular band that reaches the densified well, with $J$ decreasing towards $J_\star$ as indicated by the colour scale. This band is more azimuthally uniform and exhibits markedly fewer spoke-like radial streaks than panels (a) to (c). Mild oscillations can still appear in localized high-curvature regions, which is expected in the absence of explicit second-gradient regularization.

\subsubsection{Sensitivity to the R3 cadence and the UQ proxy}
In the single-cell, non-regularized regime, the recovered solution is sensitive to the R3 resampling cadence. With very frequent resampling ($P=100$), the training dynamics do not localize the phase transition stably, and strong sampling-induced anisotropy contaminates the far field (Supplementary Fig.~\ref{fig:ed-single-eps0-period}). A moderate cadence ($P=400$) partially recovers the pericellular transition band but still under-resolves the expected microstructure. By contrast, longer periods with $P$ between 1600 and 3200 yield the anticipated fine-scale alternation between densified and non-densified states concentrated in the interfacial layer, while the far field remains approximately near-incompressible (Supplementary Fig.~\ref{fig:ed-single-eps0-period}). Once the network has sufficient capacity to represent a thin transition layer, further increases in depth and width can instead introduce an optimization bottleneck. The strongly nonconvex energy landscape, together with the enlarged parameter space, makes training more susceptible to poor local minima and slower convergence (Supplementary Figs.~\ref{fig:ed-single-eps0-depth} to \ref{fig:ed-single-eps0-width}).

For the UQ proxy, varying the Monte Carlo probe count $m_{uq}$ yields nearly indistinguishable fields, suggesting that the proxy remains stable even with modest probing effort (Supplementary Fig.~\ref{fig:ed-single-eps0-uqm}). The probe variance $\rho_{uq}$ acts as an effective length-scale control in local probing. Small-to-moderate $\rho_{uq}$ preserves fine angular structure and sharper focusing near the evolving transition layer, whereas very large $\rho_{uq}$ increases spatial averaging and progressively suppresses the finest patterns, producing smoother but more diffuse updates (Supplementary Fig.~\ref{fig:ed-single-eps0-uqrho}).

\subsubsection{Weak second-gradient regularization ($\varepsilon_0=0.01\,r_c$)}
We next add weak second-gradient regularization ($\varepsilon_0=0.01\,r_c$) to test whether the near-to-far curriculum remains useful once the interface is given a small intrinsic width. This regime is less singular than $\varepsilon_0=0$, but it still requires the method to localize the pericellular transition layer while avoiding diffuse spurious deformation in the surrounding matrix. Figure~\ref{fig:detF-eps001rc} shows that the $H^2$ term damps the finest oscillations across methods, while Bio-PINN still drives the near-cell region closer to the densified well and yields the most azimuthally coherent pericellular band among the methods compared here. The same qualitative trends persist under mild regularization. For both $\varepsilon_0=0$ and $\varepsilon_0=0.01r_c$, overly frequent R3 resampling degrades localization of the transition layer and amplifies sampling-induced artefacts, whereas longer periods yield a more stable interfacial layer concentrated near the cell boundary. Likewise, once the network can represent a thin transition layer, additional capacity provides limited qualitative benefit, and overly deep models can encounter an optimization bottleneck because of the strongly nonconvex energy landscape and the enlarged parameter space (Supplementary Figs.~\ref{fig:ed-single-eps0-period} to \ref{fig:ed-single-eps001rc-width}).

\begin{figure}[t]
	\centering
	\begin{minipage}[t]{0.24\linewidth}
		\centering
		\includegraphics[width=\linewidth]{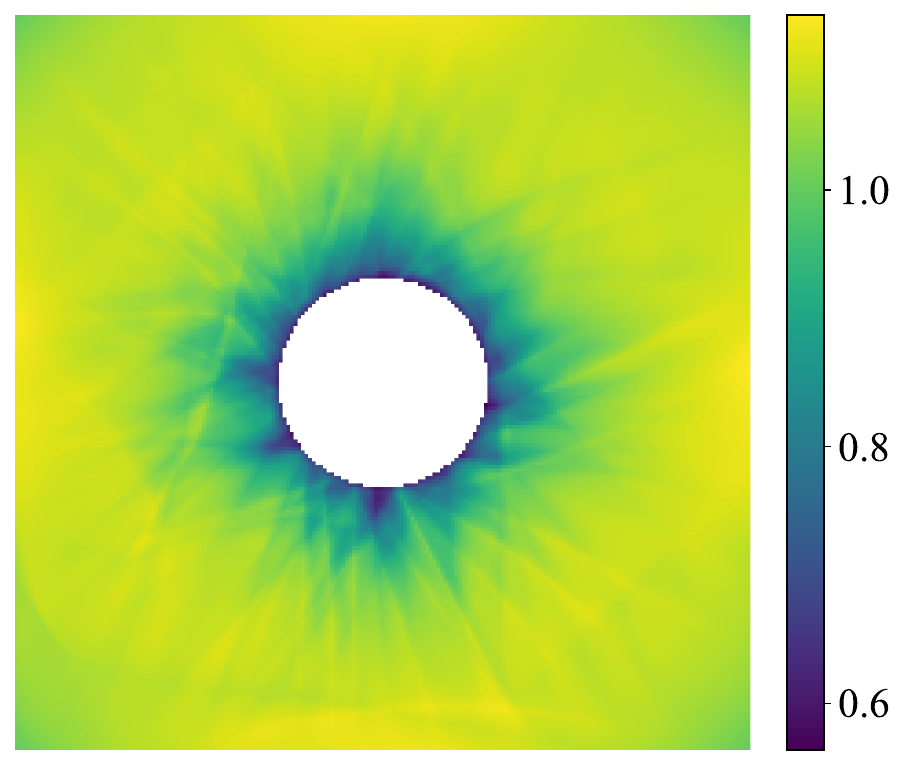}\\[-0.6em]
		{\scriptsize \textbf{(a)} PINN}
	\end{minipage}\hfill
	\begin{minipage}[t]{0.24\linewidth}
		\centering
		\includegraphics[width=\linewidth]{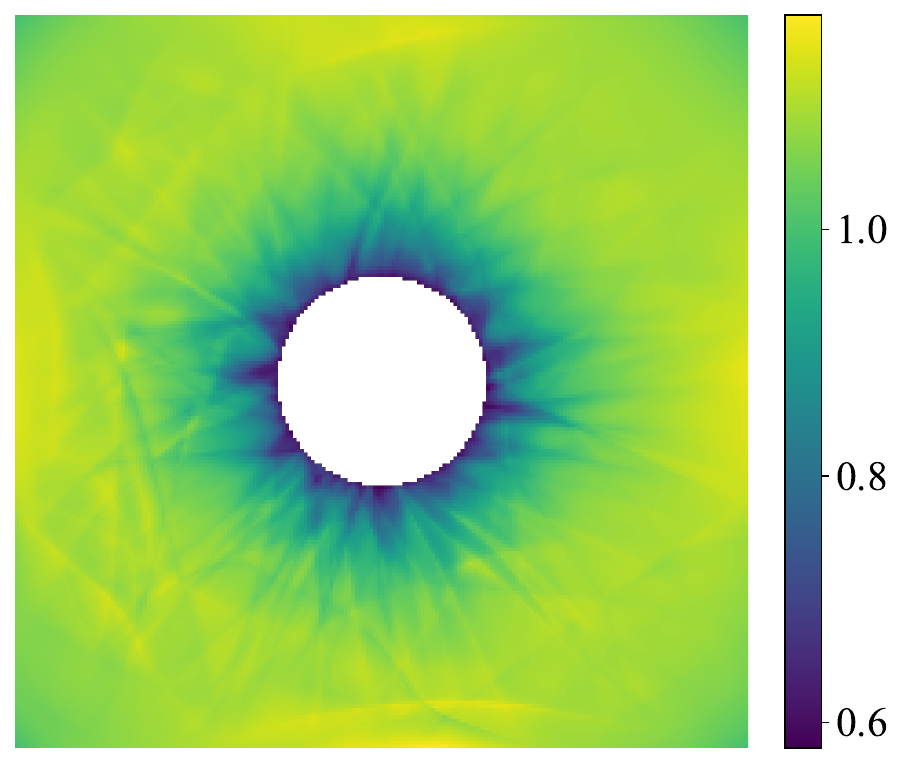}\\[-0.6em]
		{\scriptsize \textbf{(b)} RAR-D}
	\end{minipage}\hfill
	\begin{minipage}[t]{0.24\linewidth}
		\centering
		\includegraphics[width=\linewidth]{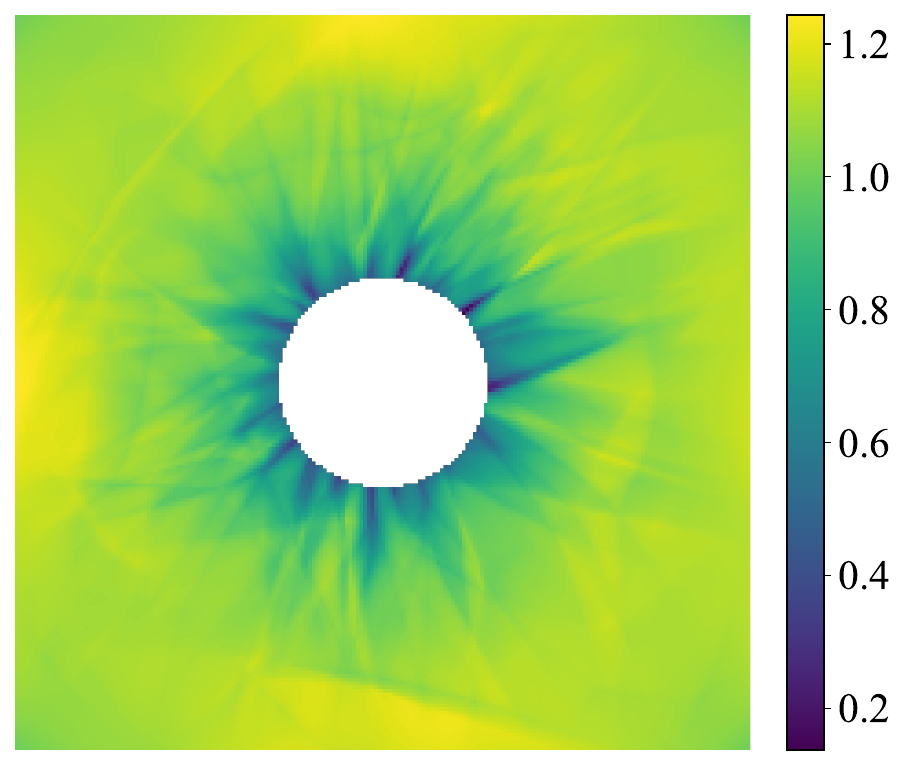}\\[-0.6em]
		{\scriptsize \textbf{(c)} R3}
	\end{minipage}\hfill
	\begin{minipage}[t]{0.24\linewidth}
		\centering
		\includegraphics[width=\linewidth]{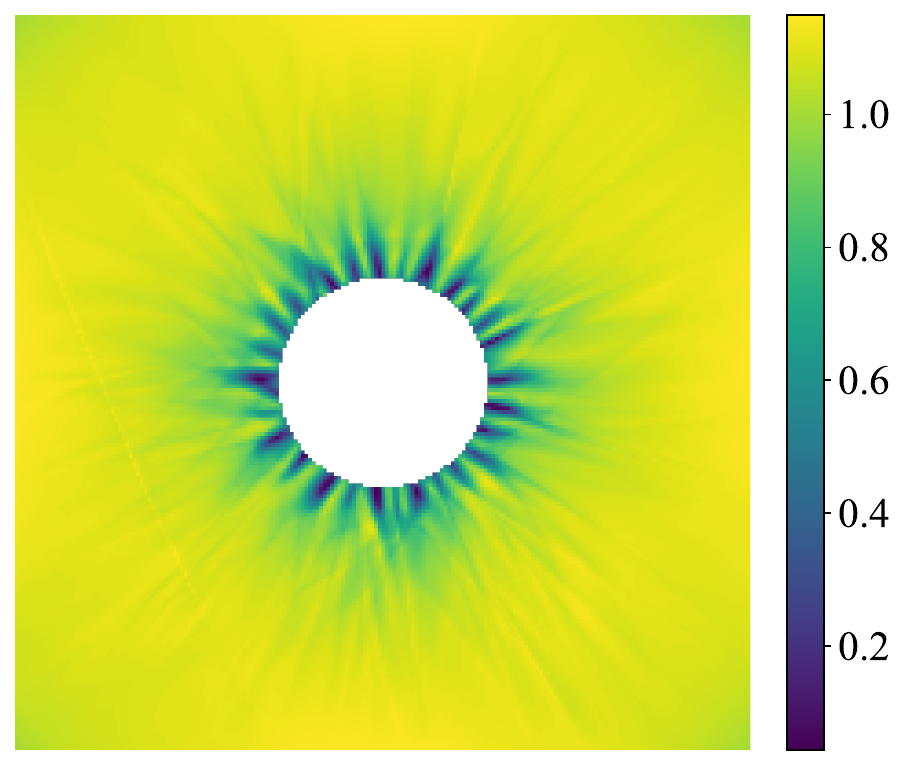}\\[-0.6em]
		{\scriptsize \textbf{(d)} Bio-PINN}
	\end{minipage}
\caption{Single-cell remodelling with weak second-gradient regularization ($\varepsilon_0=0.01\,r_c$). The Jacobian determinant $J=\det F$ (with $F=\nabla y_\theta$) is shown for (a) PINN, (b) RAR-D, (c) residual-driven R3, and (d) Bio-PINN. The weak $H^2$ penalty smooths the finest oscillations across methods, but Bio-PINN still yields a more coherent pericellular band and lower near-cell $J$ values than the baselines shown here.}
	\label{fig:detF-eps001rc}
\end{figure}

% ============================================================
\subsection{Two-cell experiments}\label{sec:two-cell}

We next turn to two interacting cells, where the physics is governed not only by pericellular densification but also by mechanical communication through the intervening matrix. We consider two circular cells
$C_1=\{x:\lvert x-z_1\rvert=r_c\}$ and $C_2=\{x:\lvert x-z_2\rvert=r_c\}$ embedded in
$\Omega_c$ with the same bulk model and boundary conditions (Sec.~\ref{sec:methods:model}). Unless stated otherwise, we use a symmetric configuration with $z_1=(-d/2,0)$, $z_2=(d/2,0)$ and contraction ratio $u_0=0.5$. Throughout this subsection, we focus on the non-regularized limit $\varepsilon_0=0$ so that separation-dependent tether formation is not pre-smoothed by explicit interfacial regularization. This is also the setting most directly connected to the mechanically interacting acini image in Fig.~\ref{fig:intro-exp-num}, which provides an experimental visual reference for the corridor- and band-like motifs discussed below.

\subsubsection{Interaction and tether formation without second-gradient regularization with $\varepsilon_0=0$}
\label{subsubsec:double-eps0}
We examine two representative cell separations, $d\in\{5r_c,\,2.5r_c\}$, and compare the Jacobian determinant $J=\det F$ across methods in Fig.~\ref{fig:detF-2cells-eps0}. The larger separation corresponds to weak interaction, whereas the smaller separation promotes a localized densification band in the intercellular gap. As in the single-cell case, the relevant question is not merely whether a low-$J$ structure appears, but whether the method reaches the densified-well neighbourhood in the pericellular and intercellular regions without introducing fragmented or spoke-like artefacts. Representative sensitivity studies for this two-cell non-regularized regime are reported in Supplementary Figs.~\ref{fig:ed-two-eps0-period} to \ref{fig:ed-two-eps0-uqrho}, including sweeps over the R3 cadence $P$, network capacity, and the UQ probe variance $\rho_{uq}$.

\begin{figure}[t]
	\centering
	{\small \textbf{Top row, $d=5r_c$, weak coupling}}
	\vspace{0.2em}

	\begin{minipage}[t]{0.24\linewidth}
		\centering
		\includegraphics[width=\linewidth]{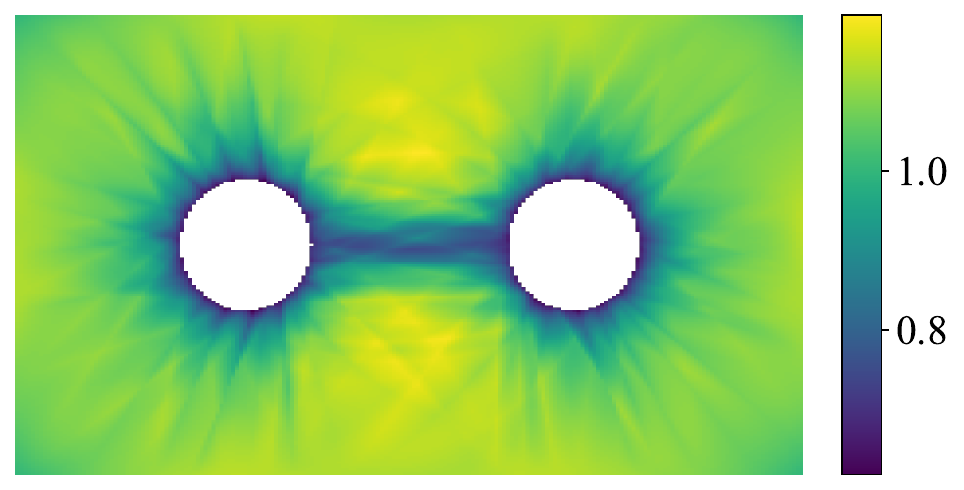}\\[-0.6em]
		{\scriptsize \textbf{(a)} PINN}
	\end{minipage}\hfill
	\begin{minipage}[t]{0.24\linewidth}
		\centering
		\includegraphics[width=\linewidth]{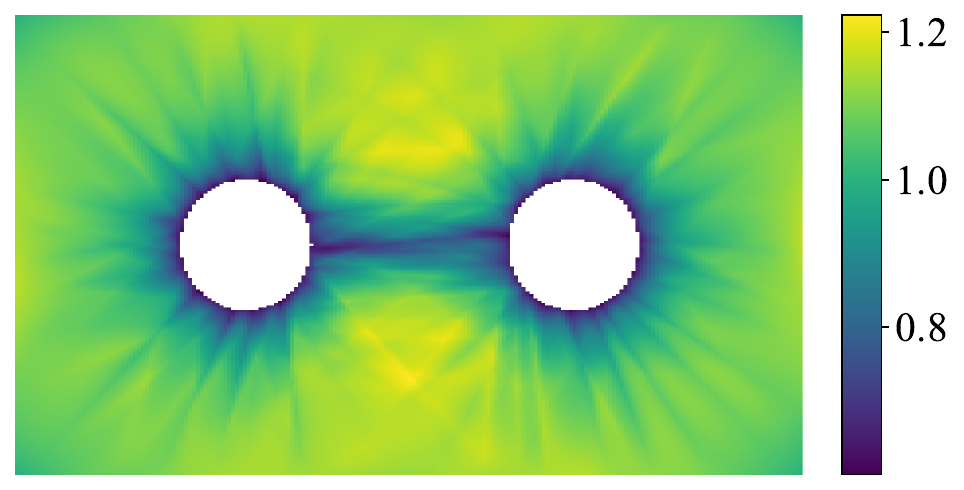}\\[-0.6em]
		{\scriptsize \textbf{(b)} RAR-D}
	\end{minipage}\hfill
	\begin{minipage}[t]{0.24\linewidth}
		\centering
		\includegraphics[width=\linewidth]{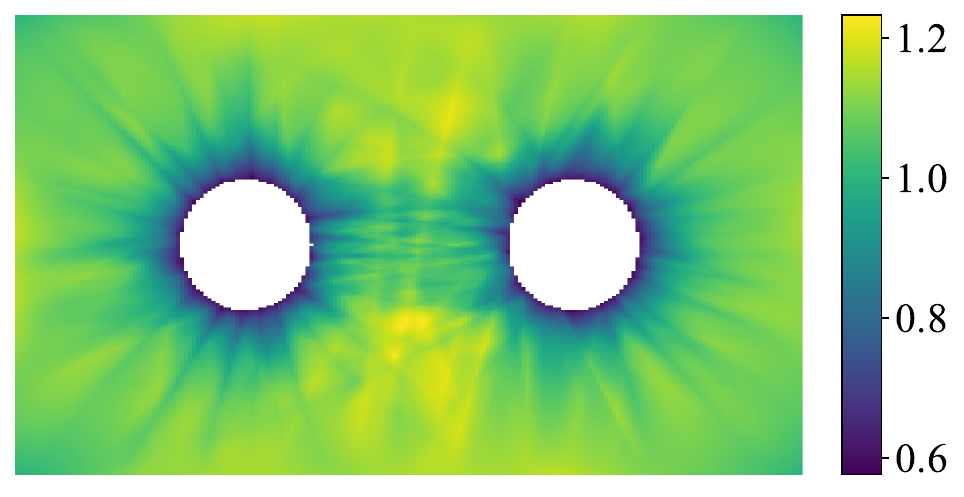}\\[-0.6em]
		{\scriptsize \textbf{(c)} R3}
	\end{minipage}\hfill
	\begin{minipage}[t]{0.24\linewidth}
		\centering
		\includegraphics[width=\linewidth]{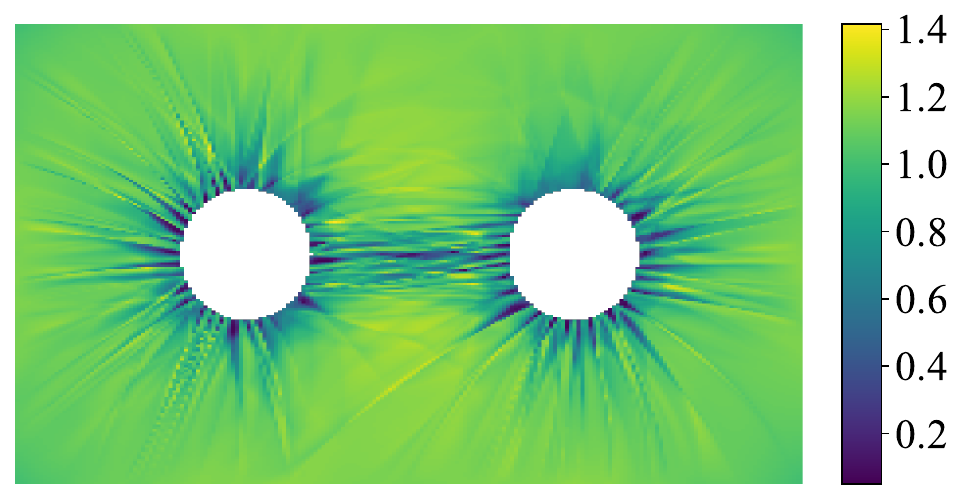}\\[-0.6em]
		{\scriptsize \textbf{(d)} Bio-PINN}
	\end{minipage}
	
	\vspace{0.6em}
	
	{\small \textbf{Bottom row, $d=2.5r_c$, strong coupling}}
	\vspace{0.2em}
	
	\begin{minipage}[t]{0.24\linewidth}
		\centering
		\includegraphics[width=\linewidth]{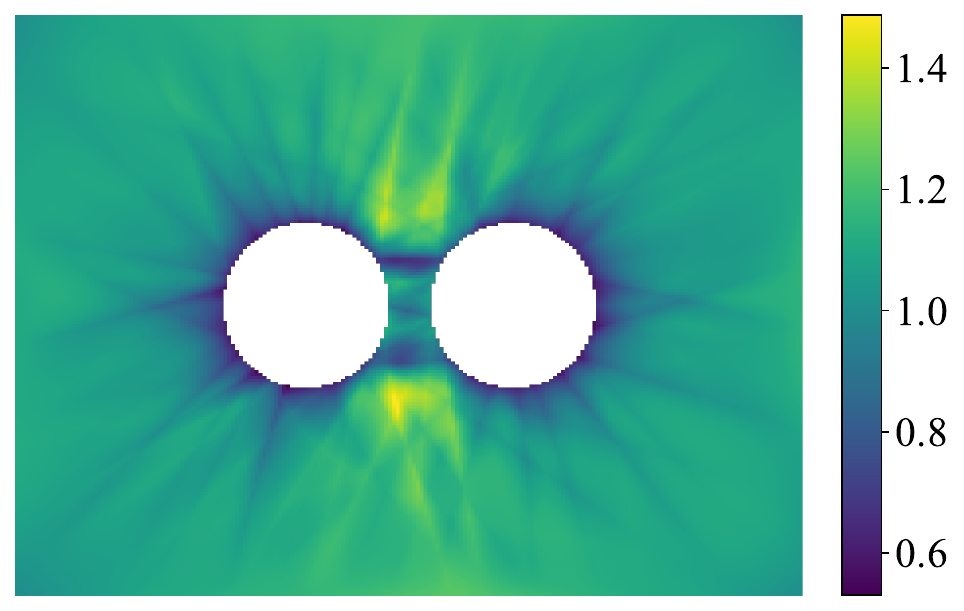}\\[-0.6em]
		{\scriptsize \textbf{(e)} PINN}
	\end{minipage}\hfill
	\begin{minipage}[t]{0.24\linewidth}
		\centering
		\includegraphics[width=\linewidth]{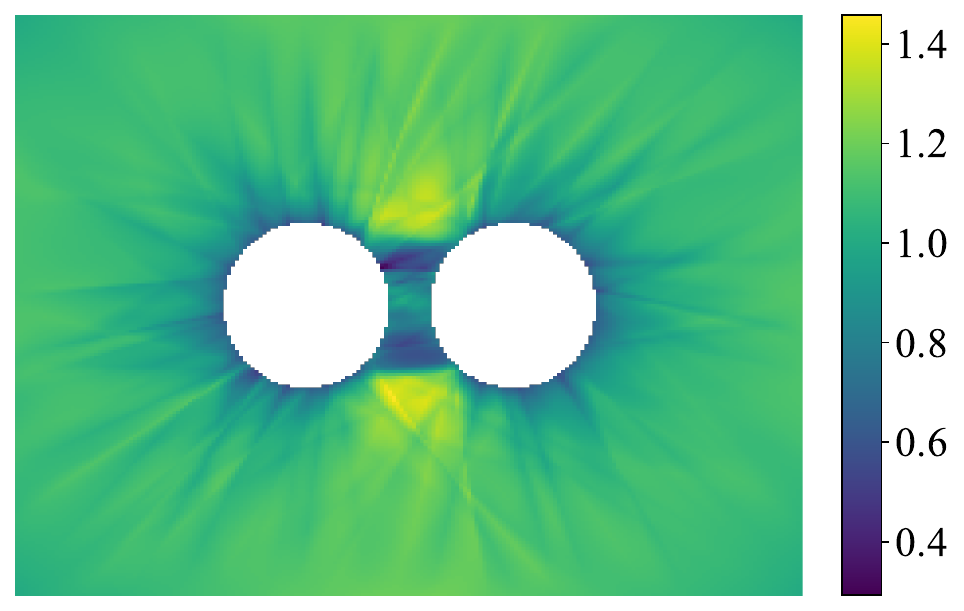}\\[-0.6em]
		{\scriptsize \textbf{(f)} RAR-D}
	\end{minipage}\hfill
	\begin{minipage}[t]{0.24\linewidth}
		\centering
		\includegraphics[width=\linewidth]{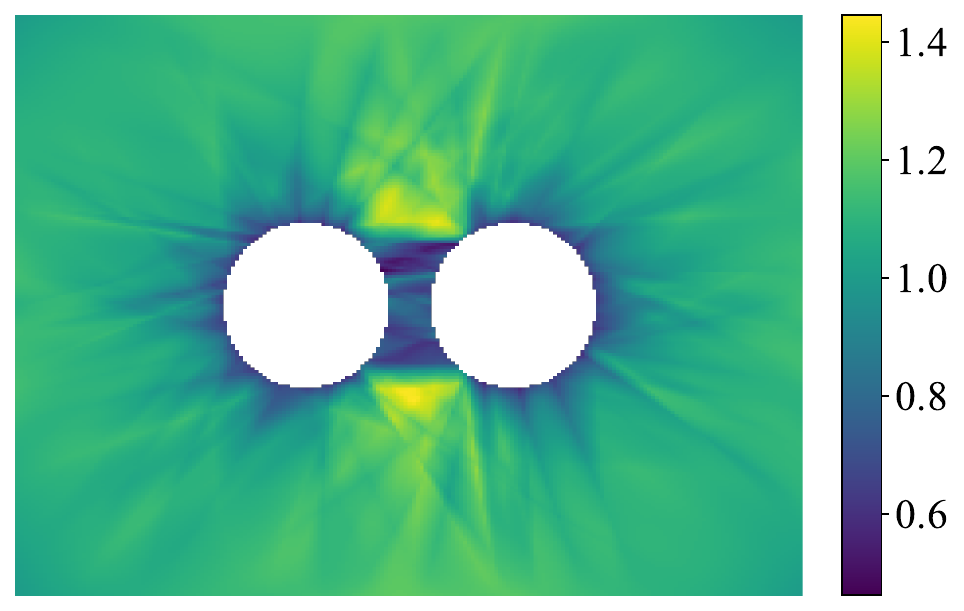}\\[-0.6em]
		{\scriptsize \textbf{(g)} R3}
	\end{minipage}\hfill
	\begin{minipage}[t]{0.24\linewidth}
		\centering
		\includegraphics[width=\linewidth]{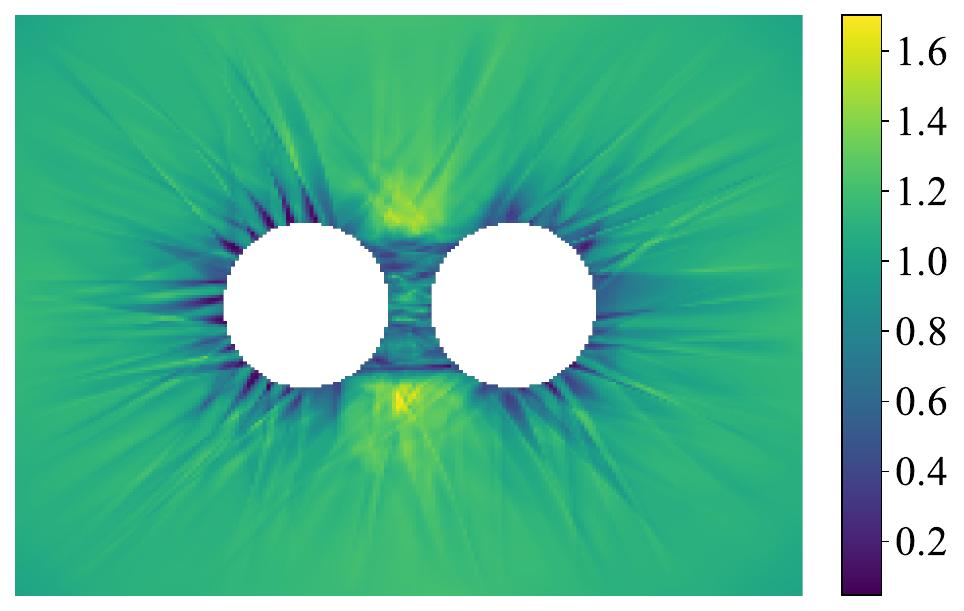}\\[-0.6em]
		{\scriptsize \textbf{(h)} Bio-PINN}
	\end{minipage}
	
\caption{Two-cell remodelling without second-gradient regularization ($\varepsilon_0=0$). The Jacobian determinant $J=\det F$ (with $F=\nabla y_\theta$) is shown for a long-distance regime (top row; $d=5\,r_c$, weak coupling) and a short-distance regime (bottom row; $d=2.5\,r_c$, strong coupling). The long-distance case probes weak coupling between cells, whereas the short-distance case probes tether formation in the intercellular gap. Panels (a,e) correspond to PINN, (b,f) to RAR-D, (c,g) to residual-driven R3, and (d,h) to Bio-PINN. Among the methods shown here, Bio-PINN produces a more contiguous densified band next to each cell and in the interaction region, with less spurious structure in the surrounding matrix. In particular, panel (d) recovers the same pair of motifs emphasized in Fig.~\ref{fig:intro-exp-num}: an intercellular corridor and an outer pericellular band, whereas panel (h) sharpens the intercellular low-$J$ region into a tether.}
	\label{fig:detF-2cells-eps0}

\end{figure}

\subsubsection{Long-distance regime with $d=5r_c$}
When $d = 5r_c$, each cell induces a pericellular densification band similar to the single-cell pattern, while the intercellular ECM remains predominantly in the undensified state with only minor localized densification. The resulting tethers are therefore fewer and weaker. In this setting, morphology alone is not decisive. Whether the densified phase is reached is reflected by the minimum attained value of $J$ and by the near-boundary levels indicated by the colour scale. In Fig.~\ref{fig:detF-2cells-eps0}, panels (a) to (c) show that the baselines reproduce two pericellular low-$J$ bands, yet they plateau at substantially higher minima, with the colour scale bottoming out around $J\gtrsim 0.6$ and, in some cases, closer to $0.7$. Optimization therefore fails to reach the neighbourhood of the densified well ($J\approx J_\star$) near the cell boundaries, even though a band is visible. By contrast, Bio-PINN in panel (d) drives $J$ towards the densified well, with values approaching $J\approx J_\star$ adjacent to each cell. The resulting bands are contiguous and more azimuthally uniform, and spurious structure in the surrounding matrix is reduced. Qualitatively, this weak-coupling morphology matches the two motifs emphasized in the experimental panel of Fig.~\ref{fig:intro-exp-num}: a mechanically organized corridor in the intercellular region and a localized pericellular band on the outer flank. We regard this as qualitative agreement rather than one-to-one validation, since the acinus experiment is not a parameter-matched calibration case, but it supports the physical plausibility of the weak-coupling Bio-PINN pattern. This behaviour is consistent with delaying far-field participation and concentrating sampling near the advancing pericellular transition layers, and it persists under representative variations of $P$ and network capacity (Supplementary Figs.~\ref{fig:ed-two-eps0-period} to \ref{fig:ed-two-eps0-width}).

\subsubsection{Short-distance regime with $d=2.5r_c$ and tether formation}
When $d=2.5r_c$, the pericellular remodelling zones overlap and an intercellular tether forms in the gap. This tether corresponds to the smallest values of $J$ in panels (e) to (h) of Fig.~\ref{fig:detF-2cells-eps0} and indicates a localized transition towards the densified-well neighbourhood ($J\approx J_\star$) within the intercellular region.

All methods indicate the qualitative presence of a tether, but the degree of densification differs markedly. PINN and RAR-D remain at elevated minima, with the colour scale bottoming out around $J\gtrsim 0.4$, which indicates an incomplete transition from the undensified state towards the densified state in the gap and in the pericellular regions near the cell boundaries. R3 can further reduce $J$ by aggressively concentrating samples, yet the smallest values often appear in fragmented bands accompanied by spurious spoke-shaped streaks around the intercellular interaction region. This points to sensitivity when resampling is repeatedly concentrated on thin high-error sets. Bio-PINN yields a more localized and contiguous densified band in the gap and reaches values approaching the densified-well level $J\approx J_\star$, while exhibiting fewer spurious spokes and less disturbance of the surrounding matrix. Relative to the corridor-like organization highlighted in Fig.~\ref{fig:intro-exp-num}, panel (h) can be read as a stronger-coupling regime in which the same mechanically communicating gap region sharpens into a denser tether-like bridge. Again, this is a qualitative interpretation rather than a direct geometric fit, but it helps connect the experimentally observed intercellular organization to the separation-dependent tether selected by the model. This behaviour is consistent with the design of the method. The near-to-far gate advances the active region outward without premature leakage, and the UQ-R3 update concentrates collocation effort on the two interacting transition layers and the intercellular gap rather than chasing thin and transient high-error sets.

\subsubsection{Role of the UQ probe variance}
The role of $\rho_{uq}$ is consistent with the single-cell case. Smaller values emphasize sharper, more localized hotspots, whereas larger values produce smoother but more diffuse updates. The qualitative ordering of methods remains stable under representative sweeps of $P$ and network capacity (Supplementary Figs.~\ref{fig:ed-two-eps0-period} to \ref{fig:ed-two-eps0-uqrho}).

\subsection{Three-cell experiments}\label{sec:exp:3cells}

We finally consider a three-cell configuration as a geometry stress test for multicellular remodelling in the non-regularized regime with $\varepsilon_0=0$. The setup follows the two-cell case described in Sec.~\ref{sec:methods:model}. Three circular cells are embedded in $\Omega_c$, share identical boundary conditions, and are trained with the same protocol. In this setting, the model must resolve several sharp pericellular transition layers together with a possible multicell interaction core, making the phase pattern more sensitive to sample allocation and training order than in the two-cell case. As before, we consider two representative separation regimes corresponding to weak coupling at long distances and strong coupling at short distances. We further assess the robustness of this stress test to the resampling cadence and UQ-proxy settings (Supplementary Figs.~\ref{fig:ed-three-eps0-period} to \ref{fig:ed-three-eps0-uqrho}).

\subsubsection{Representative fields and comparison using $\det F$}
Figure~\ref{fig:eps0-3cells-cmpr} compares three-cell solutions in terms of the Jacobian determinant $J=\det F$ for $\varepsilon_0=0$ against three baselines in both separation regimes. In this geometry, visible low-$J$ structure alone is not enough. A successful solution must reach the densified-well neighbourhood next to each cell while keeping the surrounding matrix close to the undensified state. The colour scale is therefore informative because it distinguishes methods that merely sketch the interaction pattern from those that recover the densified phase in the mechanically decisive multicell core. On this criterion, Bio-PINN comes closest to the low-$J$ band adjacent to each cell while keeping the far field close to $J\approx 1$.

In the long-distance regime, all methods indicate three pericellular densification bands around the cells. However, the baselines plateau at substantially larger minima, with their colour scales bottoming out around $J\gtrsim 0.5$ and, in some cases, closer to $0.6$. This suggests that near-cell solutions do not reach the neighbourhood of the densified well ($J\approx J_\star$) even when a pericellular low-$J$ band is visible. By contrast, Bio-PINN reaches the densified-well neighbourhood around each cell, producing a dark band with $J$ near $J_\star$. This indicates that the near-boundary transition layers are not only localized but also attained more closely than in the baseline solutions.

In the short-distance regime, overlap of the three pericellular remodelling zones produces a compact densified multicell core, which can be interpreted as a tether network junction in the intercellular ECM. The baselines again do not reach the densified-well neighbourhood near the cell boundaries, and typical minima within the narrow interaction region remain around $0.4$ to $0.5$. Bio-PINN attains smaller values, approaching $J\approx J_\star$, and maintains a clearer separation between the localized interaction region and the surrounding matrix. Overall, the three-cell case highlights the central challenge at $\varepsilon_0=0$. Multiple sharp layers and an emergent multicell interaction core must be optimized simultaneously. Among the methods compared here, Bio-PINN is distinguished not only by reduced artefacts but also by its closer approach to the densified-well neighbourhood indicated by the colour scale.

\subsubsection{Robustness and role of the UQ probe variance}
The qualitative conclusions above persist under representative variations of the resampling cadence (Supplementary Fig.~\ref{fig:ed-three-eps0-period}). Moreover, the UQ probe variance $\rho_{uq}$ provides an interpretable control over the effective spatial scale of the probing proxy. Smaller $\rho_{uq}$ yields sharper, more localized emphasis on the multicell interaction core, but can introduce noisier and more transient updates, whereas larger $\rho_{uq}$ increases spatial averaging and produces smoother but more diffuse updates while preserving the same interaction morphology and low-$J$ attainment (Supplementary Fig.~\ref{fig:ed-three-eps0-uqrho}).
\begin{figure}[t]
  \centering
  % -------------------- long distance (weak coupling) --------------------
  \begin{subfigure}{0.24\linewidth}
    \centering
    \includegraphics[width=\linewidth]{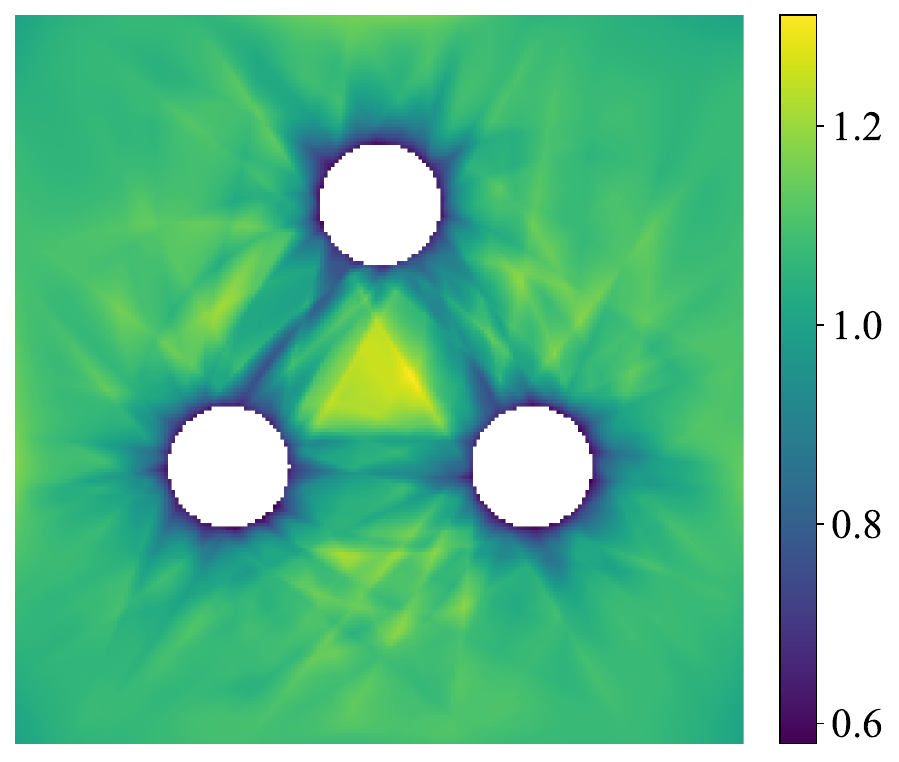}
    \caption{PINN}
  \end{subfigure}\hfill
  \begin{subfigure}{0.24\linewidth}
    \centering
    \includegraphics[width=\linewidth]{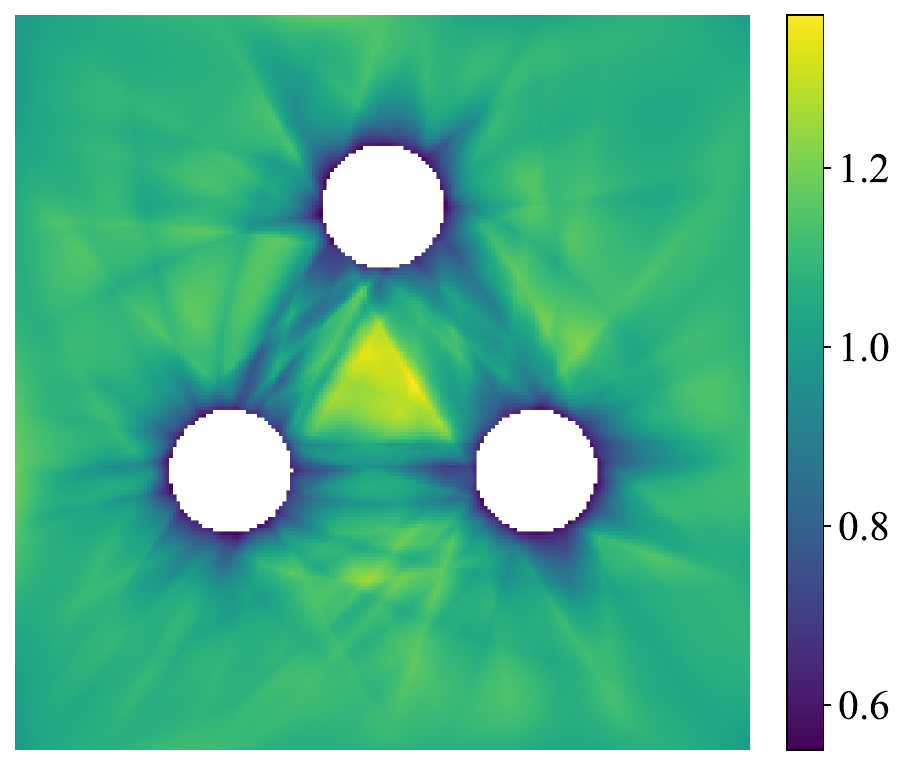}
    \caption{RAR-D}
  \end{subfigure}\hfill
  \begin{subfigure}{0.24\linewidth}
    \centering
    \includegraphics[width=\linewidth]{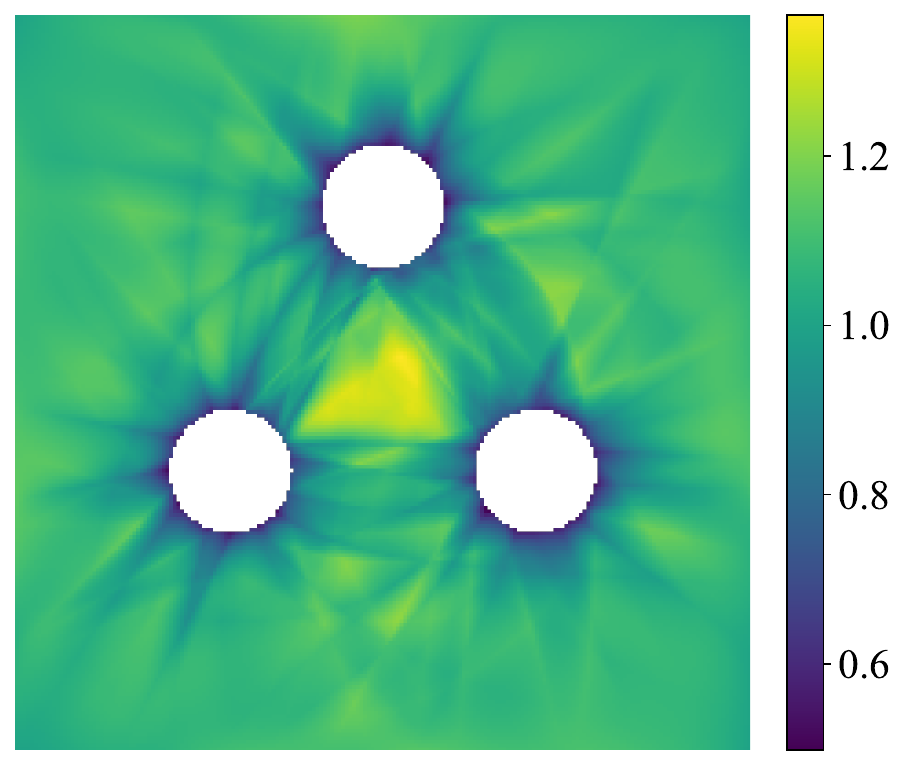}
    \caption{R3}
  \end{subfigure}\hfill
  \begin{subfigure}{0.24\linewidth}
    \centering
    \includegraphics[width=\linewidth]{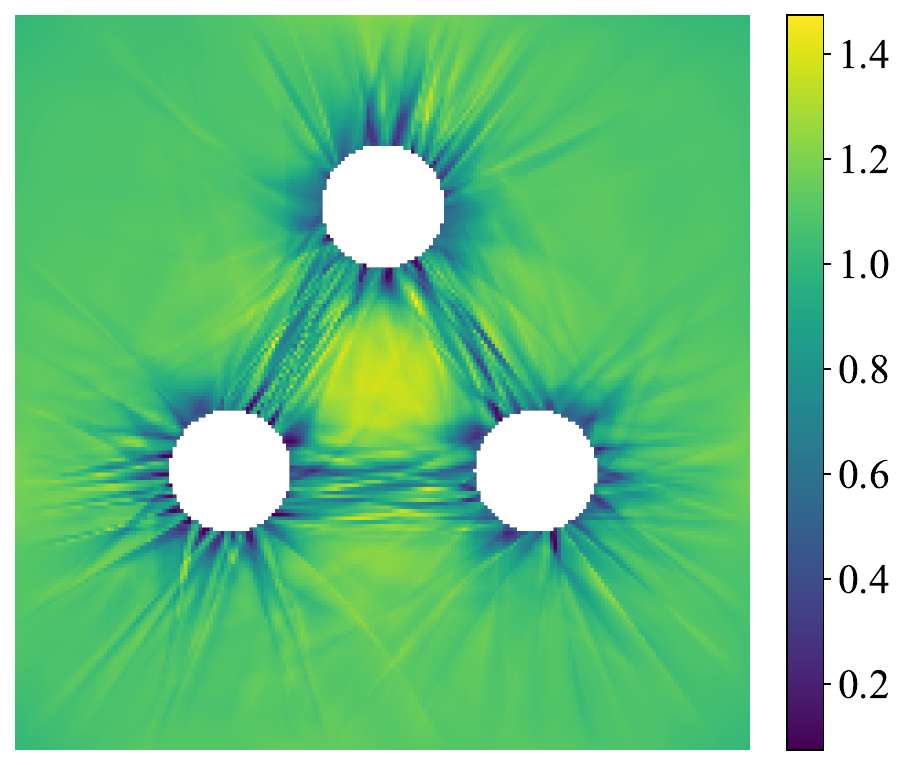}
    \caption{Bio-PINN}
  \end{subfigure}

  \vspace{0.3em}

  % -------------------- short distance (strong coupling) --------------------
  \begin{subfigure}{0.24\linewidth}
    \centering
    \includegraphics[width=\linewidth]{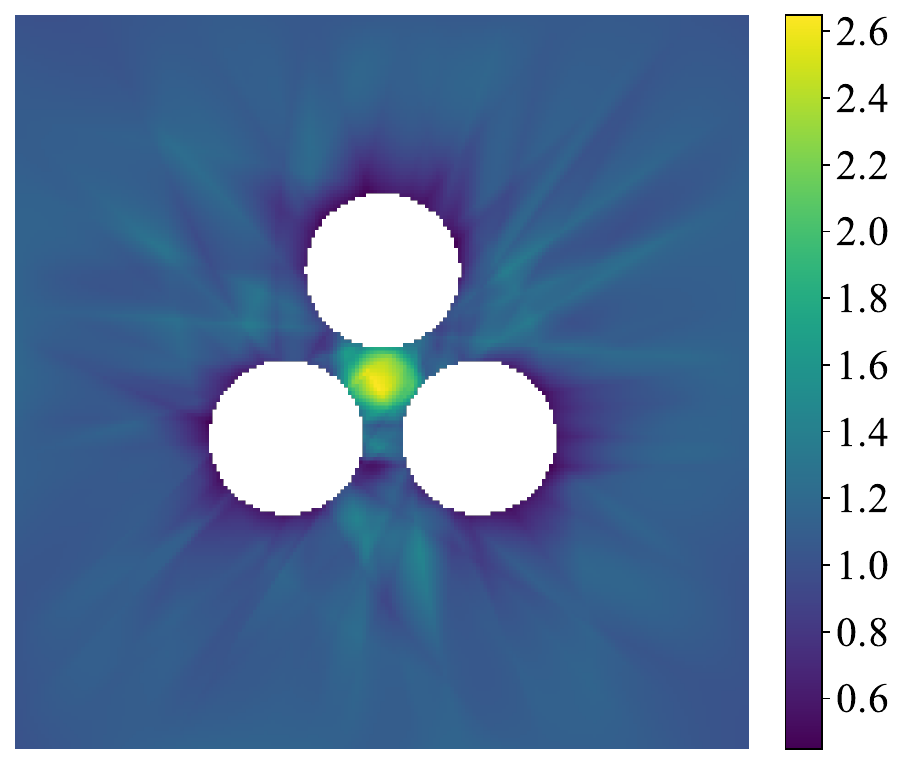}
    \caption{PINN}
  \end{subfigure}\hfill
  \begin{subfigure}{0.24\linewidth}
    \centering
    \includegraphics[width=\linewidth]{Figs/baseline_three_cell_order1_long_star_rad.pdf}
    \caption{RAR-D}
  \end{subfigure}\hfill
  \begin{subfigure}{0.24\linewidth}
    \centering
    \includegraphics[width=\linewidth]{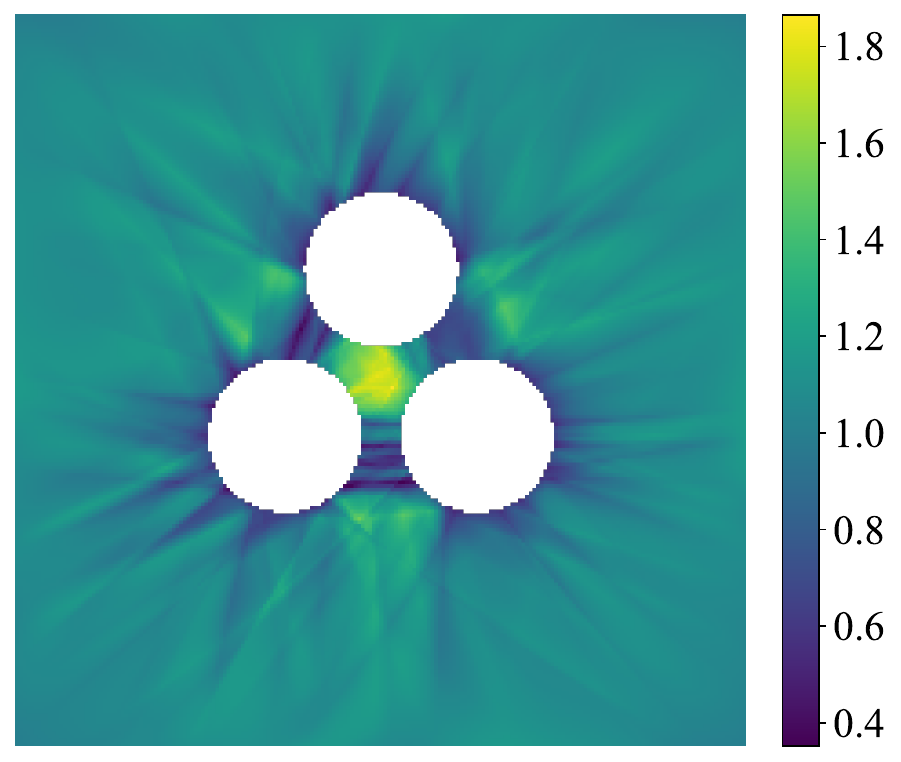}
    \caption{R3}
  \end{subfigure}\hfill
  \begin{subfigure}{0.24\linewidth}
    \centering
    \includegraphics[width=\linewidth]{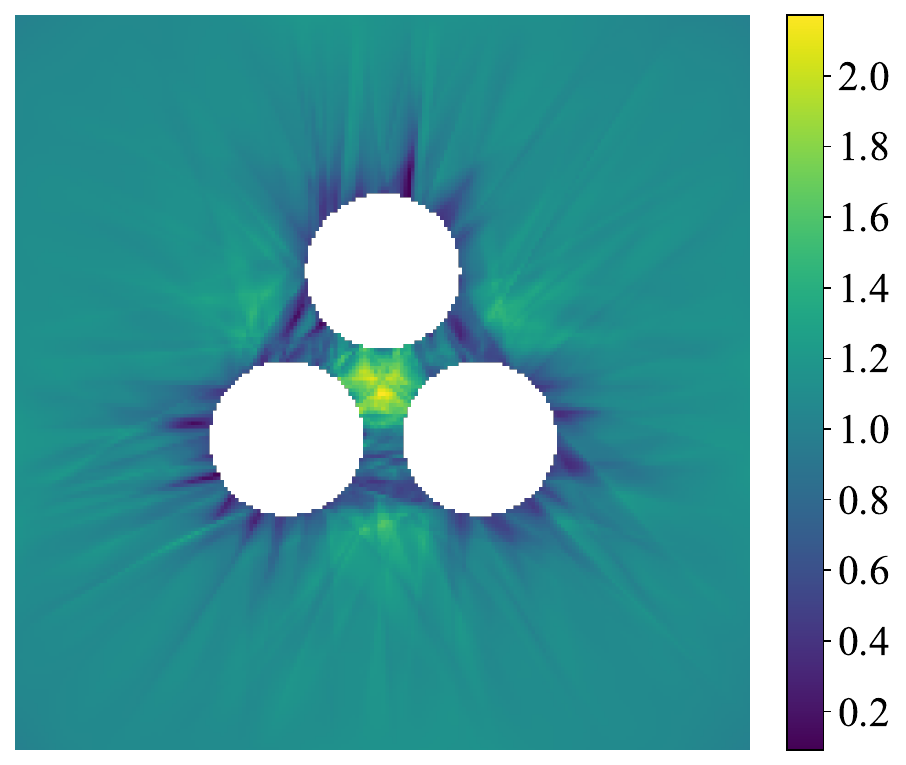}
    \caption{Bio-PINN}
  \end{subfigure}

\caption{Three-cell remodelling stress test without second-gradient regularization ($\varepsilon_0=0$). The Jacobian determinant $J=\det F$ (with $F=\nabla y_\theta$) is compared across methods for a long-distance regime (top row; weak coupling) and a short-distance regime (bottom row; strong coupling). The colour scale indicates whether the densified phase is reached around each cell and in the multicell interaction core. Bio-PINN reaches lower near-cell and interaction-core $J$ values than the baselines while keeping the surrounding matrix closer to the undensified state.}
  \label{fig:eps0-3cells-cmpr}
\end{figure}
\section{Discussion}\label{sec:discussion}

Contractile cells in fibrous extracellular matrices generate remodelling patterns that are difficult to capture numerically because the relevant solutions contain sharp transition layers, pericellular densification, and geometry-dependent intercellular tethers. In this work, we showed that a biomimetic near-to-far training curriculum helps recover these features in a physics-informed setting. The central mechanism is to resolve the mechanically decisive near-cell region before progressively expanding optimization into the bulk, while an uncertainty-guided retain, resample, and release loop reallocates collocation effort towards evolving transition layers and intercellular interaction regions. Across the single-cell, two-cell, and three-cell benchmarks studied here, this combination produced sharper pericellular densification and more faithful tether morphologies than ungated or purely residual-driven adaptive baselines.

The behaviour seen across benchmarks is consistent with the underlying physics of cell-mediated remodelling. In these systems, densification nucleates next to the cell boundary and only then extends outward or into intercellular channels. A training procedure that exposes the full domain too early must simultaneously fit sharp near-cell transition layers and an almost undeformed far field, which makes oversmoothing and misplaced sampling more likely. The near-to-far gate reduces this conflict by prioritizing the region where phase selection is decided, while the UQ-R3 update keeps collocation points near transition layers and tether-forming gaps once those structures appear. This helps explain why Bio-PINN more readily reaches the densified phase in the single-cell pericellular band, the two-cell gap, and the three-cell interaction core. The structural properties established for Retain, Resample, and Release, together with quantitative near-to-far accumulation under gate expansion, support this interpretation by showing that sampling mass is preserved and propagated outward in a controlled way. The repeated use of Fig.~\ref{fig:intro-exp-num} should therefore be understood as a qualitative experimental anchor rather than a quantitative validation dataset. What is encouraging is that the same two motifs recur across experiment and simulation: an outer pericellular band and an intercellular mechanically organized region that remains corridor-like under weaker interaction and sharpens into a tether under stronger interaction. In this sense, the two-cell Bio-PINN benchmarks do not merely produce visually plausible patterns; they recover the experimentally motivated structures that the model is designed to explain.

The present study also has clear boundaries. Our tests focus on two-dimensional perforated-domain benchmarks and primarily assess morphology, densified-phase attainment, and maintenance of the surrounding matrix near the undensified state rather than full error against an independent high-resolution numerical reference. In addition, the uncertainty proxy functions here as a practical guide to interfacial localization, but it is not equivalent to explicit second-gradient regularization. These limitations point to several next steps, including benchmarking against conventional solvers, extending the framework to three-dimensional or data-constrained ECM-remodelling problems, and combining near-to-far adaptive sampling with operator-learning approaches for parameter sweeps. More broadly, the results suggest that physics-aligned spatial curricula may also be useful in other nonconvex and interfacial systems, including fracture, damage, phase-field, and free-boundary models, where decisive structures nucleate locally and then propagate through the domain.

\section{Methods}\label{sec:methods}
The Methods below specify the continuum model, the physics-informed approximation, and the adaptive sampling procedure used to recover pericellular densification and intercellular tether formation. Proofs of the theoretical statements are deferred to Appendix~\ref{app:proofs}.

\subsection{Nonconvex variational model on a perforated domain}\label{sec:methods:model}
We model cell-induced remodelling in a two-dimensional fibrous ECM as a nonconvex variational problem posed on a perforated domain whose holes represent contractile cells. Let $\Omega\subset\mathbb{R}^2$ be a bounded Lipschitz domain and let
$B_i=\{x\in\mathbb{R}^2:\ |x-c_i|<r_c\}$ ($i=1,\dots,N_c$) denote circular cell regions with common radius $r_c>0$. We work on the perforated domain
\[
\Omega_c:=\Omega\setminus\bigcup_{i=1}^{N_c}B_i,\quad
\Gamma_{\mathrm{out}}:=\partial\Omega,\quad
C_i:=\partial B_i,\quad
\Gamma_{\mathrm{in}}:=\bigcup_{i=1}^{N_c} C_i.
\]
A deformation is a map $y:\Omega_c\to\mathbb{R}^2$ with displacement $u:=y-\mathrm{id}$ and deformation gradient $F:=\nabla y$. We write $J:=\det F>0$ and $I_1:=\mathrm{tr}(F^\top F)$. The ratio of deformed to undeformed density is $1/J$, so smaller values of $J$ correspond to stronger densification.

\subsubsection{Bulk strain energy and Jacobian barrier}
We model the collagen ECM as a random network of fibres and adopt a multiscale, orientation-averaged continuum energy. The resulting strain-energy density is non-rank-one-convex, reflecting compression-induced fibre microbuckling and the associated densification phase transition. In our experiments, the single-fibre constitutive law is
\[
S(\lambda)=w'(\lambda)=\mu(\lambda^{5}-\lambda^{3}),\qquad \lambda>0,\ \mu>0,
\]
with primitive $w(\lambda)=\mu(\tfrac{1}{6}\lambda^{6}-\tfrac{1}{4}\lambda^{4})$.
Assuming isotropic in-plane fibre orientations, the macroscopic energy admits the
invariant closed form
\begin{equation}\label{eq:methods:W}
W(F)=\frac{\mu}{96}\Big(5I_1^{3}-9I_1^{2}-12\,I_1J^{2}+12J^{2}+8\Big).
\end{equation}
Owing to compression-induced fibre microbuckling and buckling instabilities, the orientation-averaged bulk energy exhibits a nonconvex multi-well landscape associated with phase transitions. This energy is expressed in terms of the principal stretches $\lambda_1$ and $\lambda_2$ of the deformation gradient $F$, up to the addition of a null Lagrangian that does not affect the Euler--Lagrange equations under fixed boundary data. For the standard parameter values used in collagen ECM simulations, two energetically preferred states arise. The first is the undeformed reference state at $\lambda_1=1$ and $\lambda_2=1$, corresponding to the low-density well. The second is the compressed, densified well, in which the principal stretches are approximately 0.2 and 1.06. The latter corresponds to strong compression in one direction combined with mild extension in the orthogonal direction and gives a characteristic volume ratio $J_\star = \lambda_1\lambda_2 \approx 0.21$. Because the model is isotropic, the densified well admits symmetry-related variants obtained by swapping $\lambda_1$ and $\lambda_2$ and by rotation. In what follows, we refer to the neighbourhood of $J\approx 1$ as the undeformed or low-density well and to the neighbourhood of $J\approx J_\star$ as the compressed or densified well. These two wells represent distinct density states of the collagen network associated with the densification phase transition.

To discourage loss of orientation and interpenetration, we add a Jacobian barrier
\begin{equation}\label{eq:methods:Phi}
\Phi(J)=\exp\!\big[A(b-J)\big],\qquad A\gg1,\;0<b\ll1,
\end{equation}
which is negligible for $J\gtrsim b$ and grows rapidly as $J\downarrow b$.

\subsubsection{Boundary data and contraction penalty}
On the outer boundary, we impose the hard Dirichlet condition $y(x)=x$ for
$x\in\Gamma_{\mathrm{out}}$. On each cell boundary $C_i$, we prescribe a radial
contraction target map
\[
g_i(x)=c_i + (1-u_0)\,(x-c_i),\qquad x\in C_i,
\]
where $u_0\in(0,1)$ is the contraction ratio. We enforce this inner condition
softly via a quadratic penalty
\begin{equation}\label{eq:methods:Pin}
\mathcal{P}_{\mathrm{in}}[y]
=\sum_{i=1}^{N_c}\frac{\gamma_{\mathrm{in}}}{2}\int_{C_i}\|y-g_i\|^{2}\,ds,
\qquad \gamma_{\mathrm{in}}>0.
\end{equation}

\subsubsection{Second-gradient regularization and admissible class}
For $\varepsilon\ge 0$ we consider the energy
\begin{equation}\label{eq:methods:energy}
\mathcal{E}_\varepsilon[y]
=\int_{\Omega_c}\!\Big(W(\nabla y)+\Phi(\det\nabla y)\Big)\,dx
+\frac{\varepsilon^{2}}{2}\int_{\Omega_c}\!\|\nabla^{2}y\|_{F}^{2}\,dx
+\mathcal{P}_{\mathrm{in}}[y],
\end{equation}
over the admissible class
\[
\mathcal{A}_\varepsilon
:=\Bigl\{y\in H^2(\Omega_c;\mathbb{R}^2):
\det\nabla y>0\ \text{a.e.},\ y|_{\Gamma_{\mathrm{out}}}=x\Bigr\}.
\]
The second-gradient term introduces an intrinsic length scale and stabilizes
interfaces of width $O(\varepsilon)$.

% ============================================================
\subsection{Bio-PINN ansatz and weak boundary handling}\label{sec:methods:ritz}
The neural ansatz is chosen to keep the outer matrix boundary fixed while allowing contraction-induced transition layers and tethers to develop inside the perforated domain. We approximate $y$ by a feed-forward network. Let $u_\theta:\Omega_c\to\mathbb{R}^2$ be a fully connected neural network with differentiable activation $\sigma$ (we use \textsc{CELU} in all experiments), so that $\nabla u_\theta$ and $\nabla^2 u_\theta$ exist and are accessible through automatic differentiation. To enforce the hard outer Dirichlet condition $y=x$ on $\Gamma_{\mathrm{out}}$ exactly, we use a lifting construction.
\begin{equation}\label{eq:methods:ytheta}
y_\theta(x)=x+\varphi(x)\,u_\theta(x),
\end{equation}
where $\varphi:\Omega\to\mathbb{R}$ is a smooth shape function satisfying
$\varphi|_{\Gamma_{\mathrm{out}}}=0$ and therefore $y_\theta=x$ on $\Gamma_{\mathrm{out}}$.
Inner boundary contraction on $\Gamma_{\mathrm{in}}=\cup_i C_i$ is imposed softly
by adding $\mathcal{P}_{\mathrm{in}}[y_\theta]$ from \eqref{eq:methods:Pin} to the
training objective. 

% ============================================================
\subsection{Near-to-far distance gating and stage-wise curriculum}\label{sec:methods:gating}
Cell-induced remodelling begins at $\Gamma_{\mathrm{in}}$ and propagates into the bulk. The gate formalizes this spatial progression by revealing the domain from near to far. We encode this progression through a normalized distance field
\begin{equation}\label{eq:methods:dist}
\tilde d(x):=\frac{\mathrm{dist}(x,\Gamma_{\mathrm{in}})}{d_{\max}}\in[0,1],
\qquad
d_{\max}:=\max_{x\in\Omega_c}\mathrm{dist}(x,\Gamma_{\mathrm{in}}).
\end{equation}
At stage $i$, we apply a smooth gate
\begin{equation}\label{eq:methods:softgate}
g_{\gamma_i}(x)=\sigma\!\big(\alpha(\gamma_i-\tilde d(x))\big)\in(0,1),
\end{equation}
where $\sigma$ is the logistic sigmoid, $\alpha>0$ controls steepness, and
$\gamma_i\in\mathbb{R}$ controls the revealed region. In experiments, we use
$\alpha=5.0$, initialize $\gamma_0=-0.5$, and cap per-stage increments by
$\Delta_{\max}=0.05$.

For intuition and analysis, it is convenient to associate each stage with an effective hard-gated set using the clamped gate level $\bar\gamma_i:=\Pi_{[0,1]}(\gamma_i)$.
\begin{equation}\label{eq:methods:hardgate}
A_i=\{x\in\Omega_c:\tilde d(x)\le\bar\gamma_i\},
\qquad
S_{i+1}=A_{i+1}\setminus A_i.
\end{equation}
We advance the gate using a near-to-far curriculum driven by the current
objective value $\mathcal{L}_i$.
\begin{equation}\label{eq:methods:gateupdate}
\gamma_{i+1}
=\gamma_i+\min\!\left\{\Delta_{\max},\ \eta_g\,e^{-c\,\mathcal{L}_i}\right\},
\end{equation}
with $\eta_g>0$ and $c>0$. Importantly, gating is applied only to the bulk density terms $W+\Phi$ and the optional $H^2$ term, and not to the inner boundary penalty $\mathcal{P}_{\mathrm{in}}$, so boundary information is enforced
throughout training.

% ============================================================
\subsection{Uncertainty proxy and gated normalization}\label{sec:methods:uq}
Interfaces, thin layers, and tethers are the regions where the learned deformation is most sensitive during training. We therefore use gradient-based local probing to construct a sampling indicator for likely transition layers and intercellular channels. Given the current parameters at stage $i$, for each $x\in\Omega_c$ we draw $m_{\mathrm{uq}}$ i.i.d.\ Gaussian perturbations $\delta^{(k)}\sim\mathcal{N}(0,\rho_{\mathrm{uq}}I)$ and define
\begin{equation}\label{eq:methods:U}
U_i(x)
:=\operatorname{Var}_{k=1,\dots,m_{\mathrm{uq}}}\!\Big(\|\nabla y_\theta(x+\delta^{(k)})\|_F\Big).
\end{equation}
In experiments we use $m_{\mathrm{uq}}=16$ and $\rho_{\mathrm{uq}}=0.01$.

To stabilize the scale of the proxy and remain consistent with the currently revealed region, we normalize $U_i$ using a gated quantile-shrink rule computed empirically over the current collocation set (Sec.~\ref{sec:methods:r3}). Let $q^{(g)}_\alpha$ denote the weighted $\alpha$-quantile of $\{U_i(x):x\in\mathcal{S}_i\}$ under weights $g_{\gamma_i}(x)$. For fixed $0<\alpha_-<\alpha_+<1$, define
\begin{equation}\label{eq:methods:quantilenorm}
\widetilde U_i(x)
=\operatorname{clip}_{[0,1]}\!\left(
\frac{U_i(x)-q^{(g)}_{\alpha_-}}{q^{(g)}_{\alpha_+}-q^{(g)}_{\alpha_-}}\right),
\end{equation}
which maps scores to $[0,1]$ while suppressing extreme outliers.

% ============================================================
\subsection{UQ-R3 sampling with low-discrepancy resampling}\label{sec:methods:r3}
The collocation set is updated so that points are retained near evolving transition layers while newly revealed regions remain populated. At stage $i$, let $\mathcal{S}_i\subset\Omega_c$ be the current collocation set with fixed budget $N:=|\mathcal{S}_i|$. We compute scores $s_i(x)$ on $\mathcal{S}_i$, where Bio-PINN uses the normalized UQ score $s_i(x)=\widetilde U_i(x)$ and residual-based baselines use a pointwise residual density.

Given retention ratio $\rho\in(0,1)$, we retain the top $\rho$ fraction of points by
score, using gate-weights to define the cutoff. Concretely, let $\tau_i$ be the smallest
threshold such that the total gated weight of points with $s_i(x)\le\tau_i$ accounts for
at least $(1-\rho)$ of the total gated weight.
\begin{equation}\label{eq:methods:tau}
\sum_{x\in\mathcal{S}_i}\mathbf{1}_{\{s_i(x)\le\tau_i\}}\,g_{\gamma_i}(x)
\ \ge\ (1-\rho)\sum_{x\in\mathcal{S}_i} g_{\gamma_i}(x),
\end{equation}
and define the retained set
\begin{equation}\label{eq:methods:Lambda}
R_i:=\{x\in\mathcal{S}_i:\ s_i(x)\ge\tau_i\},\qquad |R_i|\approx \rho N.
\end{equation}
We release the remaining $m_i:=N-|R_i|$ points and resample $m_i$ new points to form
\begin{equation}\label{eq:methods:R3}
\mathcal{S}_{i+1}=R_i\cup C_i,\qquad |C_i|=m_i.
\end{equation}

\subsubsection{Low-discrepancy resampling and shell injection}
New points are generated from low-discrepancy nodes transported into the currently
active region. We draw Hammersley nodes $\{u_{i,j}\}_{j=1}^{m_i}\subset[0,1]^2$ and apply
a (bi-)Lipschitz transport map $T_i:[0,1]^2\to A_i$ to obtain $X_{i,j}=T_i(u_{i,j})$ and
$C_i=\{X_{i,1},\dots,X_{i,m_i}\}$. When shell injection is used, $T_i$ is restricted to map
into the newly opened shell $S_{i+1}=A_{i+1}\setminus A_i$, ensuring exploration of fresh
regions while preserving low-discrepancy coverage up to transport constants.

% ============================================================
\subsection{Gate-weighted empirical objective and training loop}\label{sec:methods:objective}
The training objective emphasizes the portion of the matrix that has already been revealed by the gate while continuing to enforce contraction on the cell boundaries. At stage $i$, we approximate bulk integrals using quasi-Monte Carlo (QMC) quadrature over the collocation set $\mathcal{S}_i$ with fixed budget $N=10^4$ in all experiments. Gating is applied only to bulk densities, and we use a normalized gate-weighted empirical energy.
\begin{align}\label{eq:methods:gatedenergy}
\mathcal{E}^{(g)}_{\varepsilon,i}[y_\theta]
&:=
\frac{\sum_{x\in\mathcal S_i} g_{\gamma_i}(x)\big(W(\nabla y_\theta(x))+\Phi(\det\nabla y_\theta(x))\big)}
{\sum_{x\in\mathcal S_i} g_{\gamma_i}(x)}
+
\frac{\varepsilon^{2}}{2}\,
\frac{\sum_{x\in\mathcal S_i} g_{\gamma_i}(x)\|\nabla^{2}y_\theta(x)\|_{F}^{2}}
{\sum_{x\in\mathcal S_i} g_{\gamma_i}(x)}.
\end{align}
The stage objective adds the inner-boundary penalty.
\begin{equation}\label{eq:methods:Ji}
\mathcal{J}_i(\theta)=\mathcal{E}^{(g)}_{\varepsilon,i}[y_\theta]+\mathcal{P}_{\mathrm{in}}[y_\theta],
\qquad
\mathcal{L}_i:=\mathcal{J}_i(\theta_i).
\end{equation}
All derivatives are computed by automatic differentiation.

\subsubsection{Training schedule}
Training alternates between (i) optimizer steps at fixed $(\gamma_i,\mathcal{S}_i)$, (ii) updating the gate via \eqref{eq:methods:gateupdate}, and (iii) updating the collocation set via UQ-R3 in \eqref{eq:methods:R3}. In practice, we trigger steps (ii) and (iii) every $P$ optimizer iterations. Boundary points on $\Gamma_{\mathrm{out}}$ and $\Gamma_{\mathrm{in}}$ used to evaluate $\mathcal{P}_{\mathrm{in}}$ are sampled randomly and refreshed throughout training.

\subsubsection{Implementation details}
Unless otherwise stated, we use a fully connected network with three hidden layers and width 128, \textsc{CELU} activations, Xavier-uniform initialization, and Adam optimization with learning rate $10^{-3}$ and $(\beta_1,\beta_2)=(0.9,0.999)$. The learning rate is decayed by a factor of $0.9$ every $10{,}000$ iterations. We use early stopping based on a validation estimate of $\mathcal{J}_i$ computed on $N_{\mathrm{val}}=2{,}000$ randomly sampled points. All experiments are implemented in PyTorch and run with a fixed random seed.

\begin{algorithm}[t]
\caption{Bio-PINNs with UQ-R3 and near-to-far distance gating}\label{alg:biopinn}
\KwIn{Initial parameters $\theta_0$, initial gate $\gamma_0$, budget $N$, retention ratio $\rho$,
UQ parameters $(m_{\mathrm{uq}},\rho_{\mathrm{uq}})$, transport maps $\{T_i\}$,
gating parameters $(\eta_g,c,\Delta_{\max})$, resampling period $P$.}
Initialize $\mathcal{S}_0\subset A_0$ with $|\mathcal{S}_0|=N$ (e.g., transported Hammersley nodes)\;
\For{$i=0,1,2,\dots$ until stopping}{
Perform $P$ optimizer steps to approximately minimize $\mathcal{J}_i(\theta)$ on $\mathcal{S}_i$\;
Compute $U_i$ by \eqref{eq:methods:U} and $\widetilde U_i$ by \eqref{eq:methods:quantilenorm} on $\mathcal{S}_i$\;
Compute $\tau_i$ by \eqref{eq:methods:tau}, retain $R_i$ by \eqref{eq:methods:Lambda}, resample $m_i=N-|R_i|$
new points $C_i$ via $T_i$, set $\mathcal{S}_{i+1}=R_i\cup C_i$\;
Update gate $\gamma_{i+1}$ by \eqref{eq:methods:gateupdate} and update $A_{i+1}$ by \eqref{eq:methods:hardgate}\;
}
\end{algorithm}

\subsection{Theoretical properties of UQ-R3 under gating}\label{sec:methods:theory}
The statements below formalize how retained and resampled points behave as the near-to-far gate expands. All proofs are deferred to Appendix~\ref{app:proofs}. For clarity, the results are stated in the hard-gate setting, where $\mu(\cdot)$ denotes the Lebesgue measure on $\mathbb R^2$.

Recall the proxy maximizer level and near-optimal band
\[
\widetilde U_i^\star:=\sup_{x\in A_i}\widetilde U_i(x),
\qquad
B_i(\varepsilon):=\Bigl\{x\in A_i:\ \widetilde U_i^\star-\widetilde U_i(x)\le \varepsilon\Bigr\},
\]
the retained-set mean
\[
\overline U_i
:=\frac{1}{|\mathcal S_i\cap\Lambda_i|}
\sum_{x\in\mathcal S_i\cap\Lambda_i}\widetilde U_i(x),
\]
and the effective tolerance $\varepsilon_i:=\min\{\varepsilon,\ \widetilde U_i^\star-\tau_i\}\in(0,\varepsilon]$.

\begin{assumption}[Regularity needed for discrepancy bounds]\label{ass:reg-proxy-transport}
For each stage $i$, assume the following.
\begin{enumerate}[label=(A\arabic*)]
\item $\widetilde U_i$ is Lipschitz on $A_i$.
\item $T_i:[0,1]^2\to A_i$ is bi-Lipschitz and a.e.\ differentiable. Writing
$J_{T_i}(x):=|\det\nabla T_i^{-1}(x)|$, define the (stage-uniform) transport distortion constant
\[
C_T:=\sup_i\,\mu(A_i)\Bigl\|J_{T_i}-\frac{1}{\mu(A_i)}\Bigr\|_{L^\infty(A_i)}\in[0,1).
\]
\item The low-discrepancy nodes satisfy a star-discrepancy bound
$D^\ast(\{u_{i,j}\})\lesssim (\log m_i)^2/m_i$ (e.g.\ two-dimensional Hammersley points).
\item The measurable sets used below (in particular $B_i(\varepsilon_i)$, $\Lambda_i$, and the shell
$S_{i+1}:=A_{i+1}\setminus A_i$) have finite perimeter, so that BV-mollification/Koksma-Hlawka
estimates apply after pullback by $T_i$.
\end{enumerate}
\end{assumption}

\begin{proposition}[Budget identity for \textsc{Release}]\label{prop:release-budget}
Let $\mathcal S_i\subset A_i$ be the bulk collocation set with budget $N=|\mathcal S_i|$.
Define the retained set $R_i=\mathcal S_i\cap\Lambda_i$ and update
$\mathcal S_{i+1}=R_i\cup C_i$ with $|C_i|=N-|R_i|$. Then the budget is preserved.
\[
|\mathcal S_{i+1}|=N\qquad\text{for all stages }i.
\]
\end{proposition}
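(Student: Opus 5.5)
The identity reduces to a cardinality computation for a disjoint union. Since $|\mathcal S_{i+1}|=|R_i\cup C_i|$ and $|C_i|=N-|R_i|$ by construction, it suffices to show that $R_i$ and $C_i$ are disjoint. Inclusion--exclusion then gives
\[
|\mathcal S_{i+1}|=|R_i|+|C_i|-|R_i\cap C_i| = |R_i|+(N-|R_i|)=N .
\]
Two bookkeeping facts are needed first. Because $R_i=\mathcal S_i\cap\Lambda_i\subseteq\mathcal S_i$, we have $0\le |R_i|\le N$. Hence $m_i:=N-|R_i|\ge 0$, and the resampling step is well posed: it requests a nonnegative number of Hammersley nodes.

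The step I expect to be the main obstacle is disjointness, together with the fact that $C_i$ consists of $m_i$ distinct points. Taken literally, $\mathcal S_{i+1}$ is a set, so a freshly transported node $X_{i,j}=T_i(u_{i,j})$ that coincides with a retained point would collapse the union. My argument proceeds in two parts.

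\textbf{Distinctness within $C_i$.} The Hammersley nodes $u_{i,1},\dots,u_{i,m_i}$ are pairwise distinct, since their first coordinates $j/m_i$ are distinct. Under Assumption~\ref{ass:reg-proxy-transport}(A2), $T_i$ is bi-Lipschitz and therefore injective. Hence the images $X_{i,j}$ are pairwise distinct and $|C_i|=m_i$ holds genuinely.

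\textbf{Disjointness from $R_i$.} Collisions between $C_i$ and the finite set $R_i$ are non-generic: $R_i$ has Lebesgue measure zero. I would either (a) state the convention that the collocation set is a multiset, i.e.\ a list of $N$ points, in which case the union is a concatenation and the count is immediate; or (b) note that any coincident node can be replaced, e.g.\ by taking the next Hammersley index or by applying an arbitrarily small perturbation within $A_i$. Such a replacement changes neither the count nor the discrepancy bounds used later. With either convention, $R_i\cap C_i=\emptyset$.

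For shell injection, where $T_i$ maps into $S_{i+1}=A_{i+1}\setminus A_i$, disjointness is automatic. Indeed $R_i\subseteq\mathcal S_i\subseteq A_i$ while $C_i\subseteq S_{i+1}$, and $A_i\cap S_{i+1}=\emptyset$.

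Finally, I would conclude by induction on $i$. The base case $|\mathcal S_0|=N$ holds by the initialization in Algorithm~\ref{alg:biopinn}. The inductive step is exactly the computation above, applied with $|\mathcal S_i|=N$.

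The argument does not depend on how $\Lambda_i$ or $\tau_i$ is chosen. In particular it is insensitive to ties in the score at the threshold, which only affect the value of $|R_i|$ and not the identity $|\mathcal S_{i+1}|=N$.
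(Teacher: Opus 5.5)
Your proposal is correct and is essentially the paper's argument: the paper also computes $|\mathcal S_{i+1}|=|R_i|+|C_i|=|R_i|+(N-|R_i|)=N$. It simply declares the union $R_i\cup C_i$ disjoint ``by construction,'' whereas your discussion of the injectivity of $T_i$, the multiset or perturbation convention for collisions, automatic disjointness under shell injection, and the induction from $|\mathcal S_0|=N$ makes explicit what the paper leaves implicit.
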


\begin{theorem}[Non-emptiness and discrepancy-controlled coverage for \textsc{Resample}]\label{thm:resample-nonempty}
Fix a stage $i$ and let $B\subset A_i$ be measurable with finite perimeter. Define its relative area
\[
p_B:=\frac{\mu(B)}{\mu(A_i)}\in[0,1],
\qquad
\widetilde B:=T_i^{-1}(B)\subset[0,1]^2.
\]
Under Assumption~\ref{ass:reg-proxy-transport}, there exists a constant $C_1>0$
(depending only on $T_i$ and geometric bounds such as $\operatorname{Per}(B)$) such that, for every $m_i$,
\begin{equation}\label{eq:nonempty-45}
\#(C_i\cap B)\ \ge\ m_i\Bigl(|\widetilde B|-C_1\sqrt{D^\ast(\{u_{i,j}\})}\Bigr)_+ .
\end{equation}
Moreover,
\begin{equation}\label{eq:tildeB-lb}
|\widetilde B|\ \ge\ (1-C_T)\,p_B.
\end{equation}
In particular, if
\[
D^\ast(\{u_{i,j}\})\ \le\ \frac{(1-C_T)^2 p_B^{\,2}}{4C_1^{\,2}},
\qquad
m_i\ \ge\ \frac{2}{(1-C_T)\,p_B},
\]
then $\#(C_i\cap B)\ge 1$. If $\{u_{i,j}\}$ are two-dimensional Hammersley points with
$D^\ast(m)\le C_{\mathrm{Ham}}(\log m)^2/m$, it suffices to take
\[
m_i\ \ge\ \kappa\,
\max\!\left\{
\frac{2}{(1-C_T)\,p_B},\
\frac{4C_1^{\,2}}{(1-C_T)^2 p_B^{\,2}}
\left[\log\!\left(\frac{2C_1}{(1-C_T)\,p_B}\right)\right]^2
\right\},
\qquad (\kappa\ge 1),
\]
which guarantees $\#(C_i\cap B)\ge 1$.
\end{theorem}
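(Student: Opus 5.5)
The argument has three parts: a discrepancy lower bound for the counting estimate \eqref{eq:nonempty-45}, a change of variables for \eqref{eq:tildeB-lb}, and elementary algebra for the two sufficient conditions. Since $T_i$ is a bijection onto $A_i$, $X_{i,j}=T_i(u_{i,j})\in B$ iff $u_{i,j}\in\widetilde B$, so
\[
\#(C_i\cap B)=\#\{j:u_{i,j}\in\widetilde B\}.
\]
By (A2) the pullback $\widetilde B$ has finite perimeter, with $\operatorname{Per}(\widetilde B)\lesssim \mathrm{Lip}(T_i^{-1})\operatorname{Per}(B)$. The star discrepancy only controls boxes anchored at the origin, so I need an isotropic-discrepancy estimate for finite-perimeter sets. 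I would use the standard mollification/BV argument allowed by (A4). Mollify $\mathbf 1_{\widetilde B}$ at scale $h$ to a smooth $\phi_h\le\mathbf 1_{\widetilde B}$, supported in the inner parallel set. Then $\int\phi_h\ge|\widetilde B|-c\,h\operatorname{Per}(\widetilde B)$, and the Hardy--Krause variation satisfies $V_{HK}(\phi_h)\lesssim \operatorname{Per}(\widetilde B)/h$. Koksma--Hlawka gives
\[
\frac{1}{m_i}\#\{u_{i,j}\in\widetilde B\}\ \ge\ \frac1{m_i}\sum_j\phi_h(u_{i,j})\ \ge\ |\widetilde B|-c\,h\operatorname{Per}(\widetilde B)-C\frac{\operatorname{Per}(\widetilde B)}{h}D^\ast .
\]
Choosing $h=\sqrt{D^\ast}$ balances the two error terms and yields $|\widetilde B|-C_1\sqrt{D^\ast}$, with $C_1$ depending on $\operatorname{Per}(B)$ and the Lipschitz constants of $T_i$. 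Taking the positive part (counts are nonnegative) gives \eqref{eq:nonempty-45}.

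\textbf{Main obstacle: bounding $V_{HK}(\phi_h)$.} In two dimensions this includes the mixed derivative $\partial_{12}\phi_h$, which scales like $h^{-2}$ times the boundary measure. A crude estimate therefore gives $D^\ast/h$ only after care, and possibly $D^\ast/h^2$, which would lead to $D^{\ast 1/3}$ instead of $\sqrt{D^\ast}$. I would first try convolving with a product kernel, so that the mixed variation is bounded by $\operatorname{Per}/h$. If that fails, I would fall back on a direct box-covering argument: cover $\widetilde B$ by a union of dyadic boxes of side $h$, inner-approximating up to $O(\operatorname{Per}\cdot h)$ boundary boxes. The grid region meets the boundary in $O(\operatorname{Per}/h)$ anchored-box differences, and each contributes $4D^\ast$ via inclusion--exclusion. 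The total error is again $h\operatorname{Per}+D^\ast\operatorname{Per}/h$, so $h=\sqrt{D^\ast}$ still works.

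For \eqref{eq:tildeB-lb}, change variables:
\[
|\widetilde B|=\int_B |\det\nabla T_i^{-1}(x)|\,dx=\int_B J_{T_i}(x)\,dx .
\]
By (A2), $J_{T_i}\ge \frac{1}{\mu(A_i)}-\frac{C_T}{\mu(A_i)}$ a.e. on $A_i$, so $|\widetilde B|\ge(1-C_T)\mu(B)/\mu(A_i)=(1-C_T)p_B$.

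For the sufficient conditions, the first hypothesis gives $C_1\sqrt{D^\ast}\le \tfrac12(1-C_T)p_B$. Combined with \eqref{eq:tildeB-lb}, the bracket in \eqref{eq:nonempty-45} is at least $\tfrac12(1-C_T)p_B$. The second hypothesis then makes the product at least $1$, and since the count is an integer, $\#(C_i\cap B)\ge1$.

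For the Hammersley case, it suffices to check that the stated $m_i$ satisfies $C_{\mathrm{Ham}}(\log m_i)^2/m_i\le a^2$, where $a:=(1-C_T)p_B/(2C_1)$. Because $(\log m)^2/m$ is decreasing for $m\ge e^2$, it is enough to verify this at $m=\kappa a^{-2}[\log(1/a)]^2$. There $\log m\lesssim \log(1/a)$ up to $\log\log$ terms, which are absorbed into $\kappa$ (and $C_{\mathrm{Ham}}$). This is a routine monotonicity check, and the constant $\kappa\ge1$ is chosen large enough accordingly.
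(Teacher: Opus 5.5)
\paragraph{Gap shared with the paper: finite perimeter does not control a pointwise count.}
Your three steps are the paper's own: the change of variables giving $J_{T_i}\ge(1-C_T)/\mu(A_i)$, an inner mollification fed into Koksma--Hlawka with $h=\sqrt{D^\ast}$, and the algebra for the hit criterion. You also share the paper's one genuine gap. Finite perimeter does not give a continuous $0\le\phi_h\le\mathbf 1_{\widetilde B}$ with $\int\phi_h\ge|\widetilde B|-ch\operatorname{Per}(\widetilde B)$. The paper's proof asserts this too, without justification. Perimeter does not see null sets, whereas $\#(C_i\cap B)$ is a pointwise count.

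Here is a counterexample. Let $Q$ be the countable union, over all $m$, of the transported $m$-point Hammersley sets, and put $B=A_i\setminus Q$. Then $B$ is measurable, $\operatorname{Per}(B)=\operatorname{Per}(A_i)<\infty$ and $|\widetilde B|=1$. Yet $\#(C_i\cap B)=0$ for every $m_i$, so \eqref{eq:nonempty-45} fails for every $C_1$ once $D^\ast<C_1^{-2}$. Since $Q$ is dense, $\widetilde B$ has empty interior and the only admissible $\phi_h$ is $0$. Your box-covering fallback breaks at the same place: having only $O(\operatorname{Per}/h)$ boundary boxes is a Minkowski-content bound on the topological boundary $\partial\widetilde B$, not a consequence of finite perimeter.

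\paragraph{How to repair it.}
What is missing is a quantitative boundary hypothesis, with $C_1$ allowed to depend on its constant. For example, assume $|\{x\in[0,1]^2:\operatorname{dist}(x,\partial\widetilde B)\le h\}|\le K_Bh$ for $0<h\le 1$. This holds if $\partial B$ is a finite union of rectifiable curves, and it passes from $B$ to $\widetilde B$ with factors depending on $\operatorname{Lip}(T_i^{\pm1})$. Under it, both of your routes close.
\begin{itemize}
\item Box route: the side-$h$ squares meeting $\partial\widetilde B$ number $O(K_B/h)$, and they cover $\widetilde B$ minus the interior squares. The interior squares merge columnwise into $O((K_B+1)/h)$ rectangles, each of local discrepancy at most $4D^\ast$.
\item Mollification route: the coarea formula for $d=\operatorname{dist}(\cdot,\mathbb R^2\setminus\widetilde B)$ yields $t\in(h/2,h)$ with $\operatorname{Per}(\{d>t\})\le 2K_B$. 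Mollify $\{d>t\}$ at scale $t/2$; the face terms of $V_{HK}$ vanish because the result is supported where $d>t/2$.
\end{itemize}

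\paragraph{The mixed-derivative worry and the choice of $\kappa$.}
The mixed-derivative obstacle you raise is not real, for any kernel:
$\|\partial_{12}(\mathbf 1_E*\eta_s)\|_{L^1}=\|(D_2\mathbf 1_E)*\partial_1\eta_s\|_{L^1}\le\operatorname{Per}(E)\,\|\partial_1\eta_s\|_{L^1}\lesssim\operatorname{Per}(E)/s$.

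Your remark that $\kappa$ must be taken large depending on $C_{\mathrm{Ham}}$ is correct, and sharper than the paper. Take your $a$ with $a\le e^{-1}$ and $m=a^{-2}[\log(1/a)]^2$. Then $C_{\mathrm{Ham}}(\log m)^2/m\ge 4C_{\mathrm{Ham}}a^2$. So with $\kappa=1$ the inserted bound cannot certify $D^\ast\le a^2$ once $C_{\mathrm{Ham}}>1/4$. The $\log\log$ slack is absorbed into $\kappa$ only, not into $C_{\mathrm{Ham}}$, which is given.
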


\begin{theorem}[Hit, no-early-exit, and accumulation under good rounds for \textsc{Retain}]\label{thm:retain-accumulation}
Fix $\varepsilon>0$.
Under Assumption~\ref{ass:reg-proxy-transport}, there exists $m_0(\varepsilon)$ such that for any round with
$m_i:=|C_i|\ge m_0(\varepsilon)$ (a \emph{good round}), we have
\begin{equation}\label{eq:hit}
\#(C_i\cap B_i(\varepsilon_i))\ge 1.
\end{equation}

\emph{(No-early-exit).}
Fix a good round $i$ and let $x\in C_i\cap B_i(\varepsilon_i)$.
If for subsequent rounds $j\ge i$ before ``success'' (i.e.\ while $\overline U_j<\widetilde U_i^\star-\varepsilon$) either
\begin{enumerate}[label=(\roman*)]
\item (\emph{Fixed proxy}) $\widetilde U_j\equiv \widetilde U_i$, or
\item (\emph{Small drift}) $\inf_{y\in B_i(\varepsilon_i)} \widetilde U_j(y)\ge \widetilde U_i^\star-\varepsilon_i$
and $\tau_j\le \widetilde U_i^\star-\varepsilon_i$,
\end{enumerate}
then $x\in\Lambda_j$ for all such $j$ and hence cannot be released before success.

\emph{(Long-term accumulation).}
If the set of good rounds $\{i:\ m_i\ge m_0(\varepsilon)\}$ is infinite, then
\[
\liminf_{n\to\infty}\overline U_n\ \ge\ \widetilde U^\star-\varepsilon ,
\]
i.e.\ along infinitely many good rounds, the retained-set mean approaches the optimal proxy level
(up to an arbitrarily small slack), where $\widetilde U^\star$ denotes the corresponding maximizer level.
\end{theorem}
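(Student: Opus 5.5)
The plan is to prove the three parts of Theorem~\ref{thm:retain-accumulation} in order: the hit property, then no-early-exit, then long-term accumulation. The hit property is a direct application of Theorem~\ref{thm:resample-nonempty} with $B=B_i(\varepsilon_i)$. The main work is a uniform lower bound on the relative area $p_{B}$.

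\emph{Hit.} Since $\varepsilon_i\le\widetilde U_i^\star-\tau_i$ and $\varepsilon_i>0$, the band $B_i(\varepsilon_i)$ is well defined. By (A1), $\widetilde U_i$ is Lipschitz with some constant $L_i$. Take a near-maximizer $x^\star\in A_i$ with $\widetilde U_i(x^\star)\ge\widetilde U_i^\star-\varepsilon_i/2$. Then the ball of radius $\varepsilon_i/(2L_i)$ about $x^\star$, intersected with $A_i$, lies in $B_i(\varepsilon_i)$.

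Because $A_i$ is a sublevel set of the distance function in a Lipschitz domain, it satisfies an interior cone condition. This yields
\[
p_B\ \ge\ c\,\varepsilon_i^2/(L_i^2\mu(A_i))=:p_\star(\varepsilon).
\]
The finite perimeter needed by Theorem~\ref{thm:resample-nonempty} is supplied by (A4). Substituting $p_\star$ into the Hammersley sufficient condition of Theorem~\ref{thm:resample-nonempty} gives an explicit $m_0(\varepsilon)$, which proves \eqref{eq:hit}.

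The main obstacle is making $m_0$ independent of the stage $i$. This requires $\varepsilon_i$, $L_i$, $\mu(A_i)$, the cone constant and $C_1$ to be bounded uniformly in $i$. I would state these as stage-uniform bounds implicit in (A1)--(A4), in the same way that $C_T$ is already taken uniform. I would also reduce $\varepsilon_i$ to $\varepsilon$ along good rounds, by noting that $\widetilde U_i^\star-\tau_i$ is bounded below whenever the retained fraction $\rho<1$ and the proxy is nonconstant. If such a bound fails, I would define $m_0$ in terms of $\varepsilon_i$ and note that this is the dependence the statement permits.

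\emph{No-early-exit.} This is an induction on $j\ge i$. Retention at round $j$ means $s_j(x)\ge\tau_j$, by \eqref{eq:methods:Lambda}.
\begin{itemize}
\item In case (i), $\widetilde U_j(x)=\widetilde U_i(x)\ge\widetilde U_i^\star-\varepsilon_i\ge\tau_i$. I still need $\tau_j\le\widetilde U_i^\star-\varepsilon_i$ for $j>i$. I would argue that while $\overline U_j<\widetilde U_i^\star-\varepsilon$, the retained mean being below the band forces the $(1-\rho)$-quantile $\tau_j$ of the fixed score to lie below the band level. The reason is that if $\tau_j$ exceeded $\widetilde U_i^\star-\varepsilon_i$, every retained point would have score above $\widetilde U_i^\star-\varepsilon$, contradicting $\overline U_j<\widetilde U_i^\star-\varepsilon$.
\item In case (ii), the two hypotheses give $\widetilde U_j(x)\ge\widetilde U_i^\star-\varepsilon_i\ge\tau_j$ directly.
\end{itemize}
In both cases $x\in\Lambda_j$, and since $x$ remains in $\mathcal S_j$ by induction, it is never released before success.

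\emph{Accumulation.} I would argue by contradiction. Suppose $\liminf\overline U_n<\widetilde U^\star-\varepsilon$. Then infinitely many rounds are unsuccessful. Each good round injects at least one point into the band, and by no-early-exit that point persists while the run is unsuccessful.

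Persistent band points accumulate in the retained set, which has size at most $\rho N$ by Proposition~\ref{prop:release-budget}. Once this set is saturated by band points, every retained point has score at least $\widetilde U^\star-\varepsilon$. Hence $\overline U_n\ge\widetilde U^\star-\varepsilon$, a contradiction.

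Handling the stage-dependence of $\widetilde U_i^\star$ is a second technical difficulty. I would resolve it by passing to the limit $\widetilde U^\star$ under the drift hypothesis and absorbing the difference into the slack $\varepsilon$.
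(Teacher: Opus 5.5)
Your three-part structure is the paper's: the hit follows from Theorem~\ref{thm:resample-nonempty}, no-early-exit compares $\widetilde U_j(x)$ with $\tau_j$, and accumulation argues by contradiction with the fixed budget. Your no-early-exit step is sharper than the paper's. The paper's proof quietly adds $\tau_j\le\widetilde U_i^\star-\varepsilon_i$ as an extra hypothesis in case~(i). Your remark that every retained point scores at least $\tau_j$ gives $\tau_j\le\overline U_j<\widetilde U_i^\star-\varepsilon\le\widetilde U_i^\star-\varepsilon_i$ in every unsuccessful round, so that condition is automatic.

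\textbf{The gap is in the accumulation step.} Negating $\liminf_n\overline U_n\ge\widetilde U^\star-\varepsilon$ gives only infinitely many unsuccessful rounds. A band point, however, is protected only until the first successful round after its injection: your bound on $\tau_j$ uses that round $j$ is unsuccessful, and (i)/(ii) are assumed only before success. Those hypotheses must also be tacitly in force for the accumulation claim, which the statement does not say. If successful and unsuccessful rounds alternate, band points can be released at each success, their number never exceeds the budget, and no contradiction arises.

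What your mechanism does prove is that no unbroken run of unsuccessful rounds can contain more than $N$ good rounds. With infinitely many good rounds, success therefore recurs infinitely often, i.e.\ $\limsup_n\overline U_n\ge\widetilde U^\star-\varepsilon$. That matches the gloss ``along infinitely many good rounds'' but not the stated $\liminf$. The paper's proof makes the same leap (``since none of the rounds $j_\ell$ is successful, we can repeatedly apply no-early-exit''). It cannot be patched by protecting band points through successful rounds too, since then infinitely many good rounds would contradict the budget outright; the conclusion itself must be weakened.

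Two smaller repairs in the same step:
\begin{enumerate}
\item Proposition~\ref{prop:release-budget} gives $|\mathcal S_n|=N$, not $|R_n|\le\rho N$, because the gate-weighted cutoff does not fix $|R_n|$. Close instead by pigeonhole once $N+1$ distinct band points persist, as the paper does, rather than via saturation of the mean.
\item Each injected $x_k$ is protected only relative to its own level $\widetilde U_{i_k}^\star-\varepsilon$. You therefore need an explicit common level, e.g.\ $\widetilde U^\star\le\inf_k\widetilde U_{i_k}^\star$, rather than ``absorbing the difference into the slack''.
\end{enumerate}

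\textbf{The hit step.} You rightly note that an $m_0(\varepsilon)$ independent of $i$ needs stage-uniform control that Assumption~\ref{ass:reg-proxy-transport} does not supply; only $C_T$ is uniform. The paper simply asserts that the hit ``follows immediately'' from Theorem~\ref{thm:resample-nonempty}. Your proposed lower bound on $\varepsilon_i$ is not valid, though: $\rho<1$ and a nonconstant proxy do not keep $\widetilde U_i^\star-\tau_i$ away from zero. For instance, the clipping in \eqref{eq:methods:quantilenorm} can place more than a $\rho$ fraction of the gated weight at the value $1$. That forces $\tau_i=1=\widetilde U_i^\star$, so $\varepsilon_i=0$. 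Likewise $L_i$ scales like $1/(q^{(g)}_{\alpha_+}-q^{(g)}_{\alpha_-})$, which is also uncontrolled across stages.

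Your fallback of letting $m_0$ depend on $\varepsilon_i$ changes the theorem. The workable route is to add explicit hypotheses:
\begin{enumerate}
\item a uniform positive lower bound on $\varepsilon_i$;
\item uniform upper bounds on $L_i$, $\operatorname{Per}(B_i(\varepsilon_i))$ and $\operatorname{Lip}(T_i^{-1})$;
\item a uniform lower bound on the relative area of small balls intersected with $A_i$.
\end{enumerate}
Under these, your Lipschitz-ball argument produces $m_0(\varepsilon)$ as you describe.
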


\begin{corollary}[Guaranteed shell injection]\label{cor:shell-injection}
Let the gate expand from $A_i$ to $A_{i+1}$ and define the shell
$S_{i+1}:=A_{i+1}\setminus A_i$ with relative measure
$p_S:=\mu(S_{i+1})/\mu(A_{i+1})\in(0,1]$.
Let $\widetilde S_{i+1}:=T_{i+1}^{-1}(S_{i+1})\subset[0,1]^2$ and set
$C_{1,S}\lesssim \operatorname{Lip}(T_{i+1}^{-1})\,\operatorname{Per}(S_{i+1})$.
If the next round uses $m_{i+1}$ two-dimensional Hammersley nodes
(so that $D^\ast(m)\le C_{\mathrm{Ham}}(\log m)^2/m$), it suffices to choose
\[
m_{i+1} \;\ge\; \kappa\,
\max\!\left\{
\frac{2}{(1-C_T)\,p_S},\;
\frac{4 C_{1,S}^{\,2}}{(1-C_T)^2 p_S^{\,2}}
\left[\log\!\left(\frac{2 C_{1,S}}{(1-C_T)\,p_S}\right)\right]^2
\right\},\qquad (\kappa\ge 1),
\]
which guarantees $\#(C_{i+1}\cap S_{i+1})\ge 1$.
\end{corollary}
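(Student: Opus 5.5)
This corollary is a direct specialization of Theorem~\ref{thm:resample-nonempty}, applied at stage $i+1$ with $B:=S_{i+1}$, so the plan is to check its hypotheses and then read off the sufficient sample size. There are three things to verify. First, $S_{i+1}\subset A_{i+1}$ is measurable: it is a difference of two sublevel sets of the continuous function $\tilde d$. Second, $S_{i+1}$ has finite perimeter; this is exactly item (A4) of Assumption~\ref{ass:reg-proxy-transport}. Third, the relative area $p_B$ appearing in Theorem~\ref{thm:resample-nonempty} is computed with respect to the active set $A_{i+1}$, and it coincides with $p_S=\mu(S_{i+1})/\mu(A_{i+1})$. We assume $p_S>0$, i.e. the gate genuinely expands; otherwise the claim is vacuous. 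Because the new points are transported by $T_{i+1}:[0,1]^2\to A_{i+1}$, the pullback $\widetilde S_{i+1}=T_{i+1}^{-1}(S_{i+1})$ plays the role of $\widetilde B$.

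The next step is to identify the constant $C_1$ for this particular $B$. In the proof of Theorem~\ref{thm:resample-nonempty}, $C_1$ comes from a BV/Koksma--Hlawka estimate for the indicator of $\widetilde B$. The estimate mollifies $\mathbf 1_{\widetilde B}$ at a scale $h$, bounds the variation by $\operatorname{Per}(\widetilde B)/h$, bounds the mollification error by the measure of an $h$-neighbourhood of $\partial\widetilde B$, which is $\lesssim h\operatorname{Per}(\widetilde B)$, and then optimizes over $h$. This produces the $\sqrt{D^\ast}$ rate. Since $T_{i+1}^{-1}$ is Lipschitz, $\operatorname{Per}(\widetilde S_{i+1})\lesssim \operatorname{Lip}(T_{i+1}^{-1})\operatorname{Per}(S_{i+1})$; the Jacobian factor is absorbed using bi-Lipschitz bounds. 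This gives $C_{1,S}\lesssim\operatorname{Lip}(T_{i+1}^{-1})\operatorname{Per}(S_{i+1})$, as stated.

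With these choices we invoke \eqref{eq:nonempty-45} and \eqref{eq:tildeB-lb}, which give $\#(C_{i+1}\cap S_{i+1})\ge m_{i+1}\bigl((1-C_T)p_S-C_{1,S}\sqrt{D^\ast}\bigr)_+$. Set $q:=(1-C_T)p_S$. The Hammersley bound on $m_{i+1}$ is then the final clause of Theorem~\ref{thm:resample-nonempty} with $(p_B,C_1)$ replaced by $(p_S,C_{1,S})$. Concretely, we need:
\begin{itemize}
\item[(a)] $C_{\mathrm{Ham}}(\log m)^2/m\le q^2/(4C_{1,S}^2)$, which makes the bracket at least $q/2$;
\item[(b)] $m\,q/2\ge1$.
\end{itemize}

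Condition (b) is the first term of the max. Condition (a) is the only mildly delicate step, because $(\log m)^2/m$ has to be inverted. Write $K:=4C_{1,S}^2/q^2$ and $L:=\log(2C_{1,S}/q)=\tfrac12\log K$. Taking $m=\kappa K L^2$ with $\kappa$ large enough (absorbing $C_{\mathrm{Ham}}$ into $\kappa$), one checks $\log m\lesssim L$, so that $(\log m)^2/m\lesssim 1/K$; this uses the monotonicity of $(\log m)^2/m$ for $m\ge e^2$. This is the second term of the max. Both conditions hold for all larger $m$, so any $m_{i+1}$ above the maximum works, and we conclude $\#(C_{i+1}\cap S_{i+1})\ge1$.
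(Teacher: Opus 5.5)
Your proposal is correct and follows the same route as the paper: the paper applies Theorem~\ref{thm:resample-nonempty} at stage $i+1$ with $B=S_{i+1}$ and transport $T_{i+1}$, then takes $m_{i+1}$ large enough to meet its sufficient conditions, and your checks via (A4) and $p_B=p_S$ make this explicit. Your inversion of $(\log m)^2/m$, which the paper omits, rightly shows that the argument needs $\kappa$ to be a sufficiently large constant depending on $C_{\mathrm{Ham}}$, not an arbitrary $\kappa\ge1$; note also that when $\log\bigl(2C_{1,S}/((1-C_T)p_S)\bigr)$ is small or nonpositive, your step $\log m\lesssim L$ degenerates, so split into cases (e.g.\ at $L=1$) and let the first term of the max, which forces $m_{i+1}\ge 2\kappa$, secure condition (a) in that regime.
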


\begin{theorem}[Dynamic injection/accumulation under monotone gating]\label{thm:near-to-far}
Let the training budget be fixed at $N$ and define
\[
\nu_i := \frac1N\sum_{x\in\mathcal S_i}\delta_x,\qquad
A_i\subset A_{i+1},\qquad S_{i+1}:=A_{i+1}\setminus A_i .
\]
Then for any measurable $B\subset A_{i+1}$, letting $p_B:=\mu(B)/\mu(A_{i+1})\in[0,1]$, the following statements hold.

\emph{(1) Decomposition identity, retained plus resampled.}
\begin{equation}\label{eq:decomp}
\nu_{i+1}(B)
=\frac{\#(\mathcal S_i\cap\Lambda_i\cap B)}{N}
+\frac{\#(C_{i+1}\cap B)}{N}.
\end{equation}

\emph{(2) Quantitative lower bound, strong form.}
There exists $C_1>0$ (depending only on geometric bounds for $T_{i+1}$ and $\operatorname{Per}(B)$) such that
\begin{equation}\label{eq:strongLB}
\nu_{i+1}(B)\ \ge\
\frac{\#(\mathcal S_i\cap\Lambda_i\cap B)}{N}
+ \frac{m_{i+1}}{N}\Bigl((1-C_T)\,p_B - C_1\sqrt{D^\ast(\{u_{i+1,j}\})}\Bigr)_+ .
\end{equation}

\emph{(3) Concise bound, conservative form.}
After absorbing constants,
\begin{equation}\label{eq:conciseLB}
\nu_{i+1}(B)\ \ge\
\frac{\#(\mathcal S_i\cap\Lambda_i\cap B)}{N}
+ \frac{m_{i+1}}{N}\Bigl(\tilde p_B - \widetilde C_1\sqrt{D^\ast(\{u_{i+1,j}\})}\Bigr)_+,
\qquad \tilde p_B:=(1-C_T)p_B .
\end{equation}

\emph{In particular (shell injection).}
For $B=S_{i+1}=A_{i+1}\setminus A_i$ (hence $\mathcal S_i\cap\Lambda_i\cap S_{i+1}=\varnothing$),
\begin{equation}\label{eq:shell-injection}
\nu_{i+1}(S_{i+1})
\ \ge\ \frac{m_{i+1}}{N}\Bigl((1-C_T)p_S - \widetilde C_1\sqrt{D^\ast(\{u_{i+1,j}\})}\Bigr)_+,
\qquad p_S:=\frac{\mu(S_{i+1})}{\mu(A_{i+1})} .
\end{equation}
Hence, if
\[
D^\ast(\{u_{i+1,j}\}) \le \Bigl(\tfrac{(1-C_T)p_S}{2\widetilde C_1}\Bigr)^{\!2},
\qquad m_{i+1} \ge \tfrac{2}{(1-C_T)p_S},
\]
then $\nu_{i+1}(S_{i+1}) \ge \dfrac{m_{i+1}}{2N}(1-C_T)p_S>0$, i.e., each round injects
a positive mass into the newly opened shell.
\end{theorem}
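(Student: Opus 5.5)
The argument mostly bookkeeps the R3 update and then reuses Theorem~\ref{thm:resample-nonempty}, applied at stage $i+1$ instead of stage $i$.

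\emph{Step 1: decomposition identity (1).} By construction $\mathcal S_{i+1}=R_i\cup C_{i+1}$, with $R_i=\mathcal S_i\cap\Lambda_i$ and the fresh nodes $C_{i+1}$ drawn via $T_{i+1}$ into $A_{i+1}$. Here the paper's indexing of the resampled set shifts to $C_{i+1}$; I will make this explicit. I treat the union as disjoint, i.e. as a multiset, which is how Proposition~\ref{prop:release-budget} counts the budget: resampled nodes are almost surely distinct from retained points, and otherwise one counts with multiplicity. Then for any measurable $B$,
\[
\#(\mathcal S_{i+1}\cap B)=\#(R_i\cap B)+\#(C_{i+1}\cap B).
\]
By Proposition~\ref{prop:release-budget}, $|\mathcal S_{i+1}|=N$, so dividing by $N$ gives \eqref{eq:decomp}.

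\emph{Step 2: strong lower bound (2).} I apply Theorem~\ref{thm:resample-nonempty} at stage $i+1$ with the set $B\subset A_{i+1}$. This requires $B$ to have finite perimeter, which I add as a standing hypothesis consistent with (A4). Inequality \eqref{eq:nonempty-45} gives
\[
\#(C_{i+1}\cap B)\ge m_{i+1}\bigl(|\widetilde B|-C_1\sqrt{D^\ast}\bigr)_+ ,
\]
and \eqref{eq:tildeB-lb} gives $|\widetilde B|\ge(1-C_T)p_B$ with $p_B=\mu(B)/\mu(A_{i+1})$. The map $t\mapsto(t-c)_+$ is monotone, so I may replace $|\widetilde B|$ by $(1-C_T)p_B$. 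Inserting this into \eqref{eq:decomp} yields \eqref{eq:strongLB}.

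The one point to verify is \eqref{eq:tildeB-lb} itself, if it is not simply taken from the earlier theorem. By change of variables, $|\widetilde B|=\int_B J_{T_{i+1}}\,dx$. The definition of $C_T$ gives $J_{T_{i+1}}\ge(1-C_T)/\mu(A_{i+1})$ a.e. on $A_{i+1}$, and integrating over $B$ gives the claim.

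\emph{Step 3: conservative form (3).} Setting $\widetilde C_1:=C_1$ (or any larger constant, by monotonicity) and $\tilde p_B:=(1-C_T)p_B$, \eqref{eq:conciseLB} is a rewriting of \eqref{eq:strongLB}.

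\emph{Step 4: shell injection.} Take $B=S_{i+1}$. Since $\mathcal S_i\subset A_i$ and $S_{i+1}\cap A_i=\varnothing$, the retained term vanishes, which gives \eqref{eq:shell-injection}. Under the stated discrepancy condition, $\widetilde C_1\sqrt{D^\ast}\le\tfrac12(1-C_T)p_S$, so the bracket is at least $\tfrac12(1-C_T)p_S$. This gives
\[
\nu_{i+1}(S_{i+1})\ge\frac{m_{i+1}}{2N}(1-C_T)p_S .
\]
Positivity follows from $p_S>0$ and $m_{i+1}\ge 1$, which holds because $m_{i+1}\ge 2/((1-C_T)p_S)>0$ and $C_T<1$.

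\emph{Main obstacle.} The only delicate point is uniformity of the constants. The constant $C_1$ from Theorem~\ref{thm:resample-nonempty} depends on $\operatorname{Per}(B)$ and on $\operatorname{Lip}(T_{i+1}^{-1})$, so the statement's ``depending only on geometric bounds'' must be read as exactly this dependence. I will also remark that for the shell the relevant constant is $C_{1,S}$ from Corollary~\ref{cor:shell-injection}, so $\widetilde C_1$ there should be taken at least $C_{1,S}$. Everything else is direct substitution.
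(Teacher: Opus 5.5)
Your proposal is correct and follows essentially the same route as the paper: the disjoint decomposition $\mathcal S_{i+1}=(\mathcal S_i\cap\Lambda_i)\cup C_{i+1}$, then Theorem~\ref{thm:resample-nonempty} applied on $A_{i+1}$ together with $|\widetilde B|\ge(1-C_T)p_B$, absorption of constants, and the vanishing retained term on the shell. Your extra remarks are accurate refinements rather than a different argument: the explicit index shift to $C_{i+1}$, the finite-perimeter hypothesis on $B$ that is implicitly needed to invoke Theorem~\ref{thm:resample-nonempty}, and the direct pointwise bound $J_{T_{i+1}}\ge(1-C_T)/\mu(A_{i+1})$ in place of the paper's two-sided deviation estimate.
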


\begin{remark}[Where proofs and stronger statements appear]\label{rem:proofs-location}
Appendix~\ref{app:proofs} contains proofs and refined variants of Theorems~\ref{thm:resample-nonempty} to \ref{thm:near-to-far}, including explicit constants and conditions under proxy drift.
\end{remark}

		\section*{Data availability}
        The scripts used to generate the benchmark configurations analysed in this study are available in the public GitHub repository \url{https://github.com/linanci123/Paper-PINN}.

		\section*{Code availability}
        The training and post-processing code used in this study is available in the public GitHub repository \url{https://github.com/linanci123/Paper-PINN}.

% =========================
% Appendix: Proofs
% =========================
\appendix
\section{Proofs for Section~\ref{sec:methods:theory}}\label{app:proofs}

% -------------------------
\subsection{Proof of Theorem~\ref{thm:retain-accumulation}}\label{app:proof-accum}

\begin{proof}
We prove the three claims in turn.
The hit property~\eqref{eq:hit} follows immediately from Theorem~\ref{thm:resample-nonempty}.
In particular, consider a good round $i$ and a point $x\in C_i\cap B_i(\varepsilon_i)$. By the definition of
$B_i(\varepsilon_i)$ and $\widetilde U_i^\star$, we have
\[
\widetilde U_i(x)\ \ge\ \widetilde U_i^\star-\varepsilon_i\ \ge\ \tau_i,
\]
and thus $x\in\Lambda_i\subset \mathcal S_{i+1}$. Next, we verify no-early-exit. For the same good round $i$ and point $x\in C_i\cap B_i(\varepsilon_i)$, let $j\ge i$ be any round before ``success'', i.e.,
$\overline U_j<\widetilde U_i^\star-\varepsilon$. We claim that $x$ cannot be removed at round $j$ provided either the proxy is fixed across rounds
($\widetilde U_j\equiv \widetilde U_i$) and $\tau_j\le \widetilde U_i^\star-\varepsilon_i$, or the inter-round drift is small in the sense that
\[
\inf_{y\in B_i(\varepsilon_i)} \widetilde U_j(y)\ \ge\ \widetilde U_i^\star-\varepsilon_i
\quad\text{and}\quad
\tau_j\ \le\ \widetilde U_i^\star-\varepsilon_i .
\]
Indeed, in the first case we have \[\widetilde U_j(x)=\widetilde U_i(x)\ge \widetilde U_i^\star-\varepsilon_i\ge\tau_j, \]
while in the second case the drift condition gives $\widetilde U_j(x)\ge \widetilde U_i^\star-\varepsilon_i\ge\tau_j$ directly.
In either situation $x\in\Lambda_j$, and therefore $x\in\Lambda_j\subset \mathcal S_{j+1}$,
which is precisely the no-early-exit property.
Finally, we show long-term accumulation. Assume that the set of good rounds
$I_\varepsilon:=\{i:\ m_i\ge m_0(\varepsilon)\}$ is infinite.
Suppose, for contradiction, that
\[
\liminf_{n\to\infty} \overline U_n\ <\ \widetilde U^\star-\varepsilon .
\]
Then there exists an infinite subsequence $\{j_\ell\}$ such that
$\overline U_{j_\ell}<\widetilde U^\star-\varepsilon$ for all $\ell$.
We choose an increasing infinite subsequence $\{i_k\}\subset I_\varepsilon$ with
$i_1<j_1<i_2<j_2<\cdots$.
By the hit property, we can select $x_k\in C_{i_k}\cap B_{i_k}(\varepsilon_{i_k})$ for each $k$.
Since none of the rounds $j_\ell$ is successful, we can repeatedly apply no-early-exit to conclude that,
for every $r\ge1$,
\[
\{x_1,\dots,x_r\}\ \subseteq\ \mathcal S_{j_r+1}\cap \Lambda_{j_r}.
\]
Consequently we obtain $|\mathcal S_{j_r+1}\cap \Lambda_{j_r}|\ge r$, which contradicts the fixed budget
$|\mathcal S_{j_r+1}|=N$ once we take $r>N$.
Therefore, the contradiction shows that
\[
\liminf_{n\to\infty} \overline U_n\ \ge\ \widetilde U^\star-\varepsilon .
\]
\end{proof}

% -------------------------
\subsection{Proof of Theorem~\ref{thm:resample-nonempty}}\label{app:proof-resample}

\begin{proof}
Let $\widetilde B:=T_i^{-1}(B)\subset[0,1]^2$. We first relate $|\widetilde B|$ to $p_B$.
By change of variables, for any measurable $E\subset A_i$, we have
\[
|T_i^{-1}(E)|=\int_E J_{T_i}(x)\,dx,
\qquad\text{and}\qquad
\int_{A_i}J_{T_i}(x)\,dx=1 .
\]
Taking $E=B$ and writing $p_B=\mu(B)/\mu(A_i)$ yield
\[
\bigl||\widetilde B|-p_B\bigr|
= \Bigl|\int_{B}\Bigl(J_{T_i}(x)-\tfrac{1}{\mu(A_i)}\Bigr)\,dx\Bigr|
\le C_T\,p_B ,
\]
which implies
\[
|\widetilde B|\ \ge\ (1-C_T)\,p_B .
\]
This proves \eqref{eq:tildeB-lb}.
Next we estimate the empirical mass of $\widetilde B$ under the points $\{u_{i,j}\}_{j=1}^{m_i}$.
Since $B$ has finite perimeter and $T_i$ is bi-Lipschitz, $\widetilde B$ has finite perimeter as well.
For $\delta\in(0,1)$, we take $f_\delta\in C^1([0,1]^2)$ to be a BV-mollification of $\mathbf 1_{\widetilde B}$
satisfying
\[
0\le f_\delta\le \mathbf 1_{\widetilde B},\qquad
\int_{[0,1]^2}f_\delta = |\widetilde B|-\mathcal O(\delta\,\operatorname{Per}(\widetilde B)),\qquad
V_{HK}(f_\delta)\le C\,\frac{\operatorname{Per}(\widetilde B)}{\delta}.
\]
Applying the Koksma-Hlawka inequality gives
\[
\Bigl|\frac1{m_i}\sum_{j=1}^{m_i} f_\delta(u_{i,j})
-\int_{[0,1]^2}\! f_\delta\Bigr|
\le V_{HK}(f_\delta)\,D^{\ast}(\{u_{i,j}\})
\le C\,\frac{\operatorname{Per}(\widetilde B)}{\delta}\,D^{\ast}(\{u_{i,j}\}).
\]
Using $f_\delta\le \mathbf 1_{\widetilde B}$ together with
$\int f_\delta=|\widetilde B|-\mathcal O(\delta\,\operatorname{Per}(\widetilde B))$, we obtain
\[
\frac1{m_i}\sum_{j=1}^{m_i}\mathbf 1_{\widetilde B}(u_{i,j})
\ \ge\ |\widetilde B|
-C_2\,\operatorname{Per}(\widetilde B)\,\delta
-C_1\,\frac{\operatorname{Per}(\widetilde B)}{\delta}\,D^{*}(\{u_{i,j}\}).
\]
Choosing $\delta=\sqrt{D^{*}(\{u_{i,j}\})}$ and
absorbing $\operatorname{Per}(\widetilde B)$ into $C_1$ yield
\[
\frac1{m_i}\sum_{j=1}^{m_i}\mathbf 1_{\widetilde B}(u_{i,j})
\ \ge\ |\widetilde B| - C_1\sqrt{D^{*}(\{u_{i,j}\})}.
\]
Multiplying by $m_i$ gives \eqref{eq:nonempty-45}.
Finally, we derive the ``at least one hit'' criterion. From \eqref{eq:tildeB-lb} we have
$|\widetilde B|\ge (1-C_T)p_B$. If
\[
D^{*}(\{u_{i,j}\})\le \frac{(1-C_T)^2p_B^2}{4C_1^{\,2}}
\quad\text{and}\quad
m_i\ge \frac{2}{(1-C_T)p_B},
\]
then \eqref{eq:nonempty-45} implies $\#(C_i\cap B)\ge1$.
The stated Hammersley-node condition follows by inserting the discrepancy estimate
$D^{*}(m)\le C_{\mathrm{Ham}}(\log m)^2/m$.
\end{proof}

% -------------------------
\subsection{Proof of Corollary~\ref{cor:shell-injection}}\label{app:proof-shell}

\begin{proof}
We apply Theorem~\ref{thm:resample-nonempty} with $B=S_{i+1}=A_{i+1}\setminus A_i$ and the next-round
transport map $T_{i+1}$. We then choose $m_{i+1}$ sufficiently large so that the sufficient conditions
in Theorem~\ref{thm:resample-nonempty} are satisfied. The conclusion of Theorem~\ref{thm:resample-nonempty} yields
\[
\#(C_{i+1}\cap S_{i+1})\ge 1,
\]
which proves the statement.
\end{proof}

% -------------------------
\subsection{Proof of Proposition~\ref{prop:release-budget}}\label{app:proof-release}

\begin{proof}
Recall that $R_i=\mathcal S_i\cap\Lambda_i$ and that $C_i$ is chosen to have cardinality
$|C_i|=N-|R_i|$. Consequently, we have
\(
\mathcal S_{i+1}=R_i\cup C_i
\)
and the union is disjoint by construction. Therefore, we conclude
\[
|\mathcal S_{i+1}|=|R_i|+|C_i|=|R_i|+(N-|R_i|)=N,
\]
as desired.
\end{proof}

% -------------------------
\subsection{Proof of Theorem~\ref{thm:near-to-far}}\label{app:proof-dynamic}

\begin{proof}
We begin by proving \eqref{eq:decomp}. By construction, the next-round set is the disjoint union
\[
\mathcal S_{i+1}=(\mathcal S_i\cap\Lambda_i)\cup C_{i+1}.
\]
Therefore, for any $B\subset A_{i+1}$, we have
\[
\#(\mathcal S_{i+1}\cap B)
= \#(\mathcal S_i\cap\Lambda_i\cap B) + \#(C_{i+1}\cap B).
\]
Dividing by $N$ gives \eqref{eq:decomp}.
Next, we derive the lower bound \eqref{eq:strongLB}. Applying Theorem~\ref{thm:resample-nonempty} on $A_{i+1}$ and, for a given $B\subset A_{i+1}$, defining
\(
\widetilde B:=T_{i+1}^{-1}(B)\subset[0,1]^2,
\)
the Theorem~\ref{thm:resample-nonempty} yields
\[
\#(C_{i+1}\cap B)
\ \ge\ m_{i+1}\Bigl(|\widetilde B|-C_1\sqrt{D^{*}(\{u_{i+1,j}\})}\Bigr)_+ .
\]
On the other hand, the equal-area deviation estimate in the proof of Theorem~\ref{thm:resample-nonempty} gives
\[
|\widetilde B|\ge (1-C_T)p_B.
\]
Substituting this into the previous inequality, dividing by $N$, and combining with \eqref{eq:decomp} yields \eqref{eq:strongLB}.
Finally, \eqref{eq:conciseLB} follows by absorbing fixed multiplicative factors into a new constant $\widetilde C_1$. For the shell case, take $B=S_{i+1}=A_{i+1}\setminus A_i$. Since
\[
\mathcal S_i\cap\Lambda_i\cap S_{i+1}=\varnothing,
\]
\eqref{eq:shell-injection} follows directly from \eqref{eq:decomp}.
\end{proof}

%%=============================================%%
%% For submissions to Nature Portfolio Journals %%
%% please use the heading ``Supplementary''.   %%
%%=============================================%%

%%=============================================================%%
%% Sample for another appendix section			       %%
%%=============================================================%%

%% \section{Example of another appendix section}\label{secA2}%
%% Appendices may be used for helpful, supporting or essential material that would otherwise 
%% clutter, break up or be distracting to the text. Appendices can consist of sections, figures, 
%% tables and equations etc.

%%===========================================================================================%%
%% If you are submitting to one of the Nature Portfolio journals, using the eJP submission   %%
%% system, please include the references within the manuscript file itself. You may do this  %%
%% by copying the reference list from your .bbl file, paste it into the main manuscript .tex %%
%% file, and delete the associated \verb+\bibliography+ commands.                            %%
%%===========================================================================================%%

\bibliography{sn-bibliography}% common bib file + added references
%% if required, the content of .bbl file can be included here once bbl is generated
%%\input sn-article.bbl
% ============================================================
% Supplementary Information
% Place after main text (and before/after references depending on template)
% ============================================================
\section*{Acknowledgements}
		Wenju Zhao was supported by the National Key R\&D Program of China (No.~2023YFA1008903-3) and the Natural Science Foundation of Shandong Province (No.~ZR2023ZD38). Zhiwen Zhang was supported by the National Natural Science Foundation of China (Project No.~92470103), Hong Kong RGC Grants 17304324 and 17300325, the Seed Funding Programme for Basic Research (HKU), and the Hong Kong RGC Research Fellow Scheme 2025.

\section*{Author contributions}
		A.L., Z.Z., and W.J.Z. conceived and designed the study. A.L. carried out the implementation and data generation. All authors contributed to data analysis and interpretation, drafted the manuscript, and approved the final version.
\section*{Competing interests}
		The authors declare no competing interests.

\section*{Additional information}
		\textbf{Correspondence and requests for materials} should be addressed to 
Zhiwen Zhang and Wenju Zhao.

\clearpage

\section*{Supplementary Information}

% --- Numbering for supplementary items ---
\setcounter{figure}{0}
\setcounter{table}{0}
\renewcommand{\thefigure}{S\arabic{figure}}
\renewcommand{\thetable}{S\arabic{table}}

% ============================================================
\begin{table}[t]
	\centering
	\caption{Default hyperparameters used in all experiments. Network architecture, optimizer settings, collocation budget, quadrature, gating schedule, R3 policy, and UQ-proxy parameters.}
	\label{tab:ed-defaults}
	\begin{tabular}{@{}ll@{}}
		\toprule
		\textbf{Component} & \textbf{Default setting} \\
		\midrule
		Network & 3 hidden layers, width 128, \textsc{CELU} activation \\
		Optimizer & Adam ($10^{-3}$, $\beta_1{=}0.9$, $\beta_2{=}0.999$); LR$\times 0.9$/10k iters \\
		Collocation budget & $N=10^4$ points, updated by R3 \\
		Quadrature & QMC (Hammersley) for bulk terms \\
		Gating & $g_{\gamma_i}(x)=\sigma(\alpha(\gamma_i-\tilde d(x)))$; $\alpha{=}5.0$, $\gamma_0{=}-0.5$, $\Delta_{\max}{=}0.05$ \\
		R3 policy & Retain fraction $\rho = 0.5$ \\
		UQ proxy & Var.\ of $\|\nabla y_\theta\|_F$ under Gaussian probes ($m_{\mathrm{uq}}{=}16$, $\rho_{\mathrm{uq}}{=}0.01$) \\
		\bottomrule
	\end{tabular}
\end{table}

% ============================================================
\begin{figure}[t]
	\centering
	\begin{subfigure}{0.24\linewidth}
		\centering
		\includegraphics[width=\linewidth]{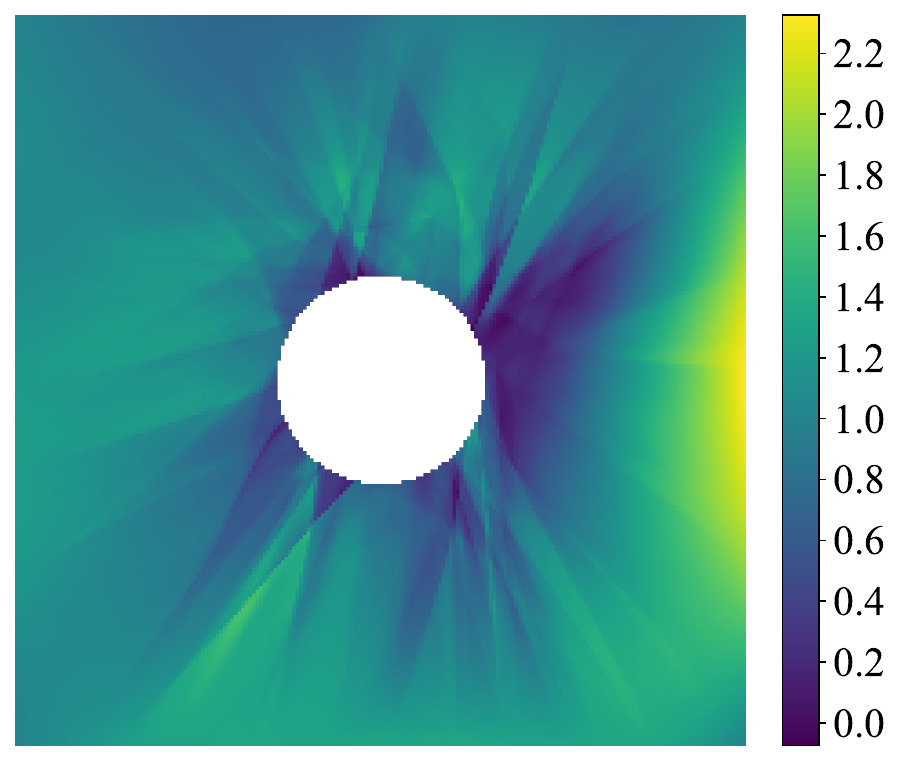}
		\caption{$P=100$}
	\end{subfigure}\hfill
	\begin{subfigure}{0.24\linewidth}
		\centering
		\includegraphics[width=\linewidth]{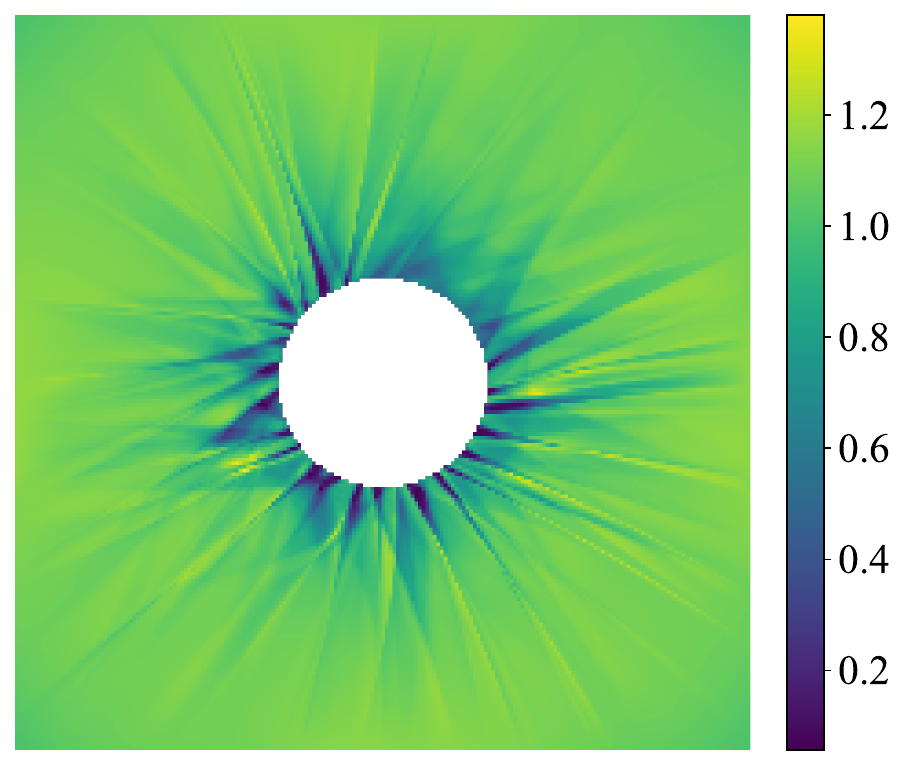}
		\caption{$P=400$}
	\end{subfigure}\hfill
	\begin{subfigure}{0.24\linewidth}
		\centering
		\includegraphics[width=\linewidth]{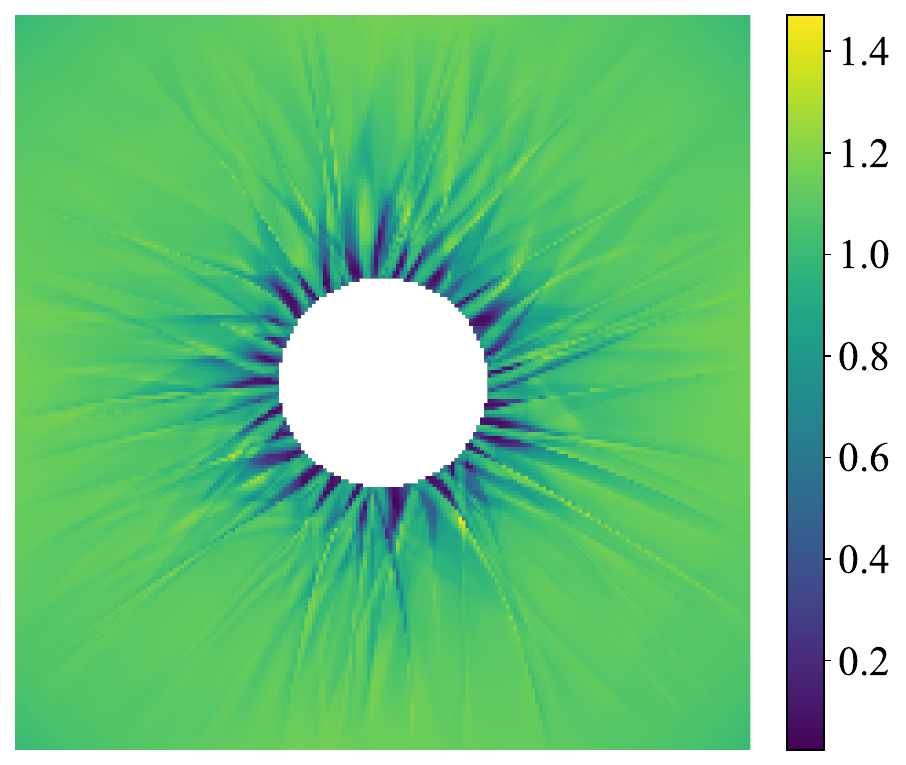}
		\caption{$P=1600$}
	\end{subfigure}\hfill
	\begin{subfigure}{0.24\linewidth}
		\centering
		\includegraphics[width=\linewidth]{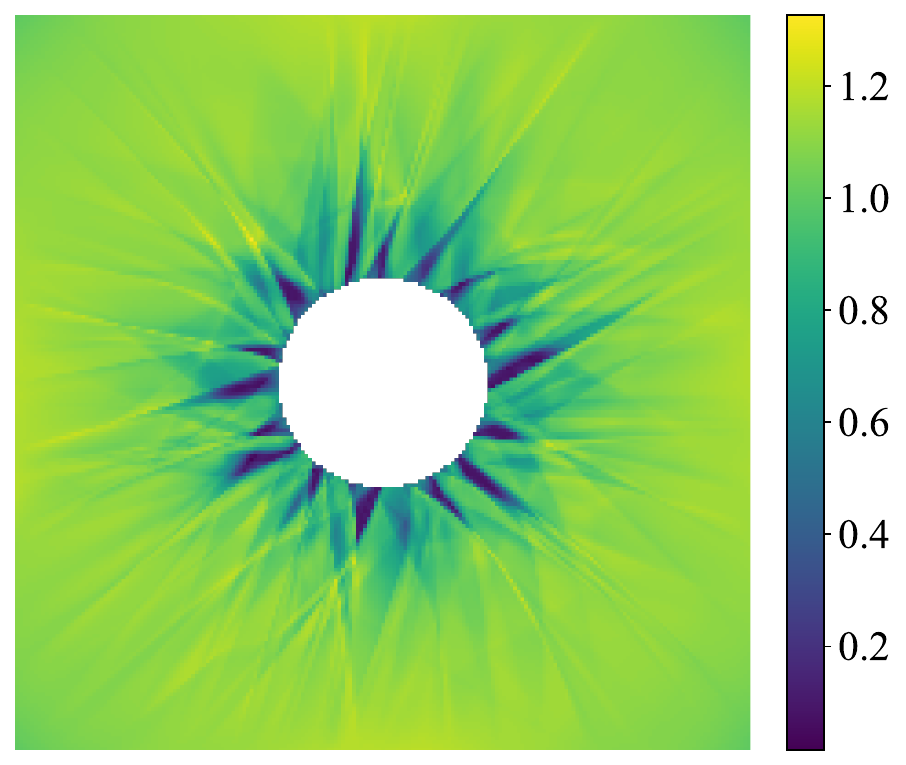}
		\caption{$P=3200$}
	\end{subfigure}
\caption{Effect of the resampling period $P$ in the single-cell non-regularized regime ($\varepsilon_0=0$). Each panel shows the Jacobian determinant $J=\det F$ (with $F=\nabla y_\theta$).}
	\label{fig:ed-single-eps0-period}
\end{figure}
% ============================================================
\begin{figure}[t]
	\centering
	% --- Subfigure 1: hl=3 ---
	\begin{subfigure}[b]{0.24\linewidth}
		\centering
		\includegraphics[width=\linewidth]{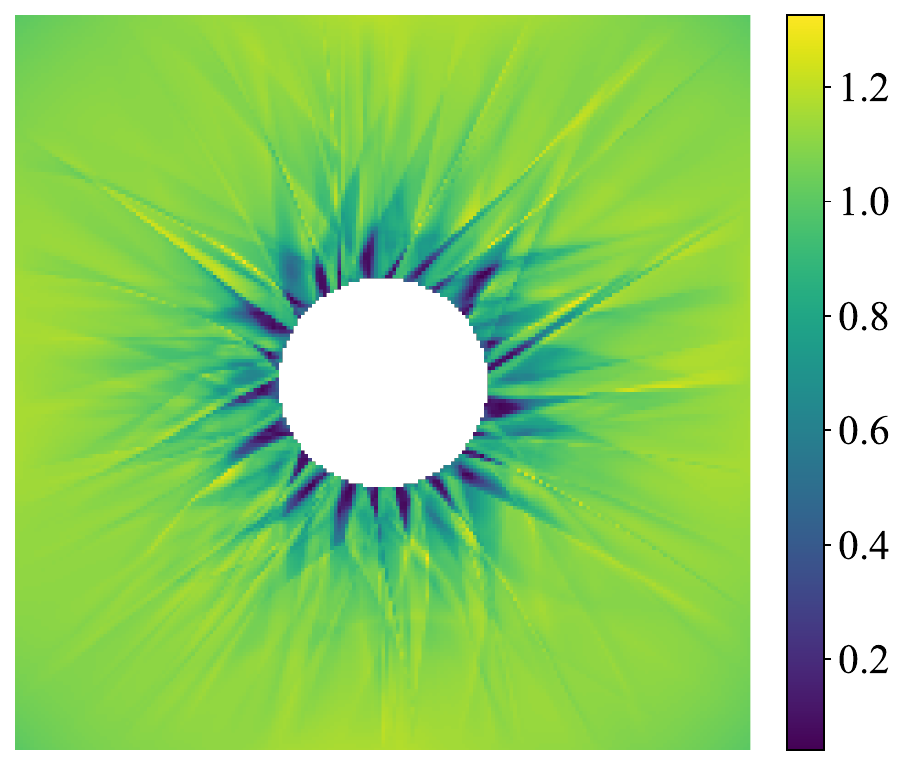}
		\caption{$h_l=3$}
		\label{fig:hl3}
	\end{subfigure}
	\hfill
	% --- Subfigure 2: hl=5 ---
	\begin{subfigure}[b]{0.24\linewidth}
		\centering
		\includegraphics[width=\linewidth]{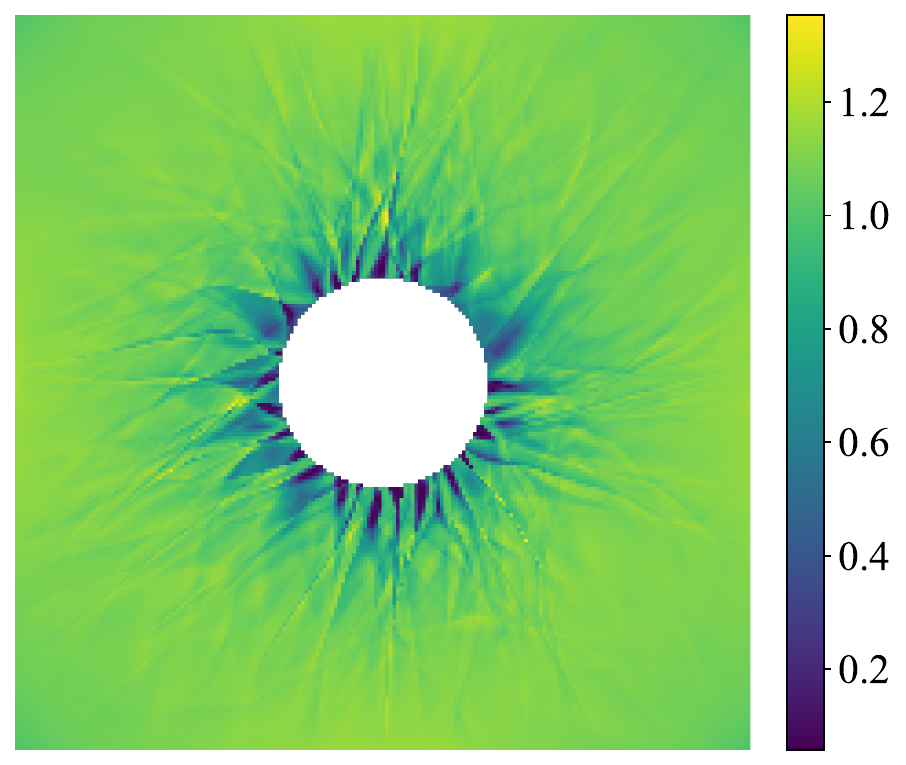}
		\caption{$h_l=5$}
		\label{fig:hl5}
	\end{subfigure}
	\hfill
	% --- Subfigure 3: hl=7 ---
	\begin{subfigure}[b]{0.24\linewidth}
		\centering
		\includegraphics[width=\linewidth]{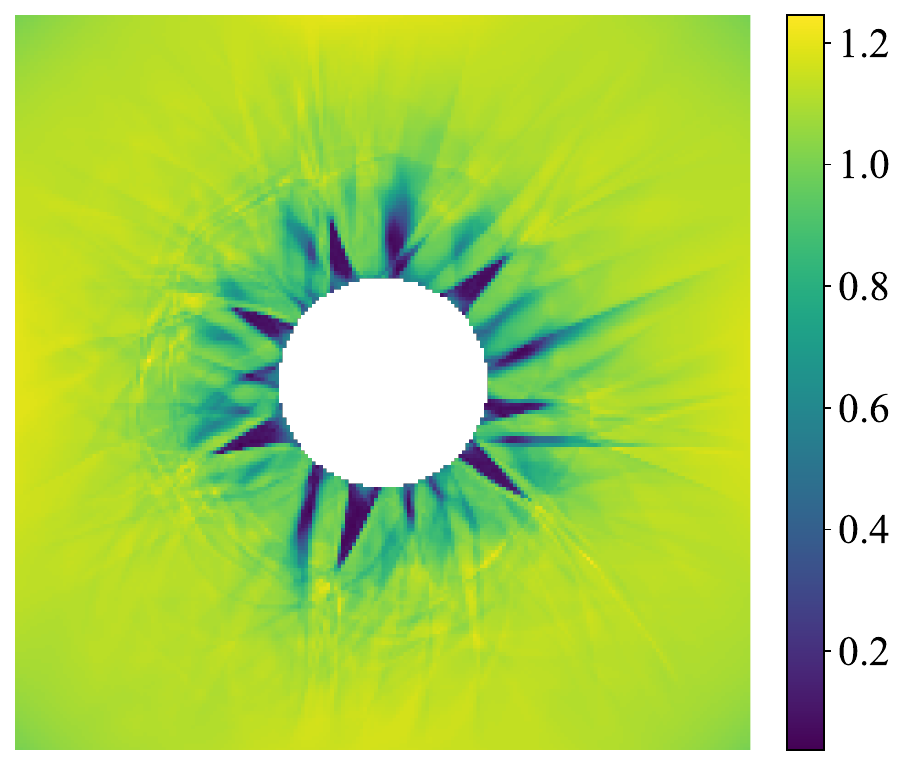}
		\caption{$h_l=7$}
		\label{fig:hl7}
	\end{subfigure}
	\hfill
	% --- Subfigure 4: hl=9 ---
	\begin{subfigure}[b]{0.24\linewidth}
		\centering
		\includegraphics[width=\linewidth]{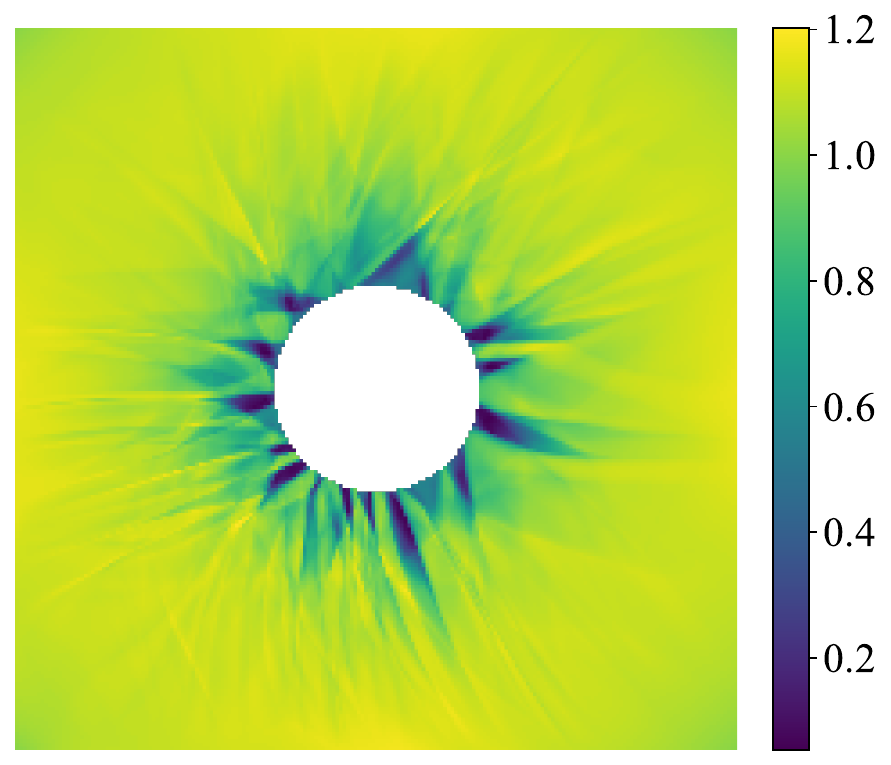}
		\caption{$h_l=9$}
		\label{fig:hl9}
	\end{subfigure}
	
\caption{Effect of network depth $h_l$ in the single-cell non-regularized regime ($\varepsilon_0=0$). Each panel shows the Jacobian determinant $J=\det F$ (with $F=\nabla y_\theta$) for different hidden-layer depths.}
	\label{fig:ed-single-eps0-depth}
\end{figure}

% ============================================================
\begin{figure}[t]
	\centering
	% --- Subfigure 1: width=64 ---
	\begin{subfigure}[b]{0.32\linewidth}
		\centering
		\includegraphics[width=\linewidth]{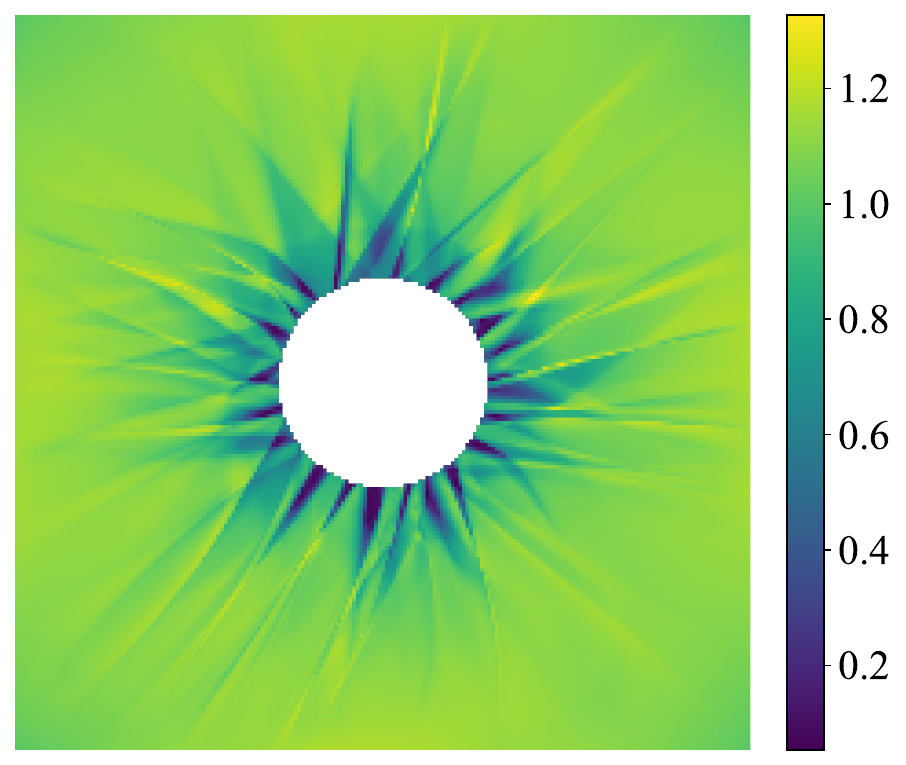}
		\caption{$width=64$}
		\label{fig:width64}
	\end{subfigure}
	\hfill
	% --- Subfigure 2: width=128 ---
	\begin{subfigure}[b]{0.32\linewidth}
		\centering
		\includegraphics[width=\linewidth]{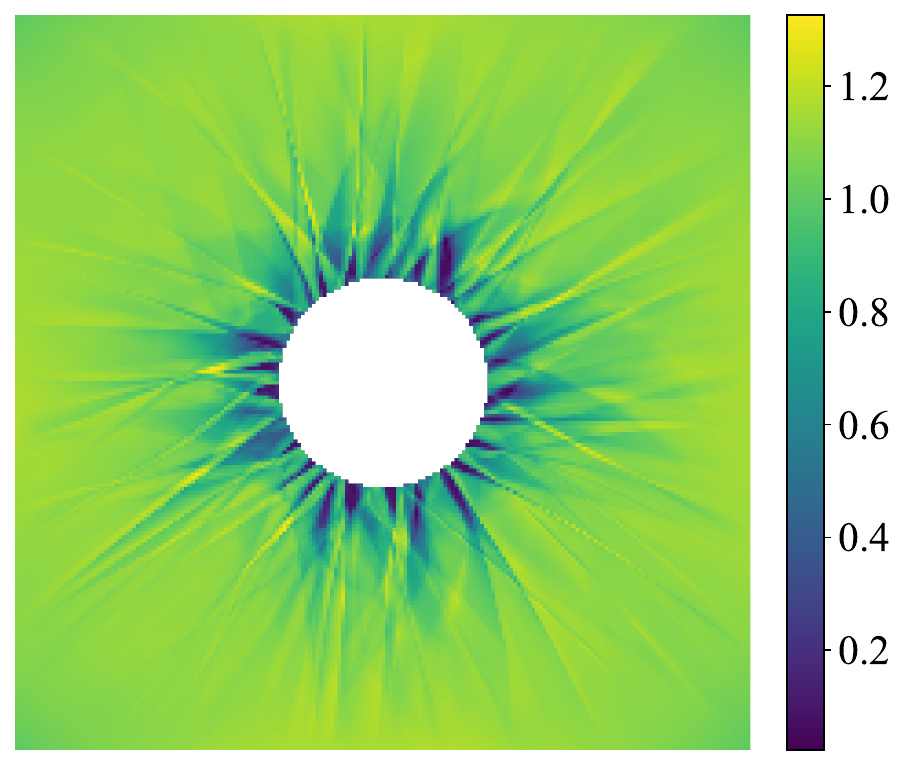}
		\caption{$width=128$}
		\label{fig:width128}
	\end{subfigure}
	\hfill
	% --- Subfigure 3: width=256 ---
	\begin{subfigure}[b]{0.32\linewidth}
		\centering
		\includegraphics[width=\linewidth]{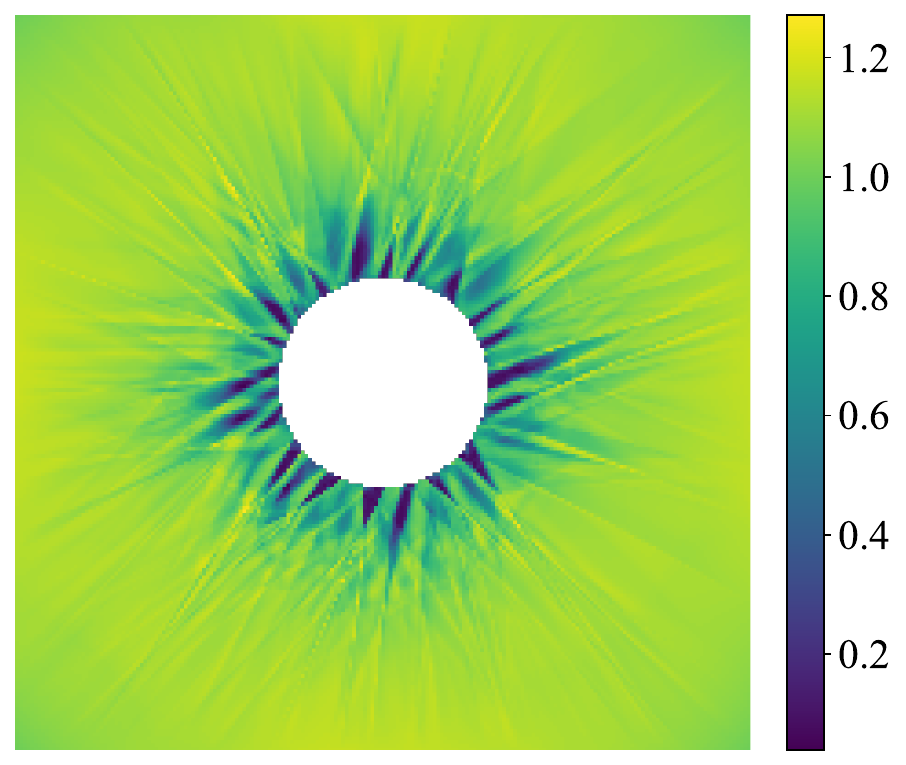}
		\caption{$width=256$}
		\label{fig:width256}
	\end{subfigure}
	
\caption{Effect of network width in the single-cell non-regularized regime ($\varepsilon_0=0$). Each panel shows the Jacobian determinant $J=\det F$ (with $F=\nabla y_\theta$) for different hidden-layer widths.}
	\label{fig:ed-single-eps0-width}
\end{figure}

% ============================================================
\begin{figure}[t]
	\centering
	% --- Subfigure 1: uq_m=8 ---
	\begin{subfigure}[b]{0.24\linewidth}
		\centering
		\includegraphics[width=\linewidth]{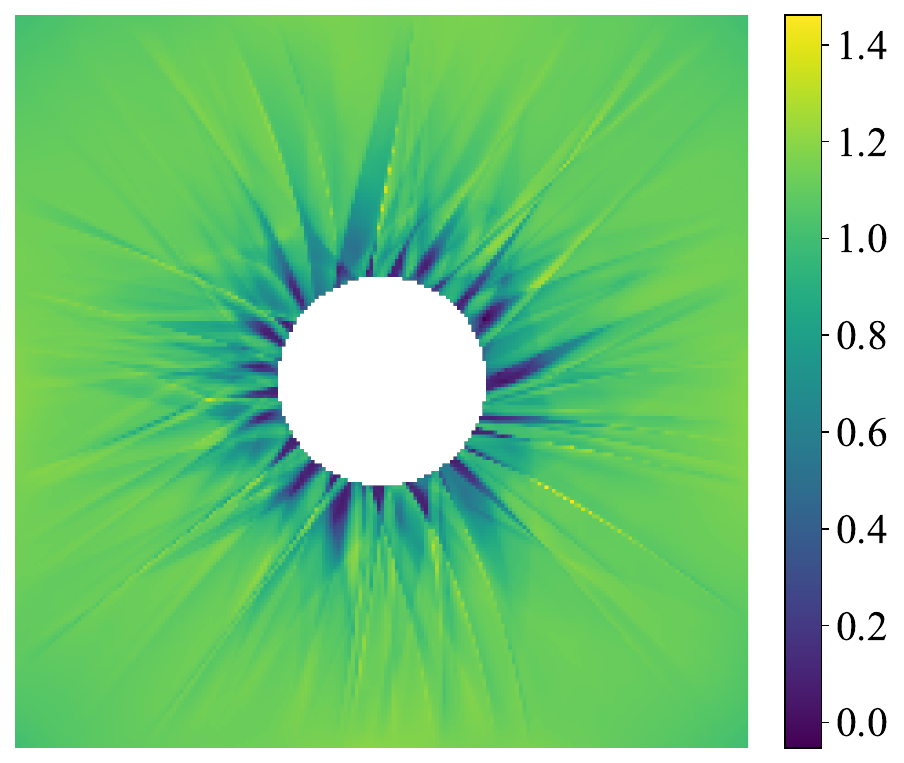}
		\caption{$uq_{m}=8$}
		\label{fig:uqm8}
	\end{subfigure}
	\hfill
	% --- Subfigure 2: uq_m=16 ---
	\begin{subfigure}[b]{0.24\linewidth}
		\centering
		\includegraphics[width=\linewidth]{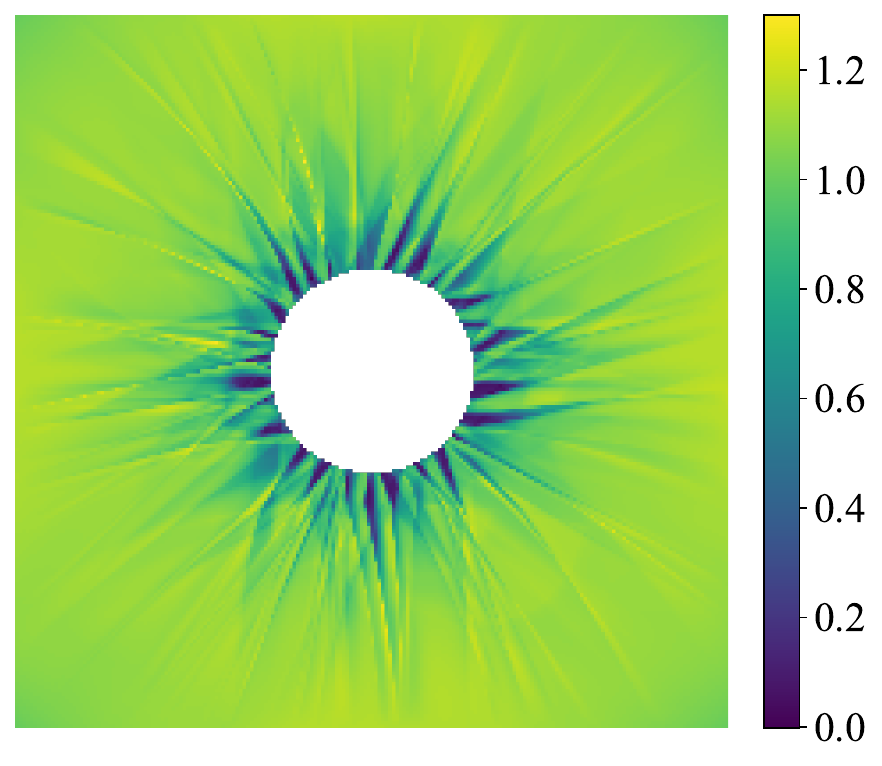}
		\caption{$uq_{m}=16$}
		\label{fig:uqm16}
	\end{subfigure}
	\hfill
	% --- Subfigure 3: uq_m=32 ---
	\begin{subfigure}[b]{0.24\linewidth}
		\centering
		\includegraphics[width=\linewidth]{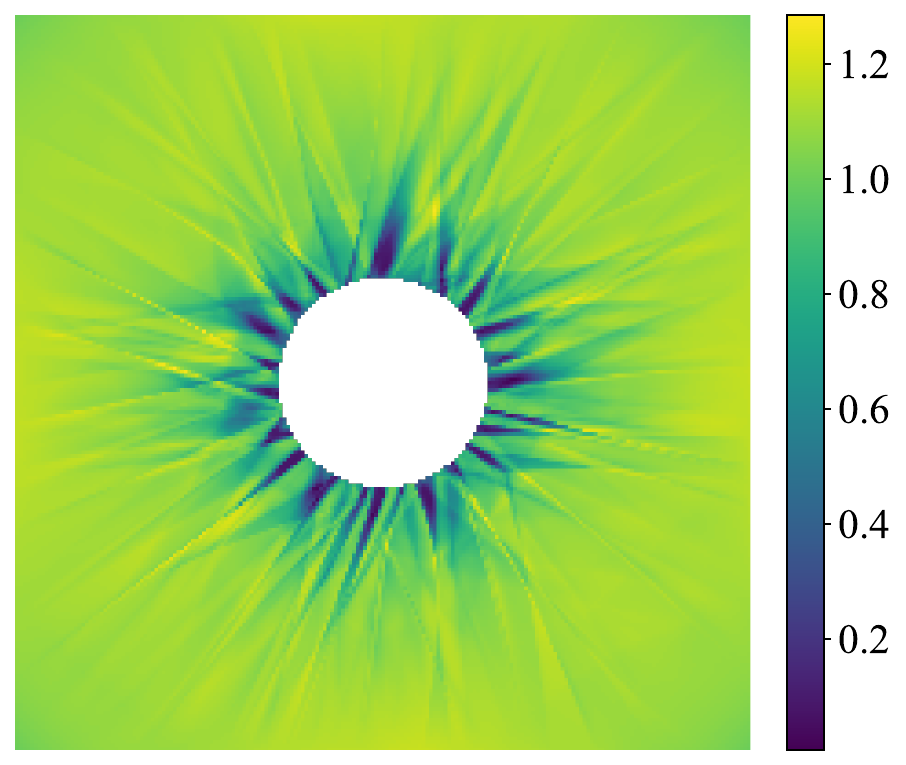}
		\caption{$uq_{m}=32$}
		\label{fig:uqm32}
	\end{subfigure}
	\hfill
	% --- Subfigure 4: uq_m=64 ---
	\begin{subfigure}[b]{0.24\linewidth}
		\centering
		\includegraphics[width=\linewidth]{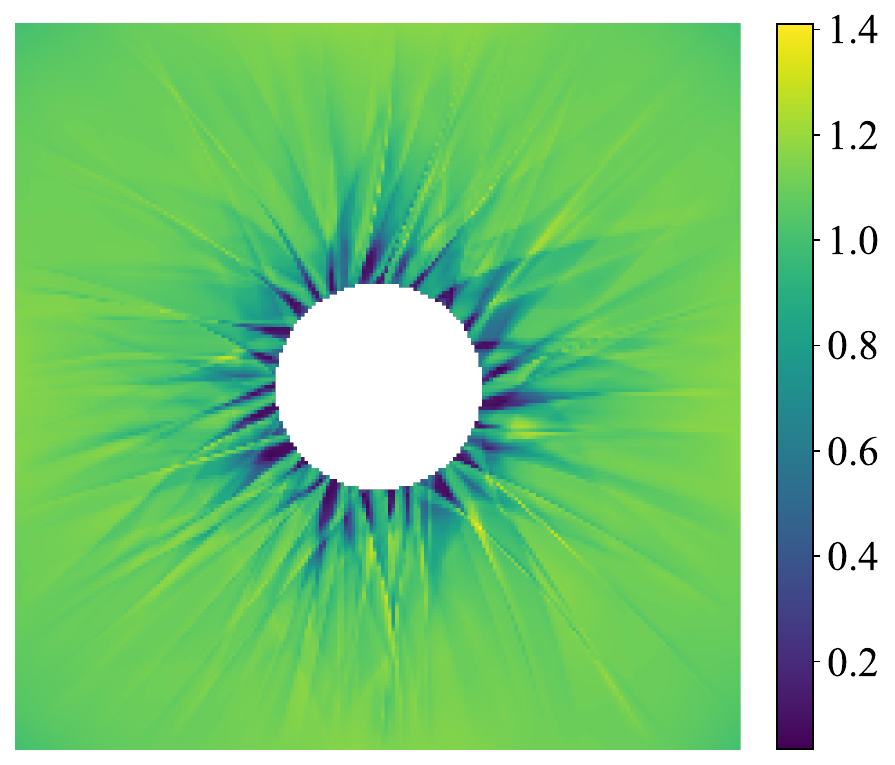}
		\caption{$uq_{m}=64$}
		\label{fig:uqm64}
	\end{subfigure}
	
	\caption{Effect of the UQ probe count $m_{\mathrm{uq}}$ in the single-cell non-regularized regime ($\varepsilon_0=0$). Each panel shows the Jacobian determinant $J=\det F$ (with $F=\nabla y_\theta$).}
	\label{fig:ed-single-eps0-uqm}
\end{figure}

% ============================================================
\begin{figure}[t]
	\centering
	% --- Subfigure 1: rho=0.01 ---
	\begin{subfigure}[b]{0.24\linewidth}
		\centering
		\includegraphics[width=\linewidth]{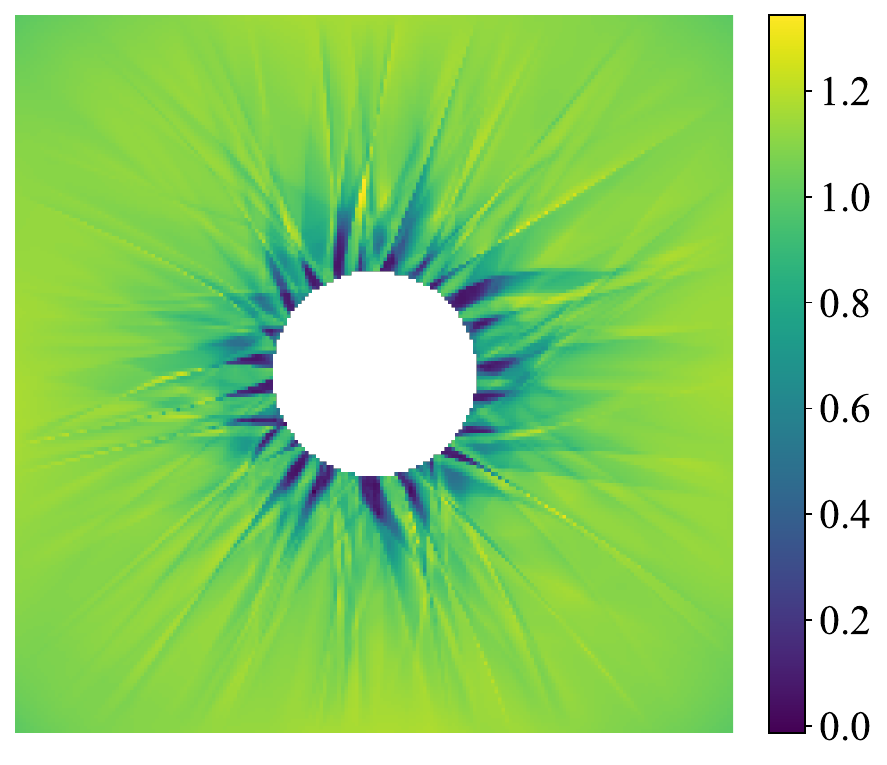}
		\caption{$\rho_{uq}=0.01$}
		\label{fig:rho0.01}
	\end{subfigure}
	\hfill
	% --- Subfigure 2: rho=0.02 ---
	\begin{subfigure}[b]{0.24\linewidth}
		\centering
		\includegraphics[width=\linewidth]{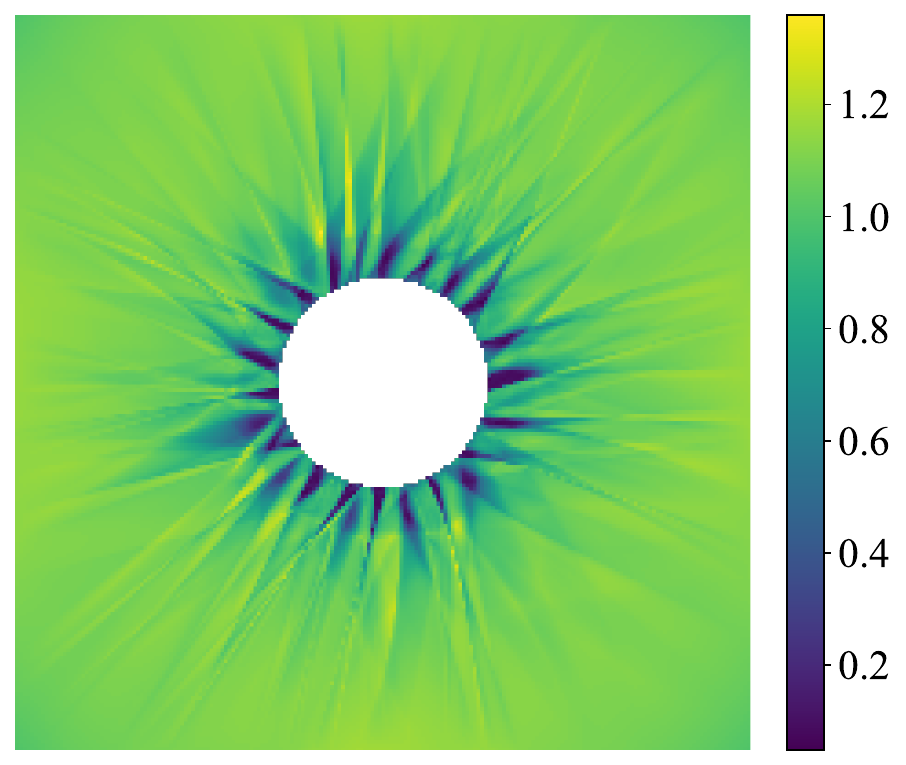}
		\caption{$\rho_{uq}=0.02$}
		\label{fig:rho0.02}
	\end{subfigure}
	\hfill
	% --- Subfigure 3: rho=0.04 ---
	\begin{subfigure}[b]{0.24\linewidth}
		\centering
		\includegraphics[width=\linewidth]{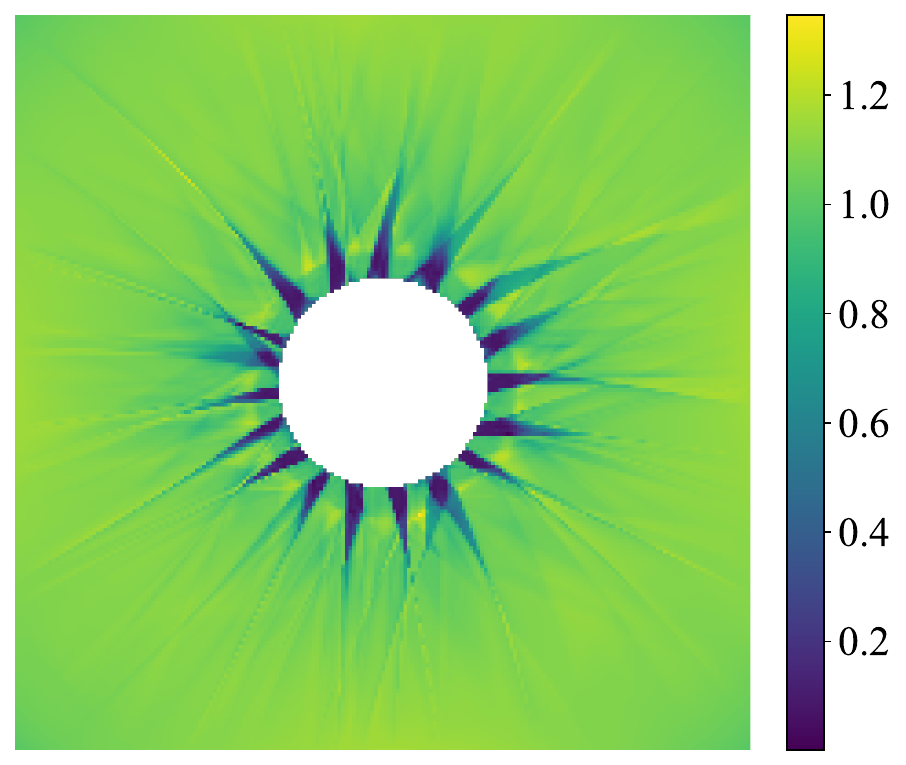}
		\caption{$\rho_{uq}=0.04$}
		\label{fig:rho0.04}
	\end{subfigure}
	\hfill
	% --- Subfigure 4: rho=0.08 ---
	\begin{subfigure}[b]{0.24\linewidth}
		\centering
		\includegraphics[width=\linewidth]{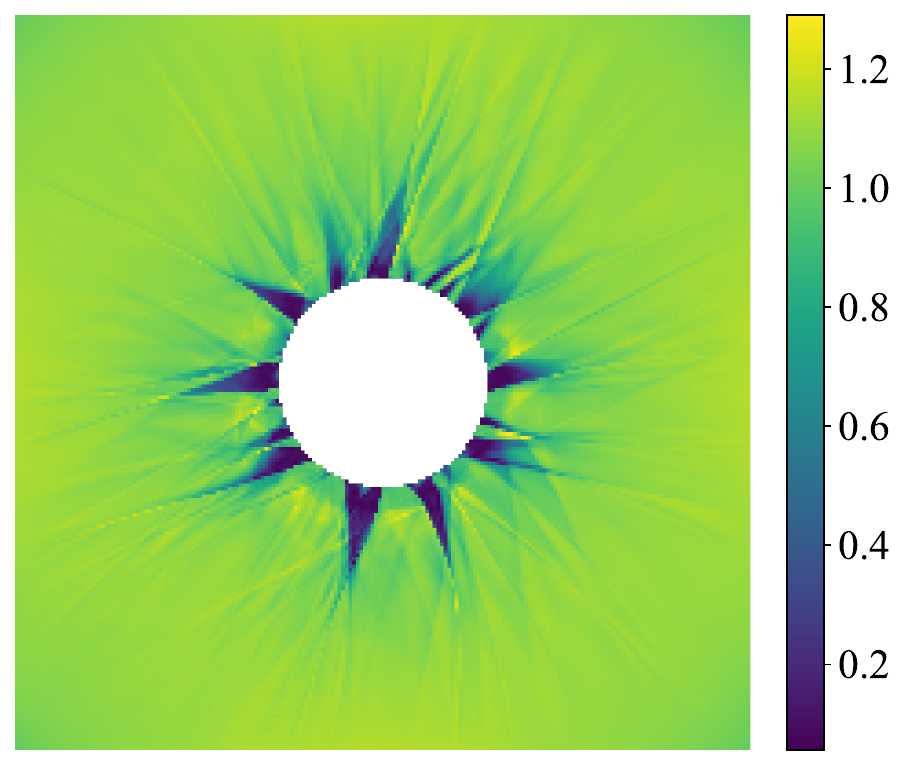}
		\caption{$\rho_{uq}=0.08$}
		\label{fig:rho0.08}
	\end{subfigure}
	
\caption{Effect of the UQ probe variance $\rho_{\mathrm{uq}}$ in the single-cell non-regularized regime ($\varepsilon_0=0$). Each panel shows the Jacobian determinant $J=\det F$ (with $F=\nabla y_\theta$).}
	\label{fig:ed-single-eps0-uqrho}
\end{figure}

% ============================================================
\begin{figure}[t]
	\centering
	% --- Subfigure 1: P=100 ---
	\begin{subfigure}[b]{0.24\linewidth}
		\centering
		\includegraphics[width=\linewidth]{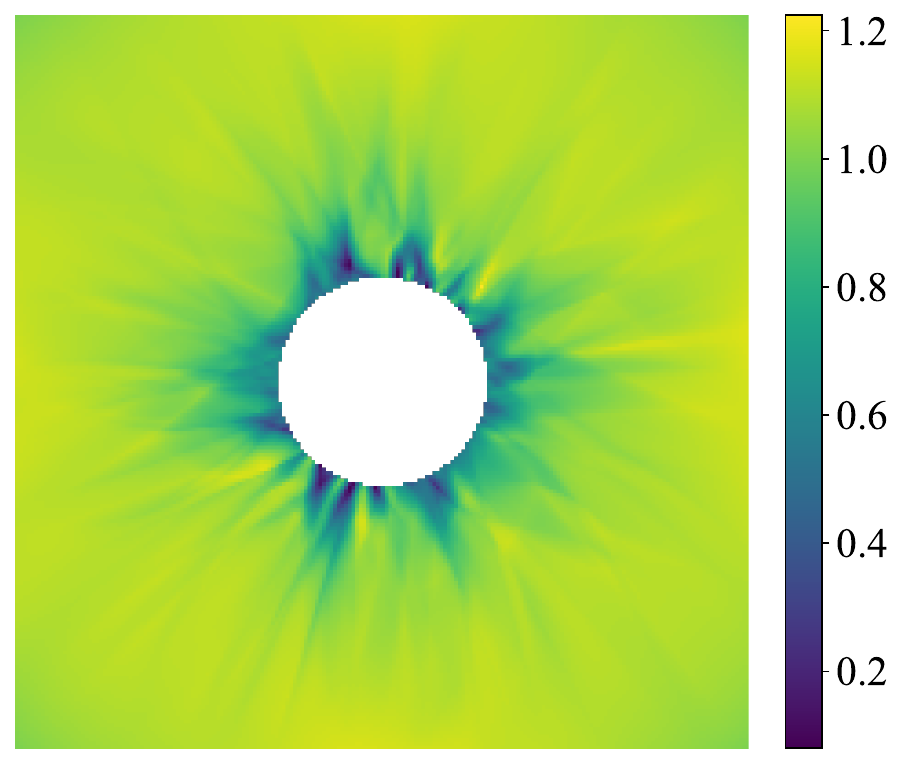}
		\caption{$P=100$}
		\label{fig:P100}
	\end{subfigure}
	\hfill
	% --- Subfigure 2: P=400 ---
	\begin{subfigure}[b]{0.24\linewidth}
		\centering
		\includegraphics[width=\linewidth]{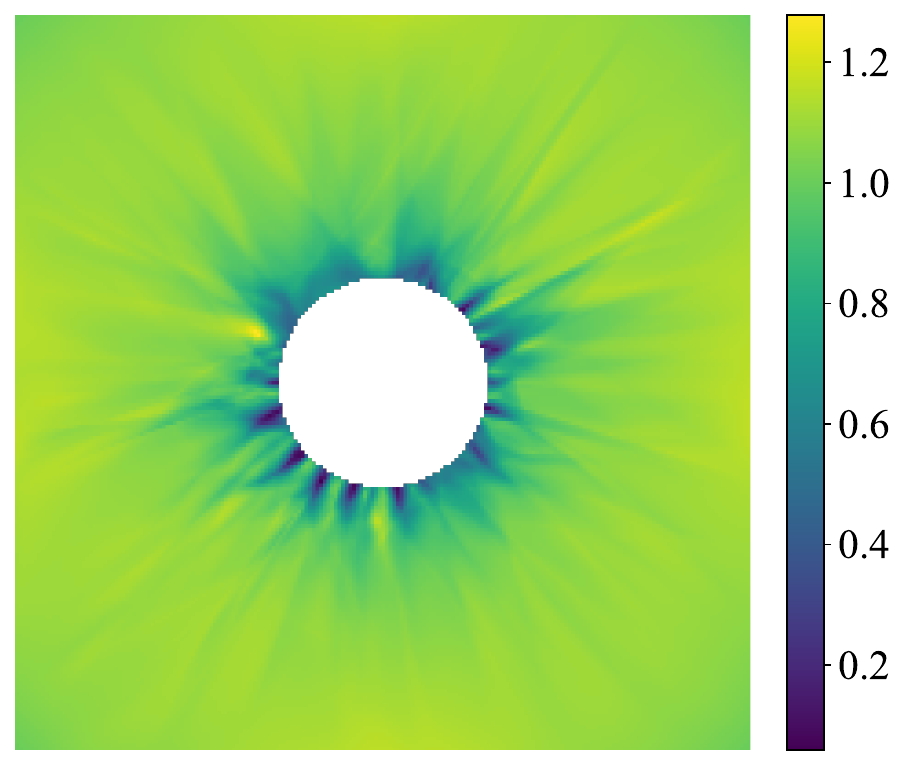}
		\caption{$P=400$}
		\label{fig:P400}
	\end{subfigure}
	\hfill
	% --- Subfigure 3: P=1600 ---
	\begin{subfigure}[b]{0.24\linewidth}
		\centering
		\includegraphics[width=\linewidth]{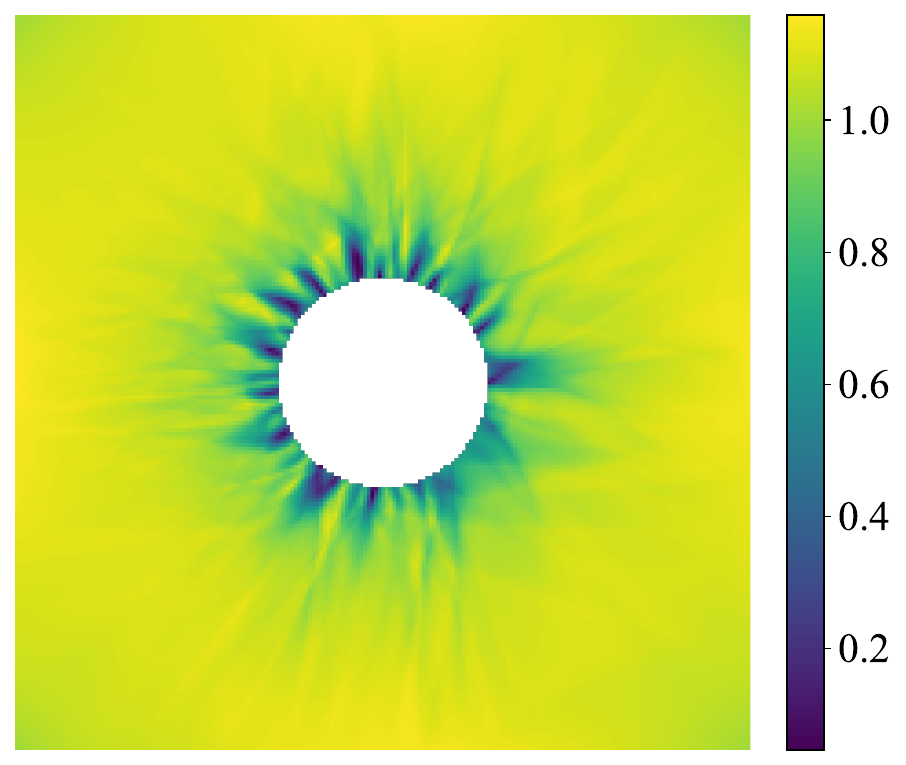}
		\caption{$P=1600$}
		\label{fig:P1600}
	\end{subfigure}
	\hfill
	% --- Subfigure 4: P=3200 ---
	\begin{subfigure}[b]{0.24\linewidth}
		\centering
		\includegraphics[width=\linewidth]{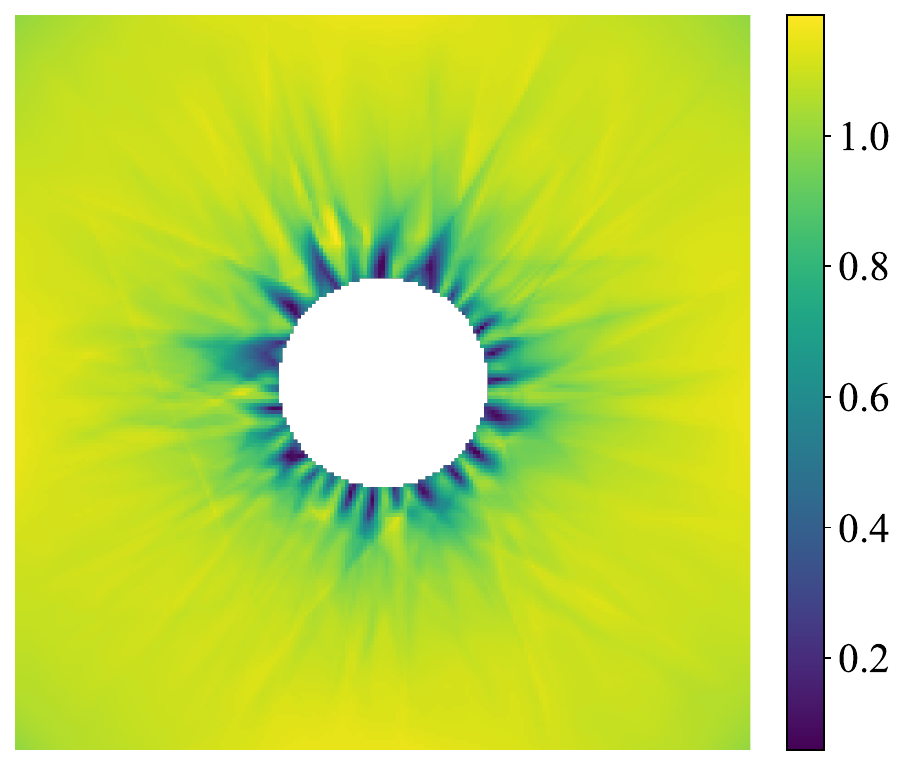}
		\caption{$P=3200$}
		\label{fig:P3200}
	\end{subfigure}
	
\caption{Effect of the resampling period $P$ in the single-cell weakly regularized regime ($\varepsilon_0=0.01\,r_c$). Each panel shows the Jacobian determinant $J=\det F$ (with $F=\nabla y_\theta$).}
	\label{fig:ed-single-eps001rc-period}
\end{figure}

% ============================================================
\begin{figure}[t]
	\centering
	% --- Subfigure 1: hl=3 ---
	\begin{subfigure}[b]{0.24\linewidth}
		\centering
		\includegraphics[width=\linewidth]{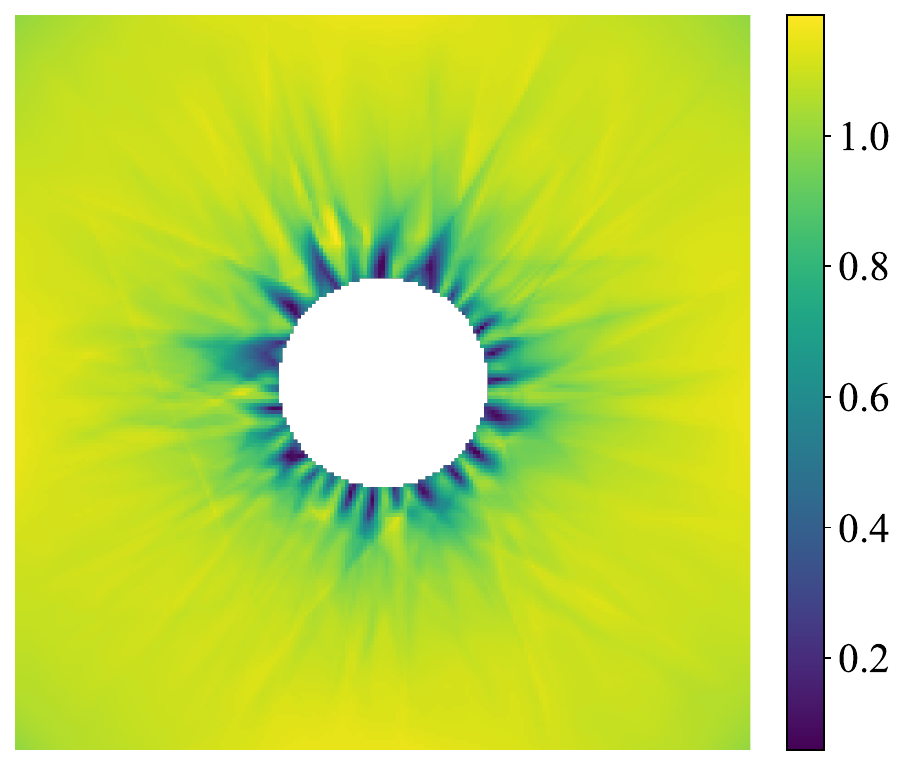}
		\caption{$h_l=3$}
		\label{fig:hl3_eps001}
	\end{subfigure}
	\hfill
	% --- Subfigure 2: hl=5 ---
	\begin{subfigure}[b]{0.24\linewidth}
		\centering
		\includegraphics[width=\linewidth]{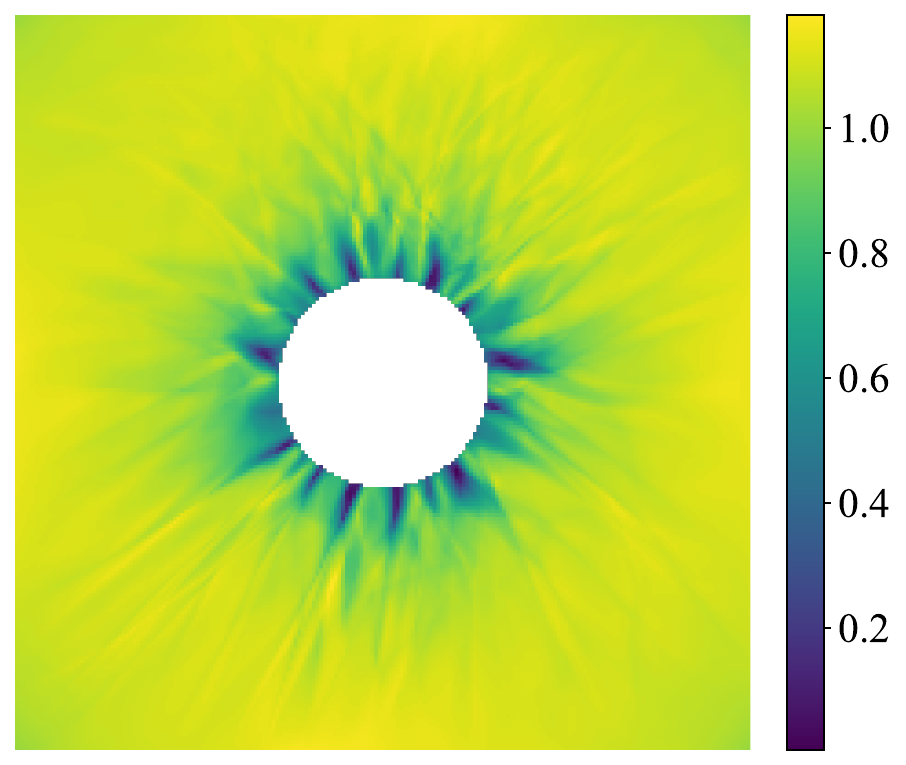}
		\caption{$h_l=5$}
		\label{fig:hl5_eps001}
	\end{subfigure}
	\hfill
	% --- Subfigure 3: hl=7 ---
	\begin{subfigure}[b]{0.24\linewidth}
		\centering
		\includegraphics[width=\linewidth]{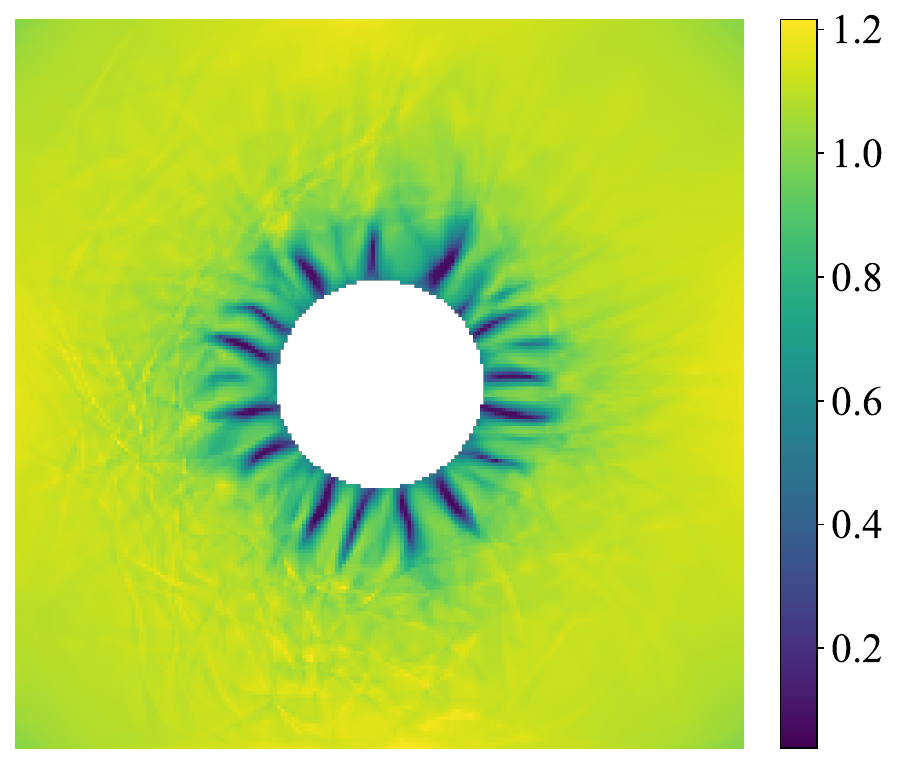}
		\caption{$h_l=7$}
		\label{fig:hl7_eps001}
	\end{subfigure}
	\hfill
	% --- Subfigure 4: hl=9 ---
	\begin{subfigure}[b]{0.24\linewidth}
		\centering
		\includegraphics[width=\linewidth]{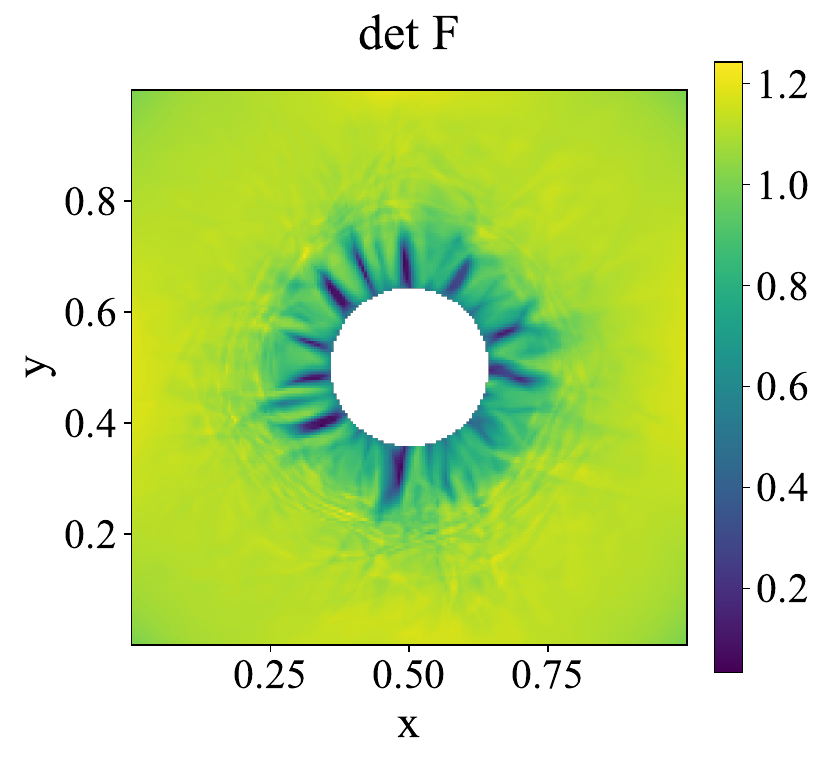}
		\caption{$h_l=9$}
		\label{fig:hl9_eps001}
	\end{subfigure}
	
\caption{Effect of network depth $h_l$ in the single-cell weakly regularized regime ($\varepsilon_0=0.01\,r_c$). Each panel shows the Jacobian determinant $J=\det F$ (with $F=\nabla y_\theta$) for different hidden-layer depths.}
	\label{fig:ed-single-eps001rc-depth}
\end{figure}

% ============================================================
\begin{figure}[t]
	\centering
	% --- Subfigure 1: width=64 ---
	\begin{subfigure}[b]{0.32\linewidth}
		\centering
		\includegraphics[width=\linewidth]{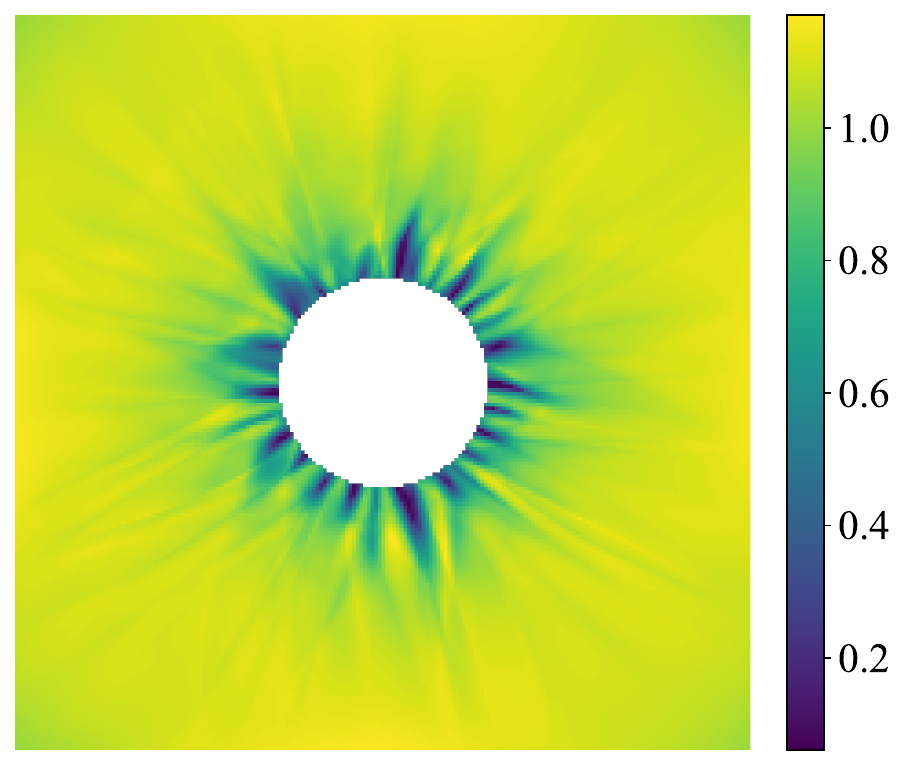}
		\caption{$width=64$}
		\label{fig:width64_eps001}
	\end{subfigure}
	\hfill
	% --- Subfigure 2: width=128 ---
	\begin{subfigure}[b]{0.32\linewidth}
		\centering
		\includegraphics[width=\linewidth]{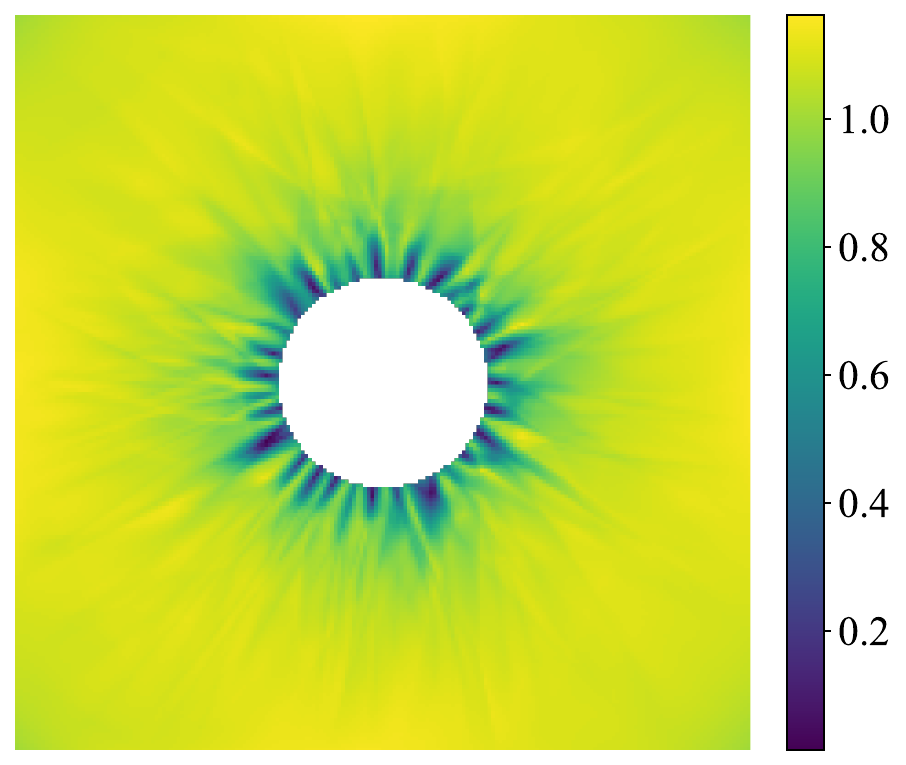}
		\caption{$width=128$}
		\label{fig:width128_eps001}
	\end{subfigure}
	\hfill
	% --- Subfigure 3: width=256 ---
	\begin{subfigure}[b]{0.32\linewidth}
		\centering
		\includegraphics[width=\linewidth]{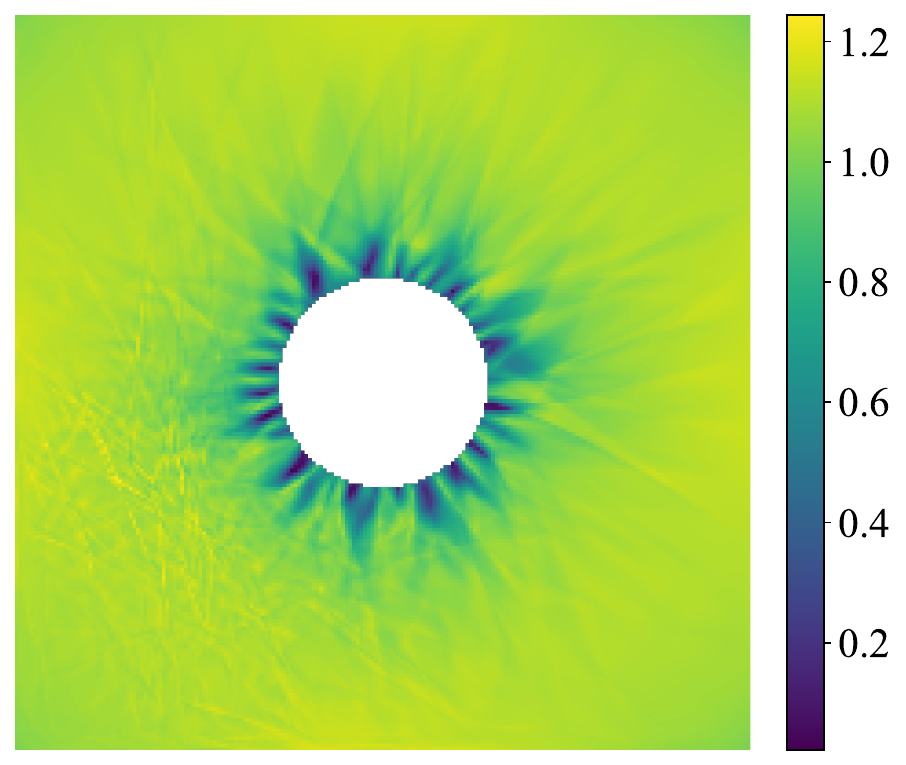}
		\caption{$width=256$}
		\label{fig:width256_eps001}
	\end{subfigure}
	
\caption{Effect of network width in the single-cell weakly regularized regime ($\varepsilon_0=0.01\,r_c$). Each panel shows the Jacobian determinant $J=\det F$ (with $F=\nabla y_\theta$) for different hidden-layer widths.}
	\label{fig:ed-single-eps001rc-width}
\end{figure}

% ============================================================
\begin{figure}[t]
	\centering
	\setlength{\tabcolsep}{2pt}
	\renewcommand{\arraystretch}{0}
	\begin{tabular}{cccc}
		\multicolumn{4}{c}{short distance, $d = 2.5r_c$} \\[2pt]
		\includegraphics[width=0.24\linewidth]{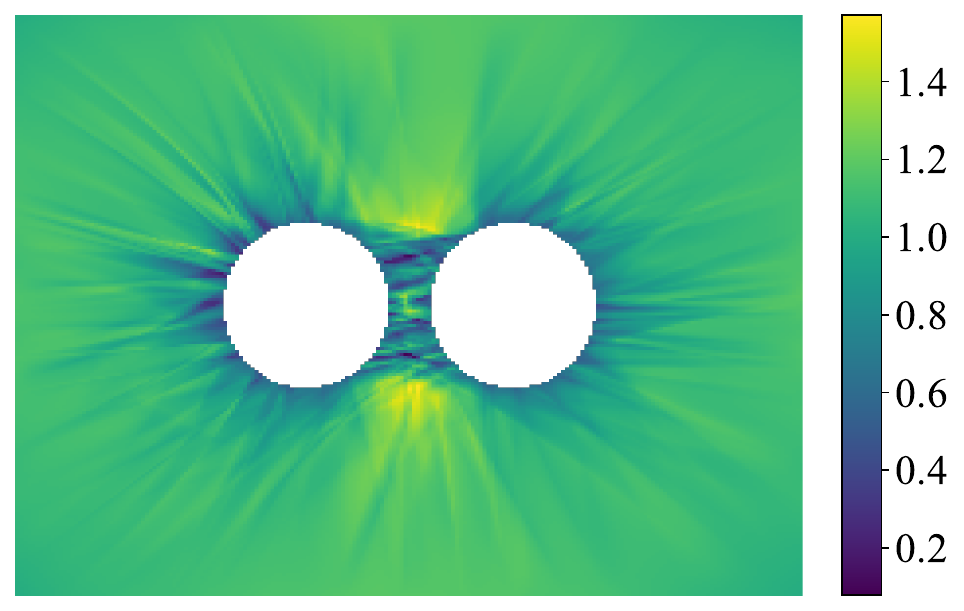} &
		\includegraphics[width=0.24\linewidth]{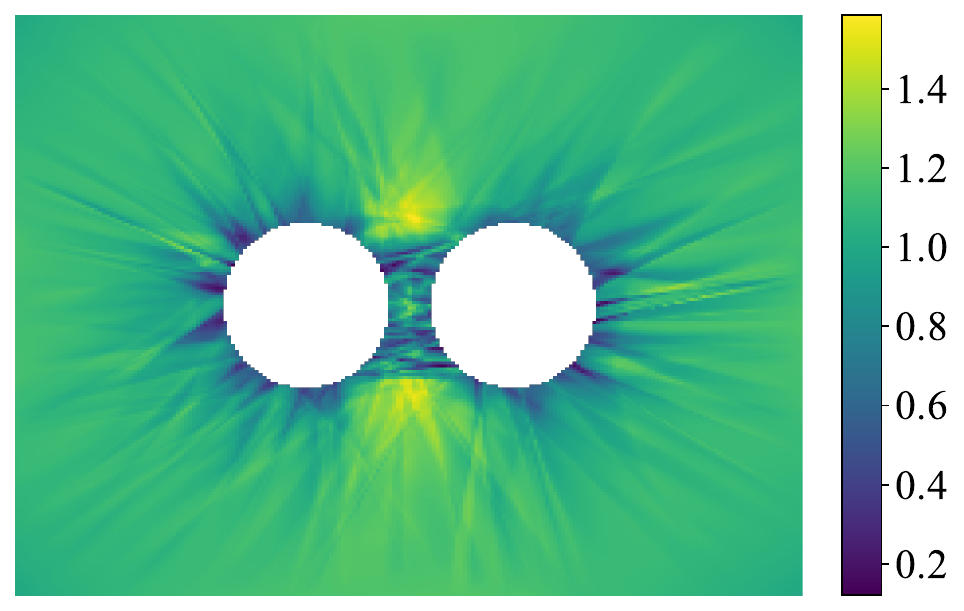} &
		\includegraphics[width=0.24\linewidth]{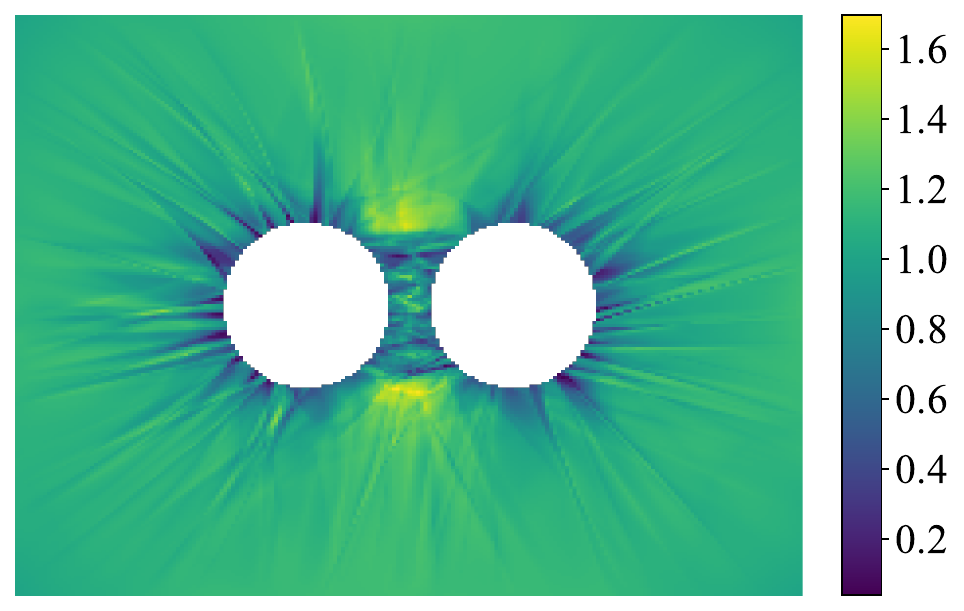} &
		\includegraphics[width=0.24\linewidth]{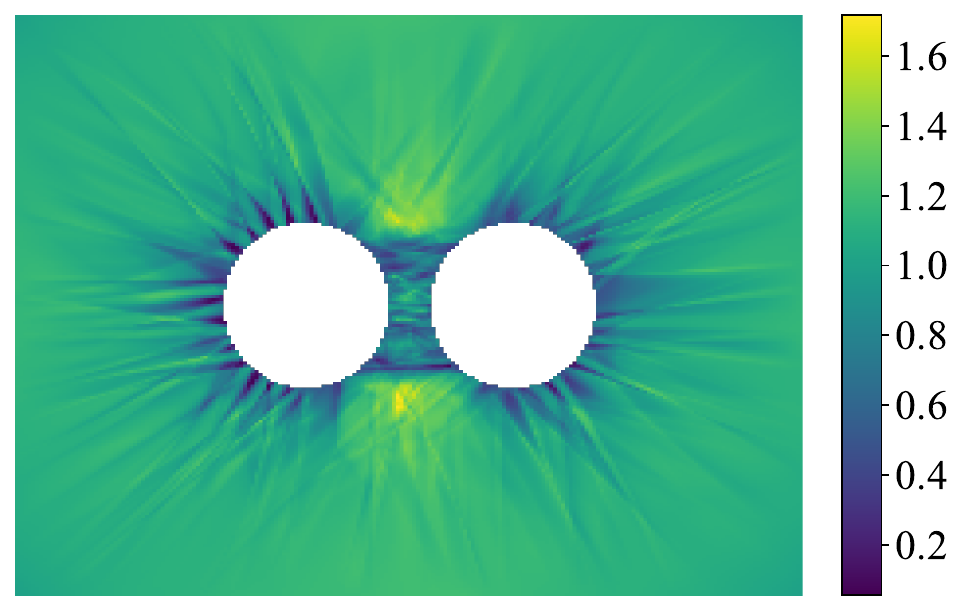} \\
		$P{=}100$ & $P{=}400$ & $P{=}1600$ & $P{=}3200$
		\\[6pt]
		\multicolumn{4}{c}{long distance, $d = 5r_c$} \\[2pt]
		\includegraphics[width=0.24\linewidth]{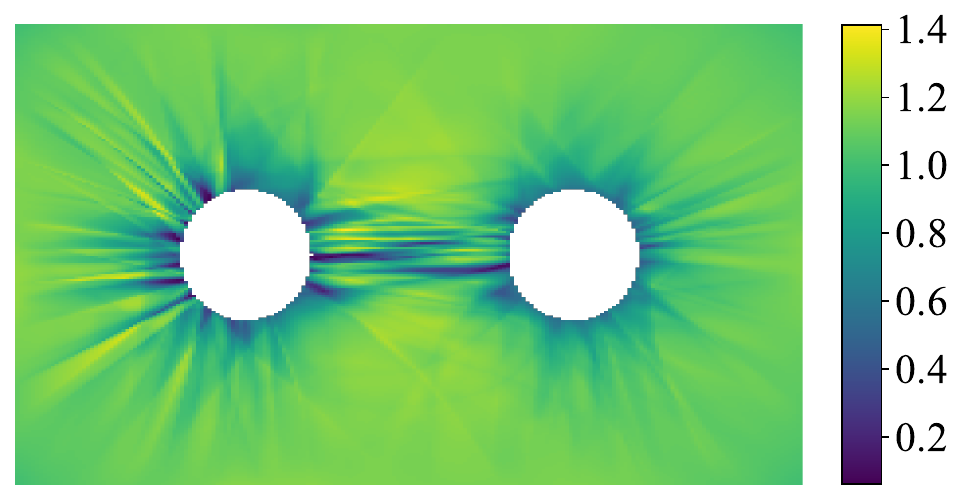} &
		\includegraphics[width=0.24\linewidth]{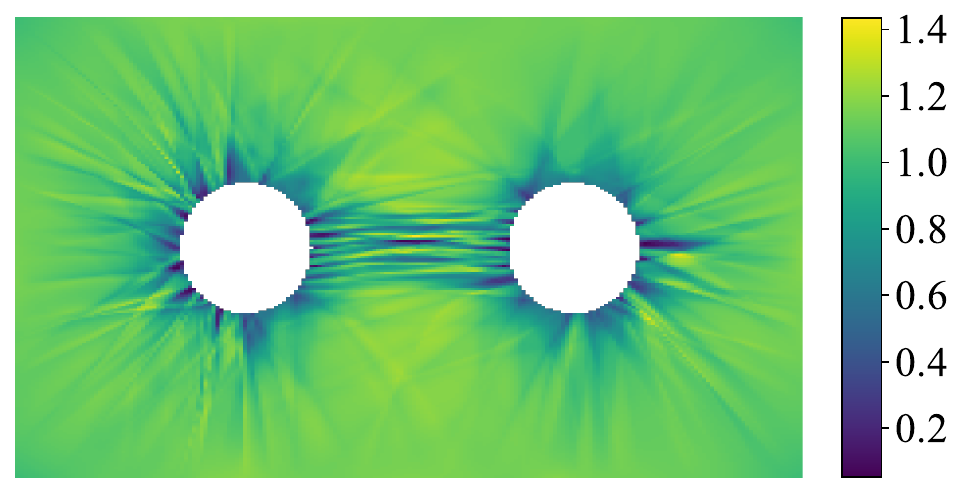} &
		\includegraphics[width=0.24\linewidth]{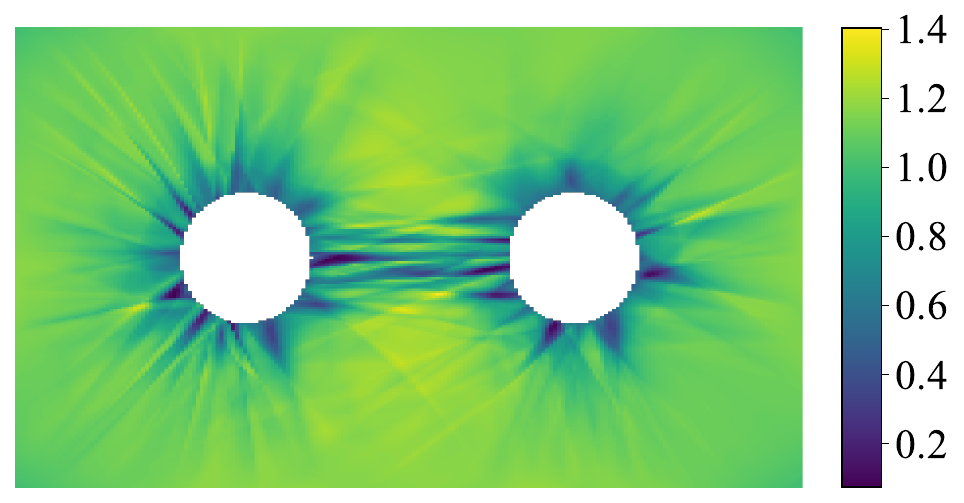} &
		\includegraphics[width=0.24\linewidth]{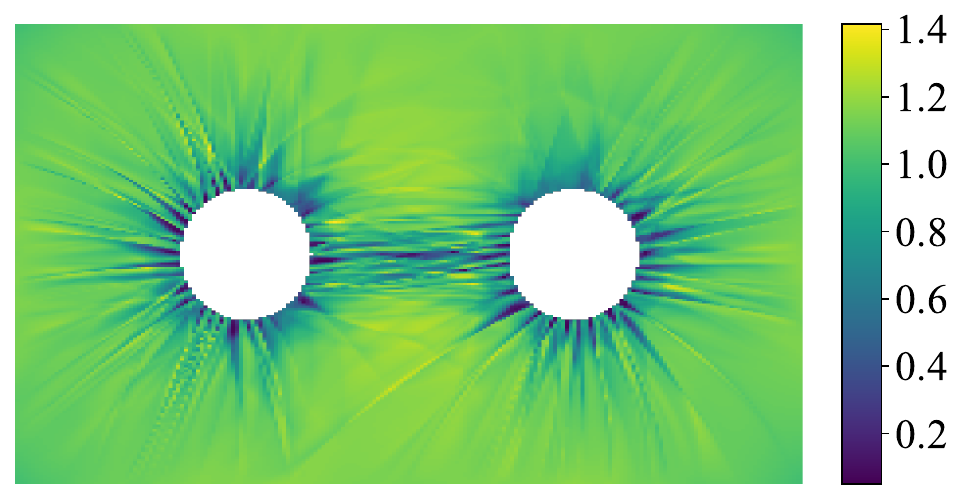} \\
		$P{=}100$ & $P{=}400$ & $P{=}1600$ & $P{=}3200$
	\end{tabular}
\caption{Effect of the resampling period $P$ in the two-cell non-regularized regime ($\varepsilon_0=0$). The Jacobian determinant $J=\det F$ (with $F=\nabla y_\theta$) is shown for the short-distance regime ($d=2.5\,r_c$; top) and the long-distance regime ($d=5\,r_c$; bottom).}
	\label{fig:ed-two-eps0-period}
\end{figure}

% ============================================================
\begin{figure}[t] % 如果是双栏论文，建议用figure*横跨两栏以看清细节
	\centering
	
	% --- 第一行：Short Gap ---
	\begin{subfigure}{0.23\linewidth}
		\centering
		\includegraphics[width=\linewidth]{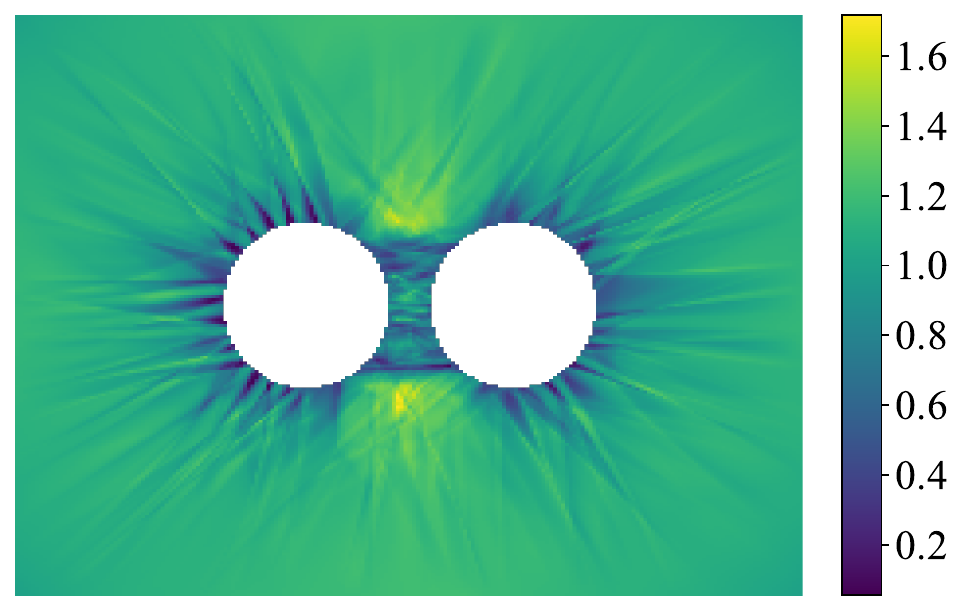}
		\caption{$h_l=3$}
	\end{subfigure}\hfill
	\begin{subfigure}{0.23\linewidth}
		\centering
		\includegraphics[width=\linewidth]{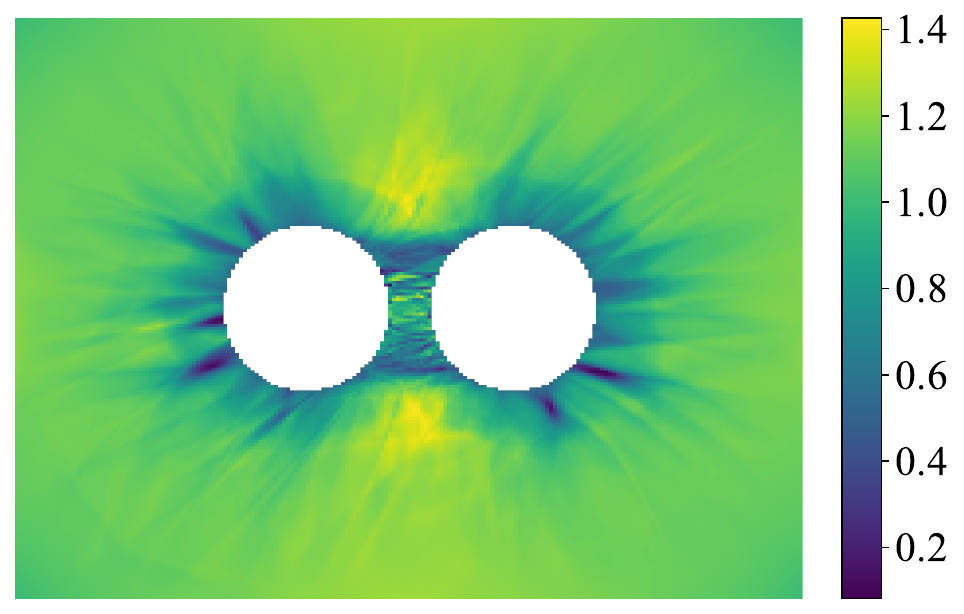}
		\caption{$h_l=5$}
	\end{subfigure}\hfill
	\begin{subfigure}{0.23\linewidth}
		\centering
		\includegraphics[width=\linewidth]{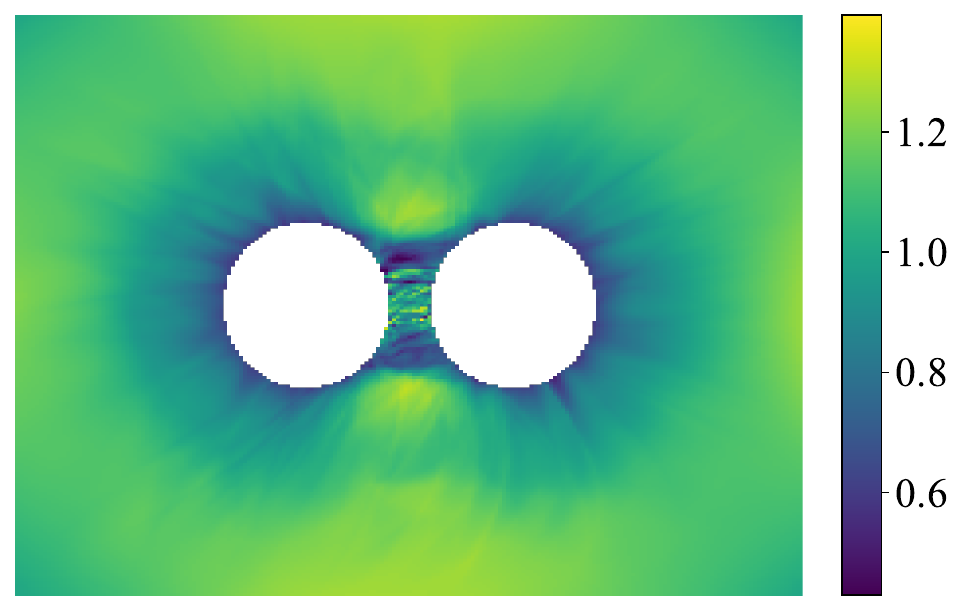}
		\caption{$h_l=7$}
	\end{subfigure}\hfill
	\begin{subfigure}{0.23\linewidth}
		\centering
		\includegraphics[width=\linewidth]{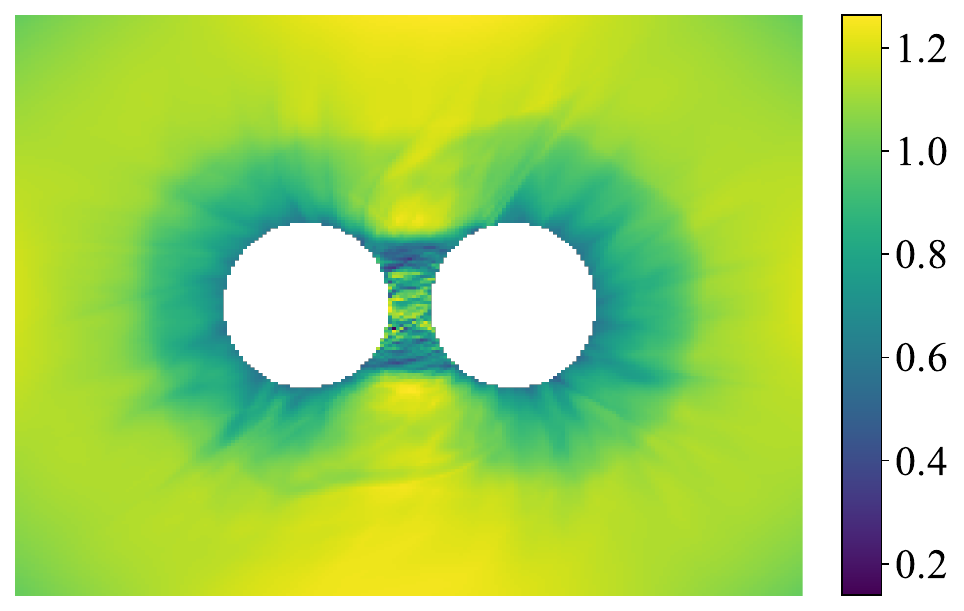}
		\caption{$h_l=9$}
	\end{subfigure}

	\vspace{1em} % 行间距

	% --- 第二行：Long Gap ---
	\begin{subfigure}{0.23\linewidth}
		\centering
		\includegraphics[width=\linewidth]{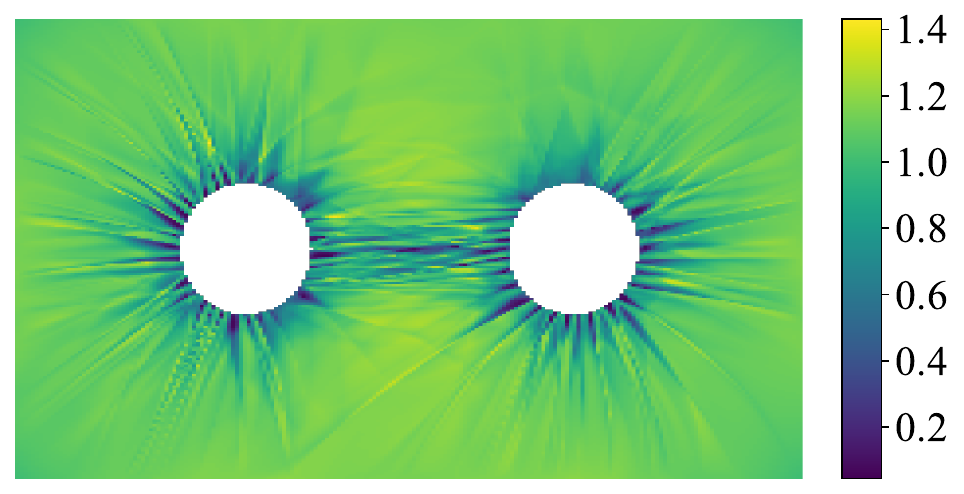}
		\caption{$h_l=3$}
	\end{subfigure}\hfill
	\begin{subfigure}{0.23\linewidth}
		\centering
		\includegraphics[width=\linewidth]{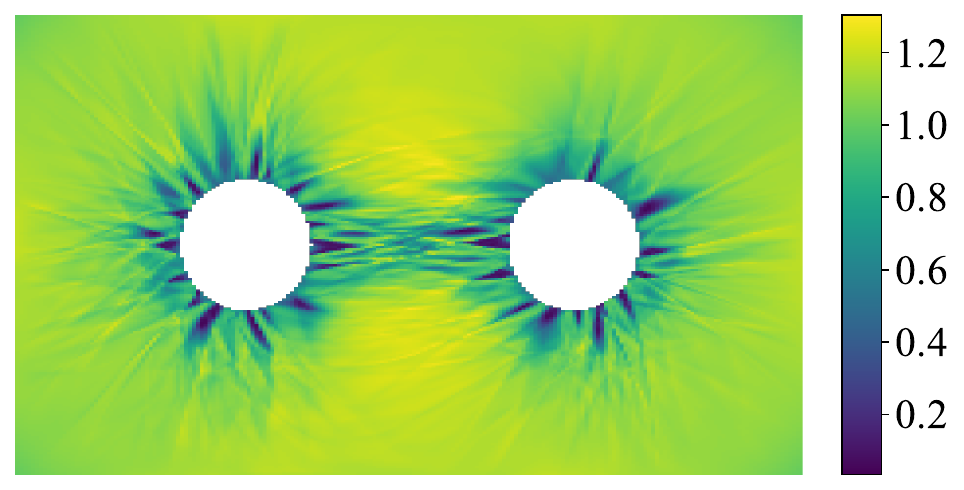}
		\caption{$h_l=5$}
	\end{subfigure}\hfill
	\begin{subfigure}{0.23\linewidth}
		\centering
		\includegraphics[width=\linewidth]{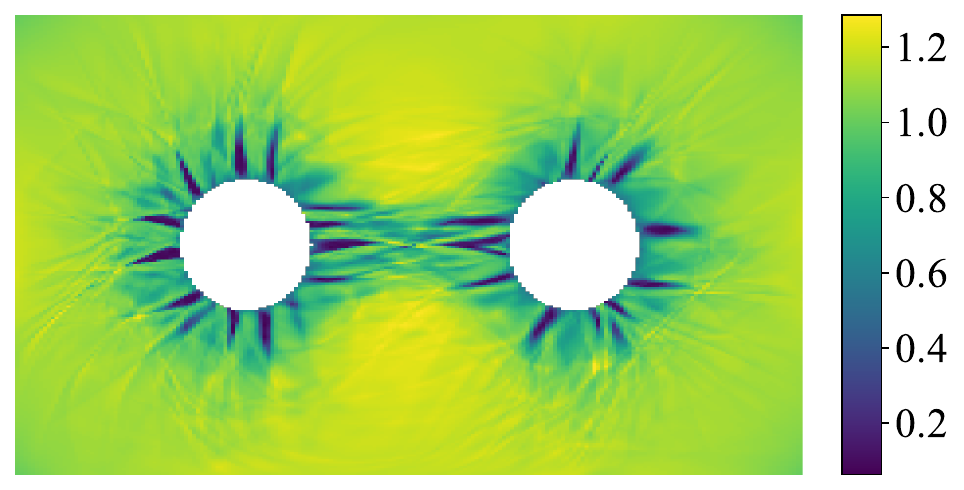}
		\caption{$h_l=7$}
	\end{subfigure}\hfill
	\begin{subfigure}{0.23\linewidth}
		\centering
		\includegraphics[width=\linewidth]{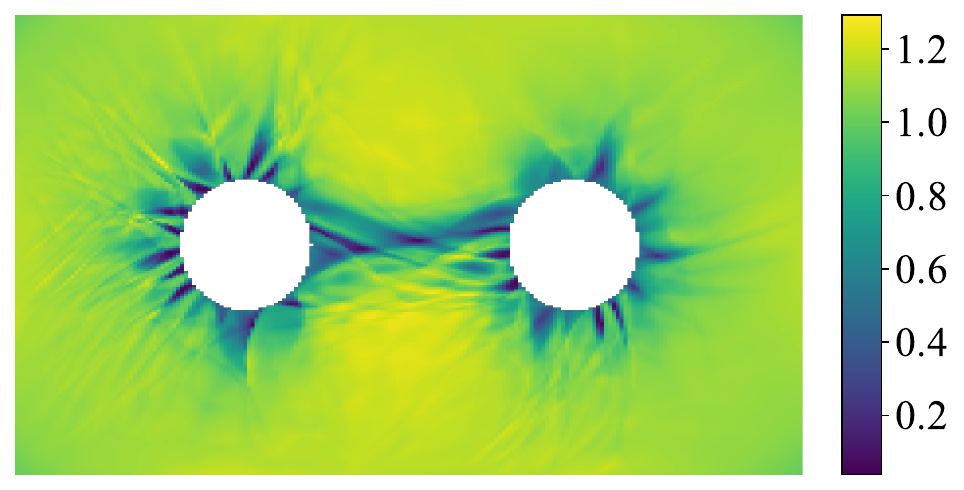}
		\caption{$h_l=9$}
	\end{subfigure}

\caption{Effect of network depth $h_l$ in the two-cell non-regularized regime ($\varepsilon_0=0$). The Jacobian determinant $J=\det F$ (with $F=\nabla y_\theta$) is shown for the short-distance regime ($d=2.5\,r_c$; top row) and the long-distance regime ($d=5\,r_c$; bottom row).}
	\label{fig:ed-two-eps0-depth}
\end{figure}

% ============================================================
\begin{figure}[t]
	\centering
	% =========================================
	% First Row: Short Gap (d=2.5rc)
	% =========================================
	\begin{subfigure}[b]{0.32\linewidth}
		\centering
		\includegraphics[width=\linewidth]{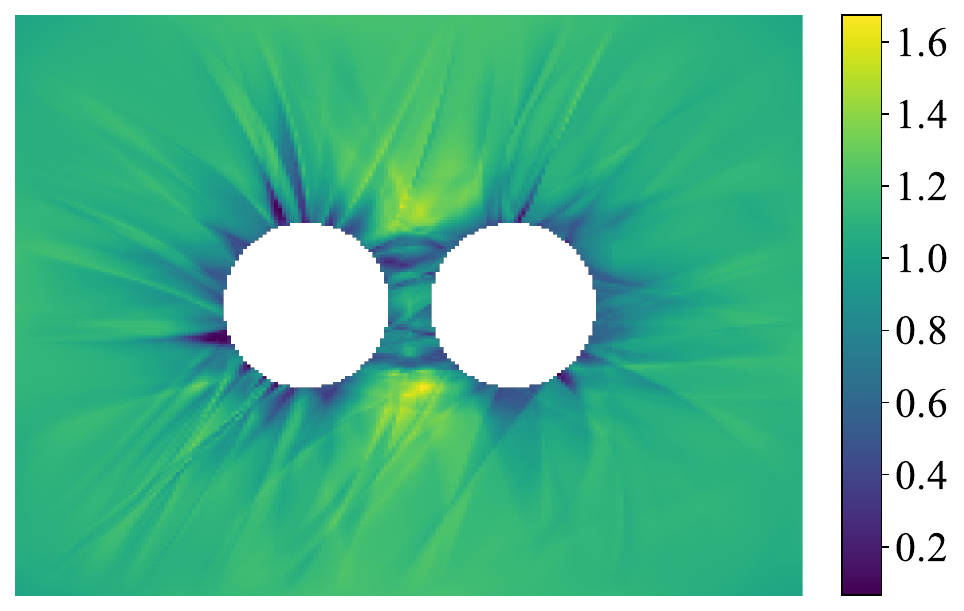}
		\caption{$width=64$}
		\label{fig:short_width64}
	\end{subfigure}
	\hfill
	\begin{subfigure}[b]{0.32\linewidth}
		\centering
		\includegraphics[width=\linewidth]{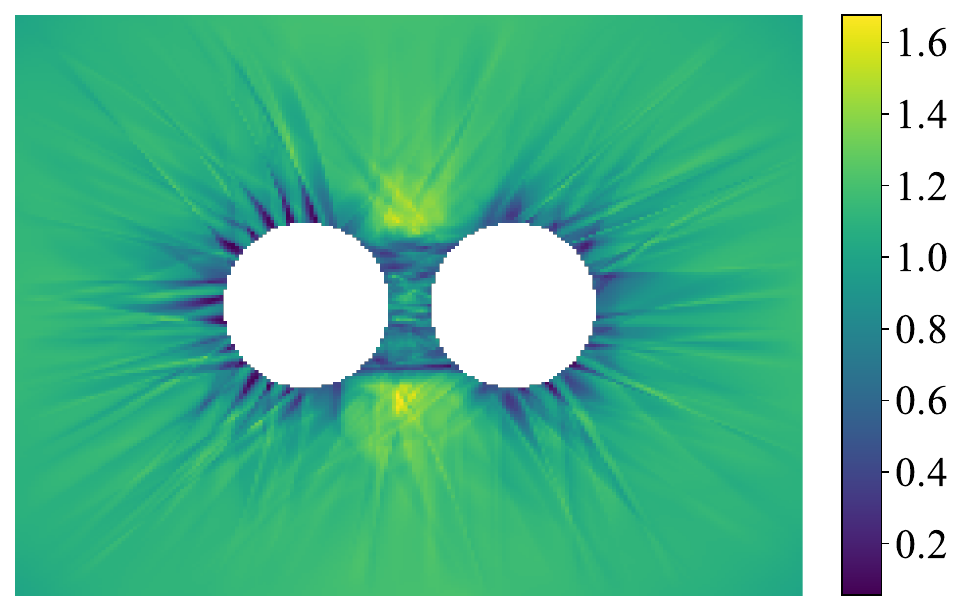}
		\caption{$width=128$}
		\label{fig:short_width128}
	\end{subfigure}
	\hfill
	\begin{subfigure}[b]{0.32\linewidth}
		\centering
		\includegraphics[width=\linewidth]{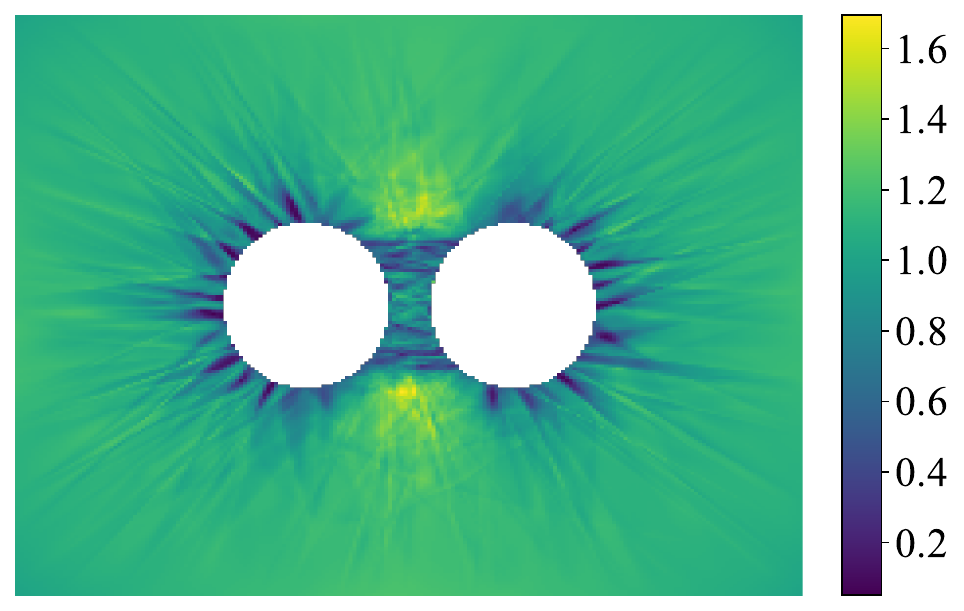}
		\caption{$width=256$}
		\label{fig:short_width256}
	\end{subfigure}
	
	% --- Space between rows ---
	\vspace{1em} 
	
	% =========================================
	% Second Row: Long Gap (d=5rc)
	% =========================================
	\begin{subfigure}[b]{0.32\linewidth}
		\centering
		\includegraphics[width=\linewidth]{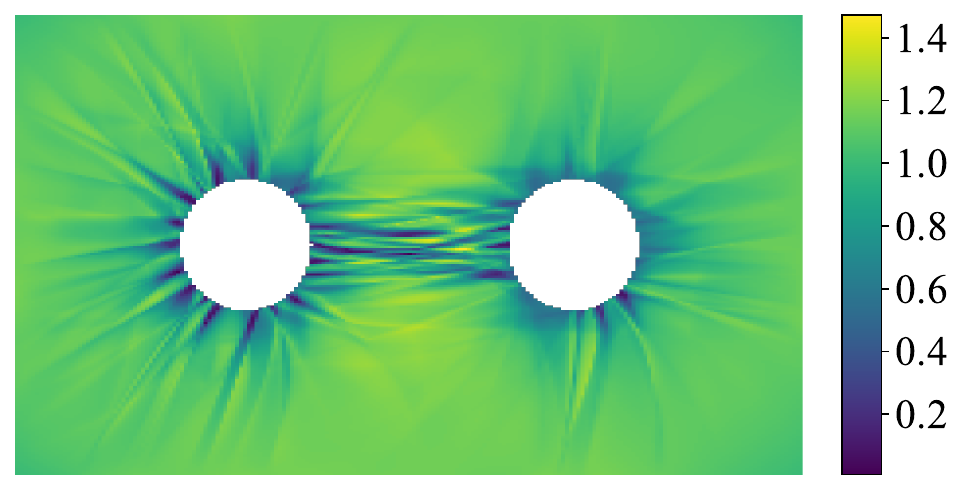}
		\caption{$width=64$}
		\label{fig:long_width64}
	\end{subfigure}
	\hfill
	\begin{subfigure}[b]{0.32\linewidth}
		\centering
		\includegraphics[width=\linewidth]{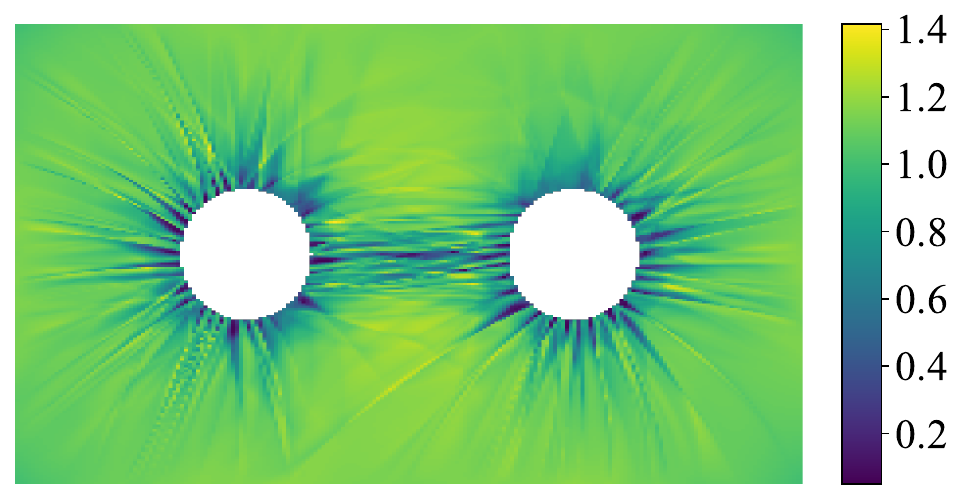}
		\caption{$width=128$}
		\label{fig:long_width128}
	\end{subfigure}
	\hfill
	\begin{subfigure}[b]{0.32\linewidth}
		\centering
		\includegraphics[width=\linewidth]{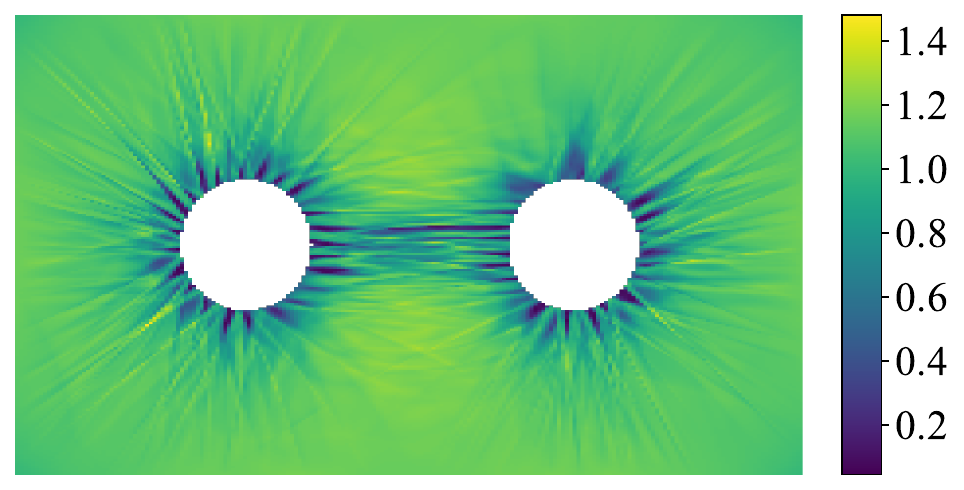}
		\caption{$width=256$}
		\label{fig:long_width256}
	\end{subfigure}
	
\caption{Effect of network width in the two-cell non-regularized regime ($\varepsilon_0=0$). The Jacobian determinant $J=\det F$ (with $F=\nabla y_\theta$) is shown for the short-distance regime ($d=2.5\,r_c$; top row) and the long-distance regime ($d=5\,r_c$; bottom row).}
	\label{fig:ed-two-eps0-width}
\end{figure}

% ============================================================
\begin{figure}[t]
	\centering
	% =========================================
	% First Row: Short Gap (d=2.5rc)
	% =========================================
	% --- Subfigure 1 ---
	\begin{subfigure}[b]{0.24\linewidth}
		\centering
		\includegraphics[width=\linewidth]{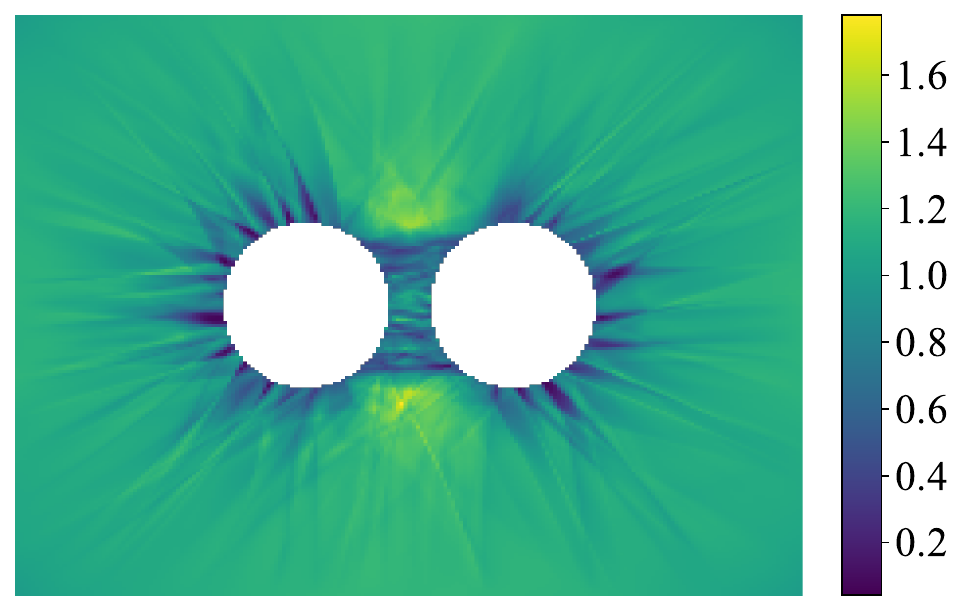}
		\caption{$\rho_{uq}=0.01$}
		\label{fig:short_rho0.01}
	\end{subfigure}
	\hfill
	% --- Subfigure 2 ---
	\begin{subfigure}[b]{0.24\linewidth}
		\centering
		\includegraphics[width=\linewidth]{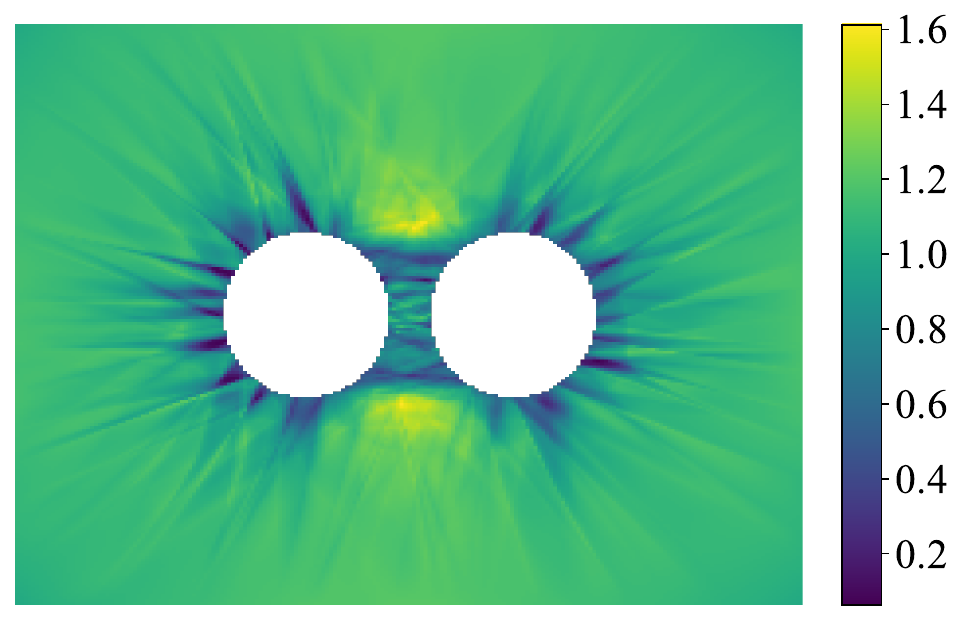}
		\caption{$\rho_{uq}=0.02$}
		\label{fig:short_rho0.02}
	\end{subfigure}
	\hfill
	% --- Subfigure 3 ---
	\begin{subfigure}[b]{0.24\linewidth}
		\centering
		\includegraphics[width=\linewidth]{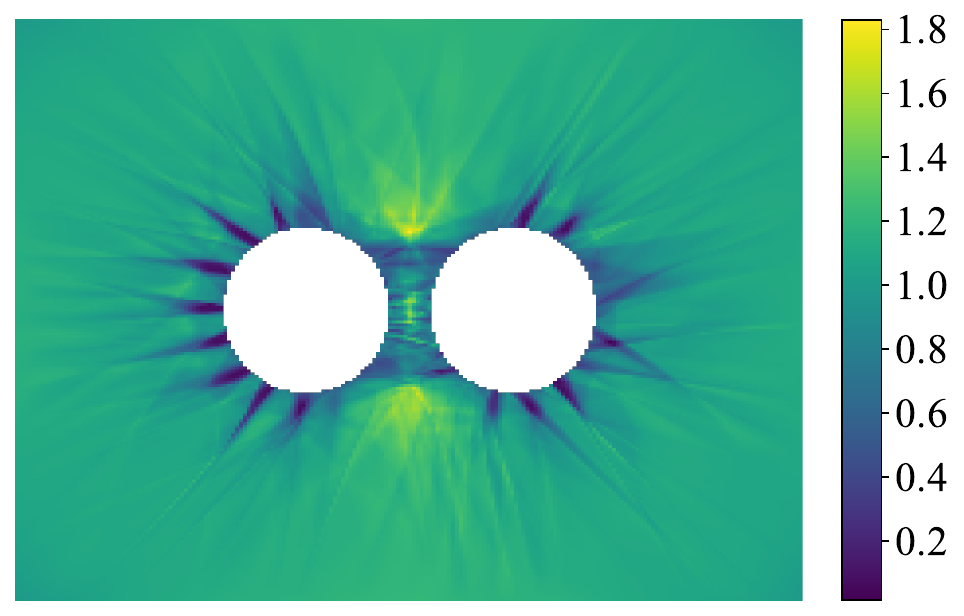}
		\caption{$\rho_{uq}=0.04$}
		\label{fig:short_rho0.04}
	\end{subfigure}
	\hfill
	% --- Subfigure 4 ---
	\begin{subfigure}[b]{0.24\linewidth}
		\centering
		\includegraphics[width=\linewidth]{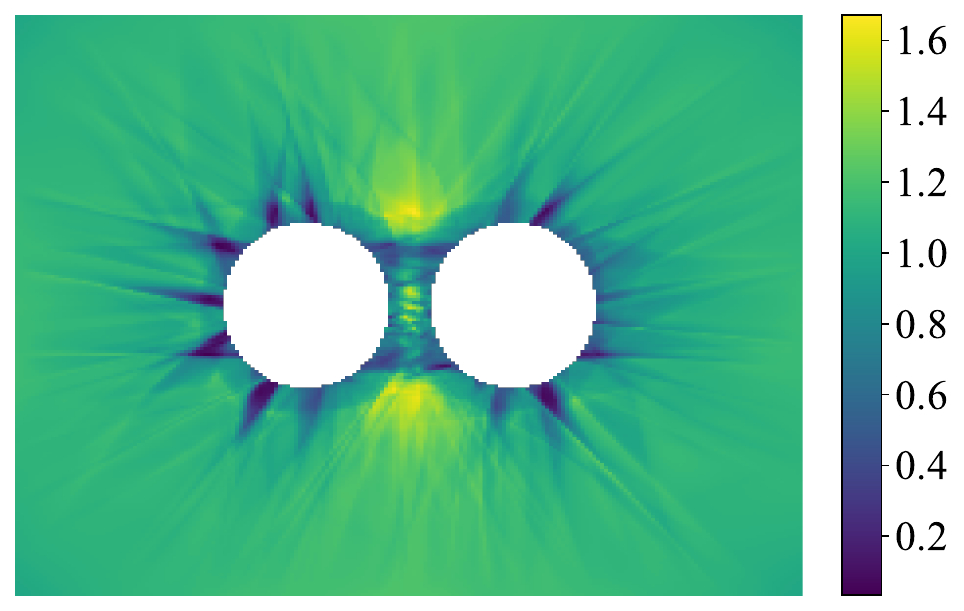}
		\caption{$\rho_{uq}=0.08$}
		\label{fig:short_rho0.08}
	\end{subfigure}
	
	% --- Vertical space between rows ---
	\vspace{1em}

	% =========================================
	% Second Row: Long Gap (d=5rc)
	% =========================================
	% --- Subfigure 5 ---
	\begin{subfigure}[b]{0.24\linewidth}
		\centering
		\includegraphics[width=\linewidth]{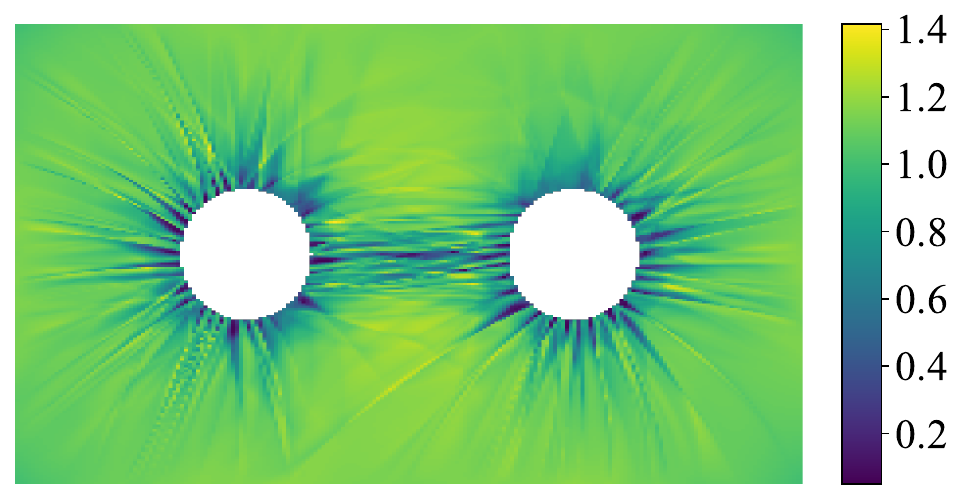}
		\caption{$\rho_{uq}=0.01$}
		\label{fig:long_rho0.01}
	\end{subfigure}
	\hfill
	% --- Subfigure 6 ---
	\begin{subfigure}[b]{0.24\linewidth}
		\centering
		\includegraphics[width=\linewidth]{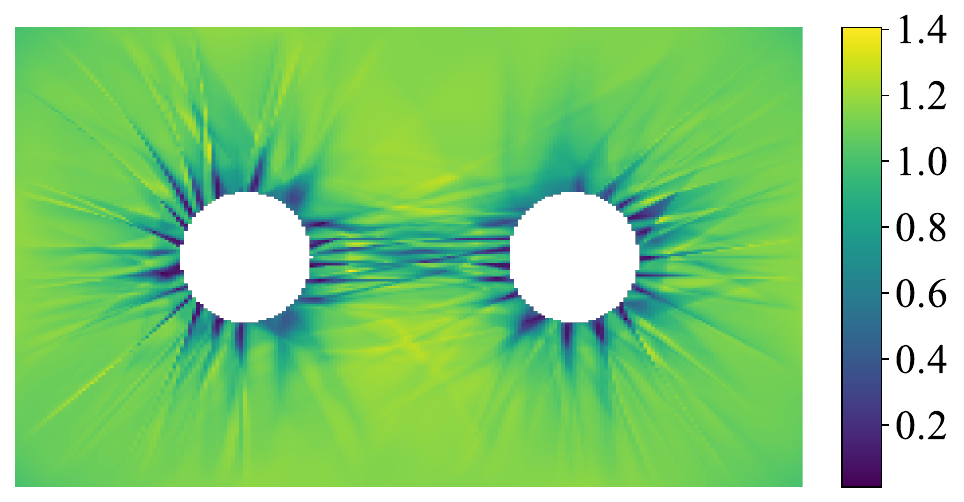}
		\caption{$\rho_{uq}=0.02$}
		\label{fig:long_rho0.02}
	\end{subfigure}
	\hfill
	% --- Subfigure 7 ---
	\begin{subfigure}[b]{0.24\linewidth}
		\centering
		\includegraphics[width=\linewidth]{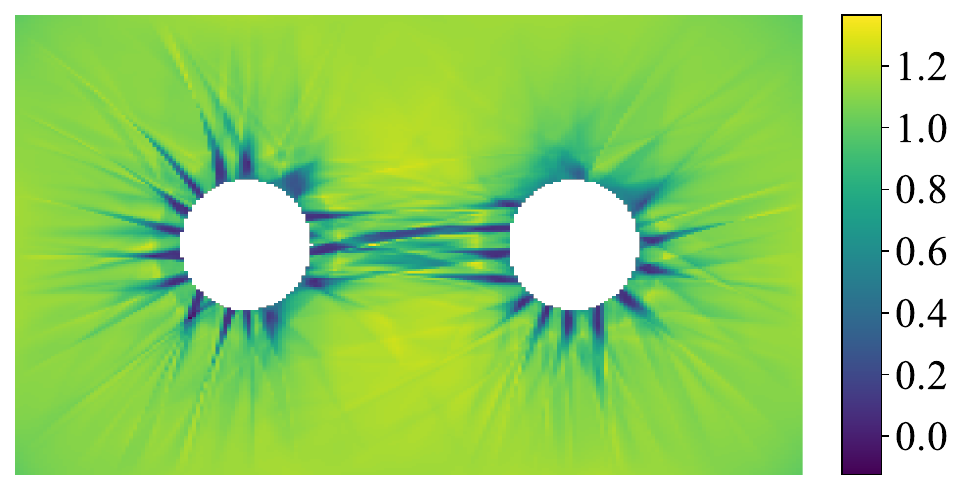}
		\caption{$\rho_{uq}=0.04$}
		\label{fig:long_rho0.04}
	\end{subfigure}
	\hfill
	% --- Subfigure 8 ---
	\begin{subfigure}[b]{0.24\linewidth}
		\centering
		\includegraphics[width=\linewidth]{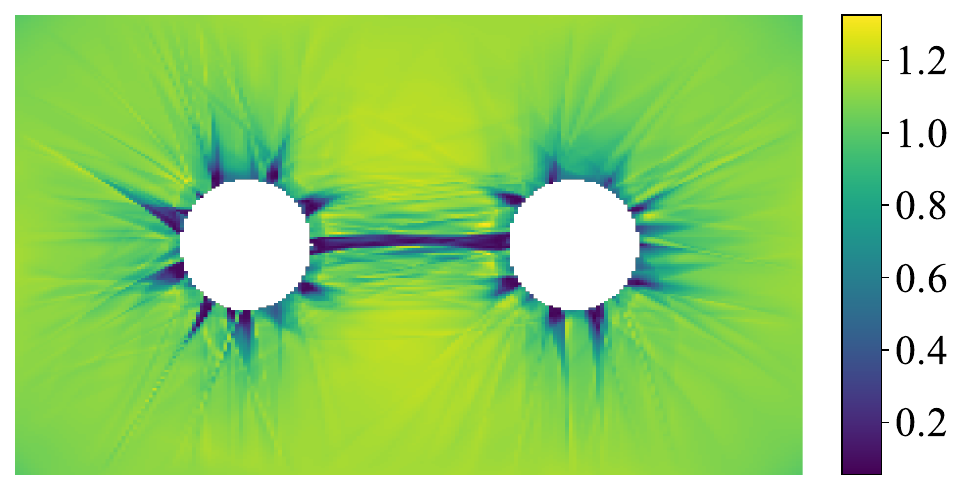}
		\caption{$\rho_{uq}=0.08$}
		\label{fig:long_rho0.08}
	\end{subfigure}
	
\caption{Effect of the UQ probe variance $\rho_{\mathrm{uq}}$ in the two-cell non-regularized regime ($\varepsilon_0=0$). The Jacobian determinant $J=\det F$ (with $F=\nabla y_\theta$) is shown for the short-distance regime ($d=2.5\,r_c$; top row) and the long-distance regime ($d=5\,r_c$; bottom row).}
	\label{fig:ed-two-eps0-uqrho}
\end{figure}

% ============================================================
\begin{figure}[t]
	\centering
	% =========================================
	% First Row: Short Gap (d=2.5rc)
	% =========================================
	% --- Subfigure 1 ---
	\begin{subfigure}[b]{0.24\linewidth}
		\centering
		\includegraphics[width=\linewidth]{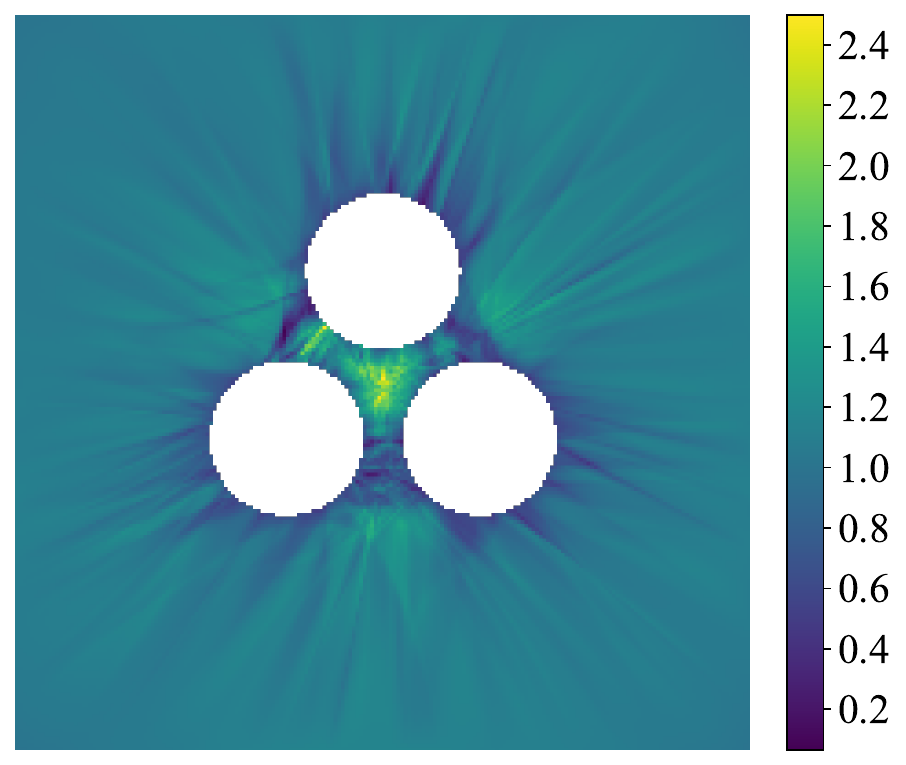}
		\caption{$P=100$}
		\label{fig:three_short_P100}
	\end{subfigure}
	\hfill
	% --- Subfigure 2 ---
	\begin{subfigure}[b]{0.24\linewidth}
		\centering
		\includegraphics[width=\linewidth]{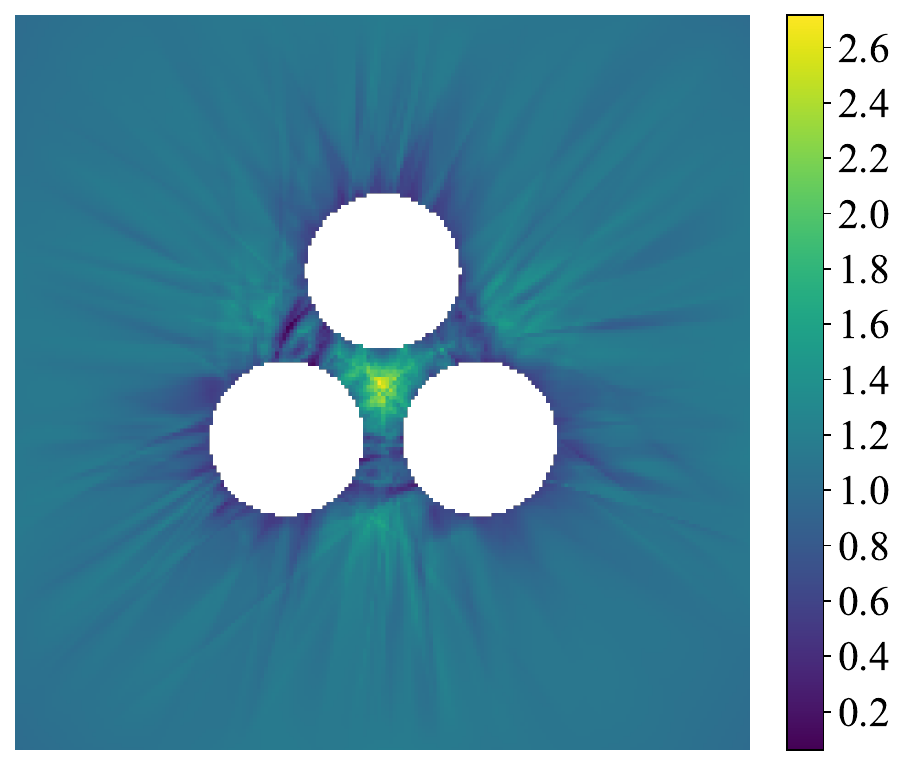}
		\caption{$P=400$}
		\label{fig:three_short_P400}
	\end{subfigure}
	\hfill
	% --- Subfigure 3 ---
	\begin{subfigure}[b]{0.24\linewidth}
		\centering
		\includegraphics[width=\linewidth]{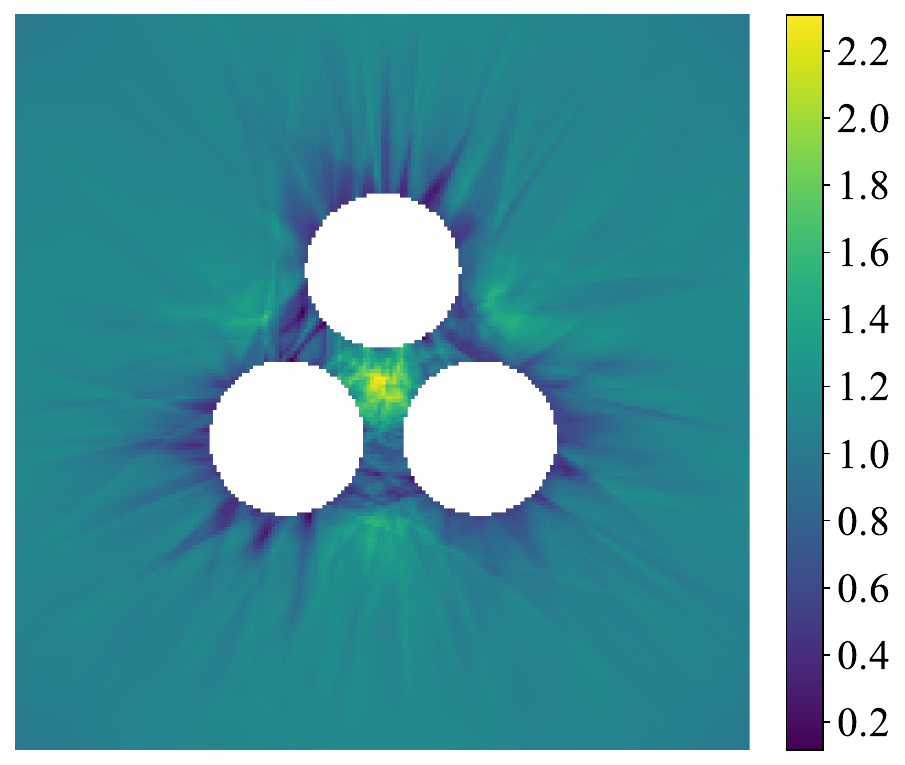}
		\caption{$P=1600$}
		\label{fig:three_short_P1600}
	\end{subfigure}
	\hfill
	% --- Subfigure 4 ---
	\begin{subfigure}[b]{0.24\linewidth}
		\centering
		\includegraphics[width=\linewidth]{Figs/Three_epsilon0_period/F_fields_short_P3200.pdf}
		\caption{$P=3200$}
		\label{fig:three_short_P3200}
	\end{subfigure}
	
	% --- Vertical space between rows ---
	\vspace{1em}

	% =========================================
	% Second Row: Long Gap (d=5rc)
	% =========================================
	% --- Subfigure 5 ---
	\begin{subfigure}[b]{0.24\linewidth}
		\centering
		\includegraphics[width=\linewidth]{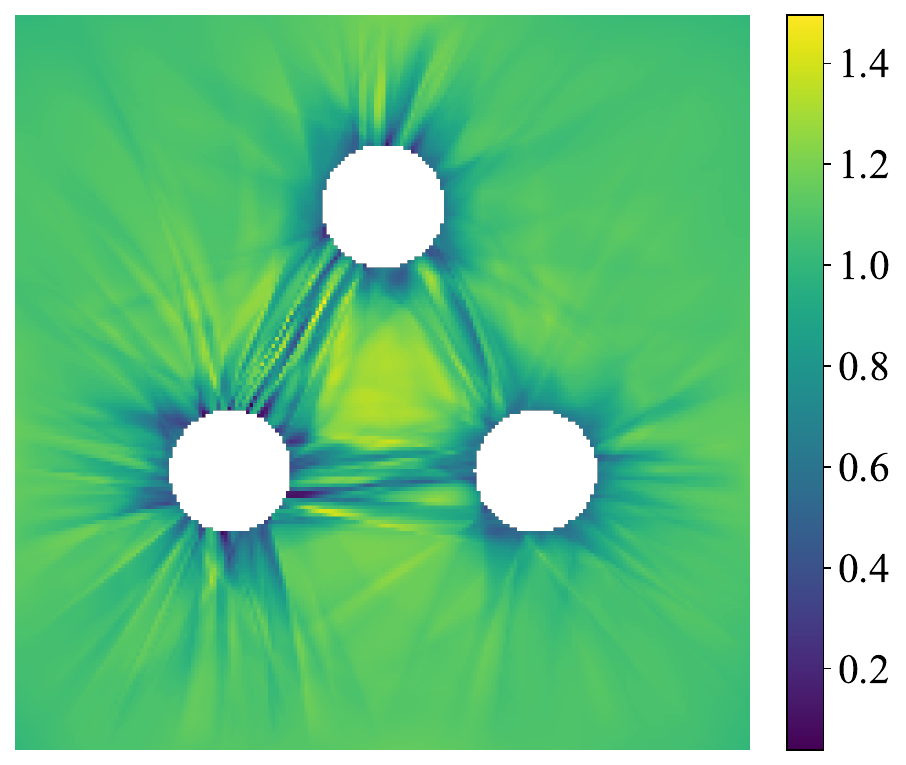}
		\caption{$P=100$}
		\label{fig:three_long_P100}
	\end{subfigure}
	\hfill
	% --- Subfigure 6 ---
	\begin{subfigure}[b]{0.24\linewidth}
		\centering
		\includegraphics[width=\linewidth]{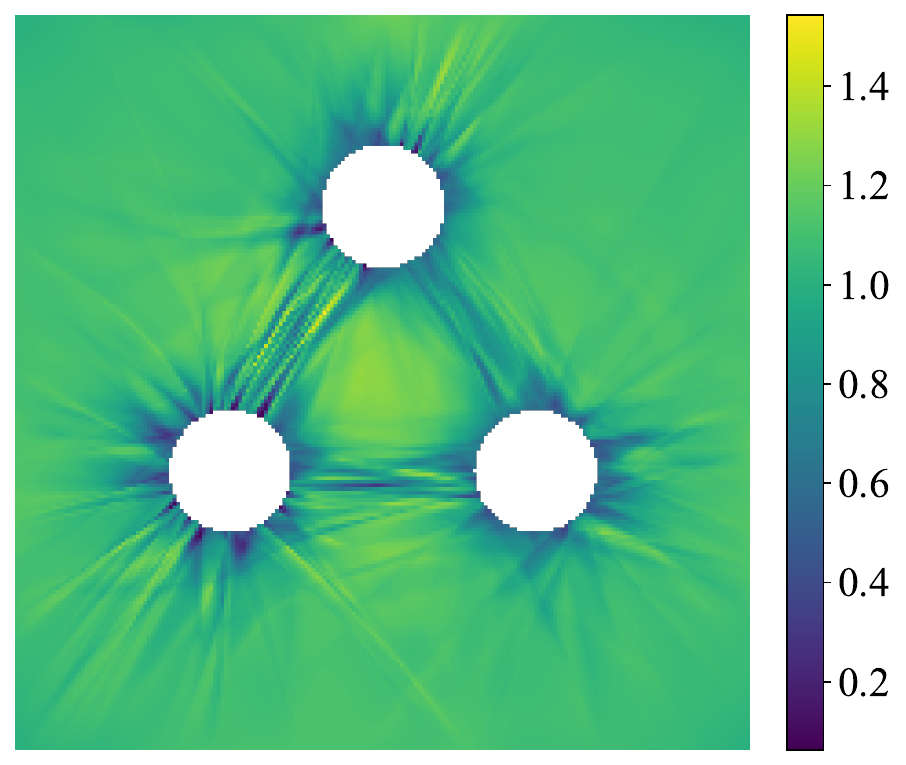}
		\caption{$P=400$}
		\label{fig:three_long_P400}
	\end{subfigure}
	\hfill
	% --- Subfigure 7 ---
	\begin{subfigure}[b]{0.24\linewidth}
		\centering
		\includegraphics[width=\linewidth]{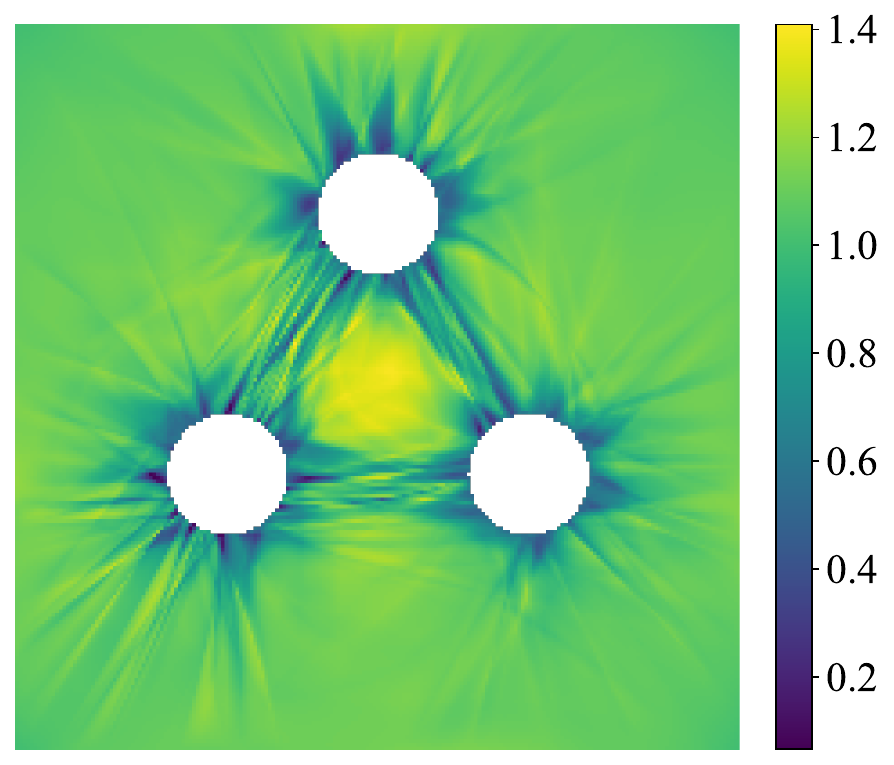}
		\caption{$P=1600$}
		\label{fig:three_long_P1600}
	\end{subfigure}
	\hfill
	% --- Subfigure 8 ---
	\begin{subfigure}[b]{0.24\linewidth}
		\centering
		\includegraphics[width=\linewidth]{Figs/Three_epsilon0_period/F_fields_long_P3200.pdf}
		\caption{$P=3200$}
		\label{fig:three_long_P3200}
	\end{subfigure}
	
\caption{Effect of the resampling period $P$ in the three-cell non-regularized regime ($\varepsilon_0=0$). The Jacobian determinant $J=\det F$ (with $F=\nabla y_\theta$) is shown for the short-distance regime ($d=2.5\,r_c$; top row) and the long-distance regime ($d=5\,r_c$; bottom row).}
	\label{fig:ed-three-eps0-period}
\end{figure}

% ============================================================
\begin{figure}[t]
	\centering
	% =========================================
	% First Row: Short Gap (d=2.5rc)
	% =========================================
	% --- Subfigure 1 ---
	\begin{subfigure}[b]{0.24\linewidth}
		\centering
		\includegraphics[width=\linewidth]{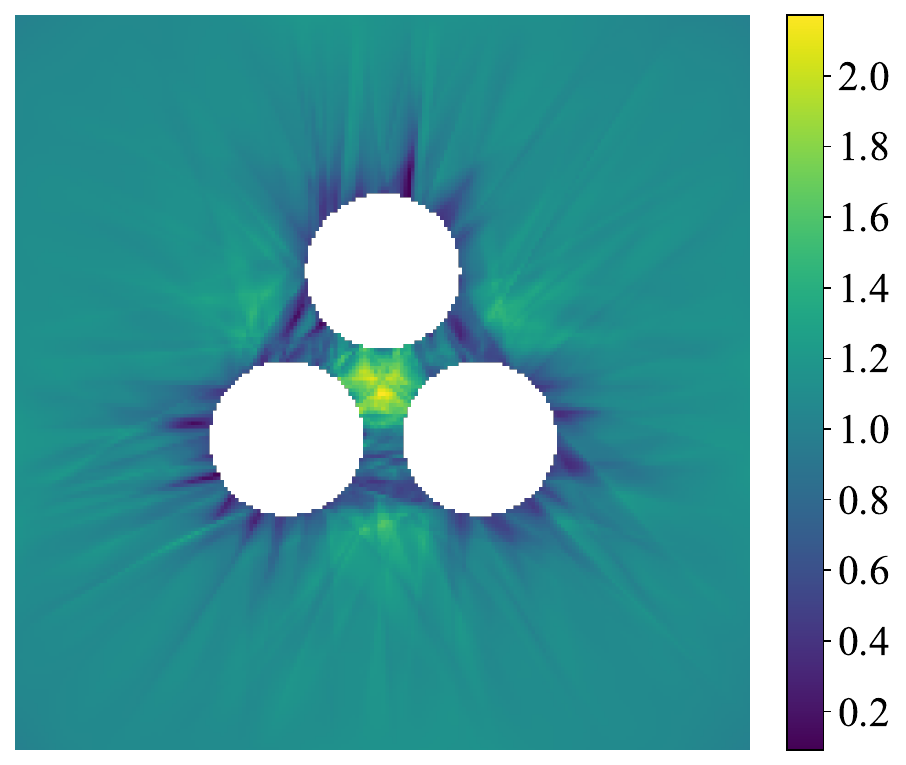}
		\caption{$\rho_{uq}=0.01$}
		\label{fig:three_short_rho0.01}
	\end{subfigure}
	\hfill
	% --- Subfigure 2 ---
	\begin{subfigure}[b]{0.24\linewidth}
		\centering
		\includegraphics[width=\linewidth]{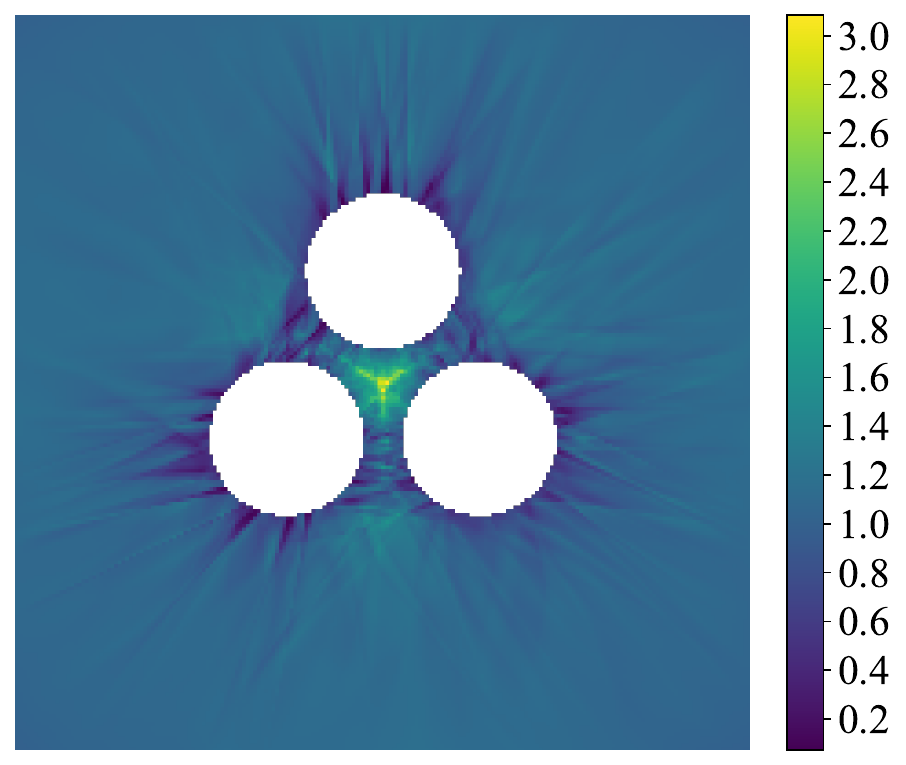}
		\caption{$\rho_{uq}=0.02$}
		\label{fig:three_short_rho0.02}
	\end{subfigure}
	\hfill
	% --- Subfigure 3 ---
	\begin{subfigure}[b]{0.24\linewidth}
		\centering
		\includegraphics[width=\linewidth]{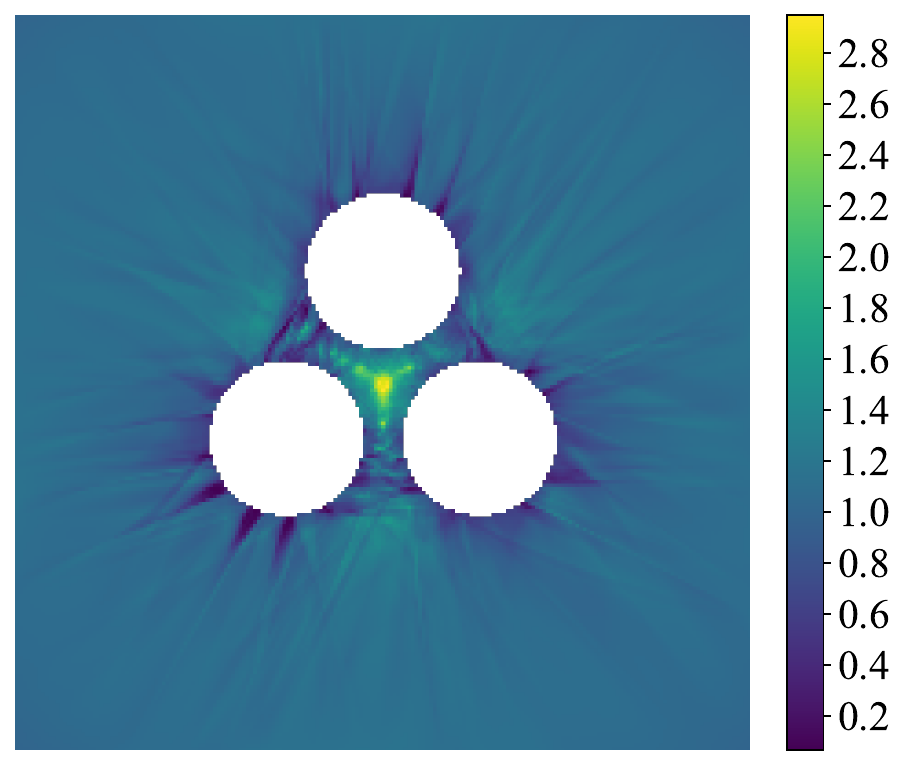}
		\caption{$\rho_{uq}=0.04$}
		\label{fig:three_short_rho0.04}
	\end{subfigure}
	\hfill
	% --- Subfigure 4 ---
	\begin{subfigure}[b]{0.24\linewidth}
		\centering
		\includegraphics[width=\linewidth]{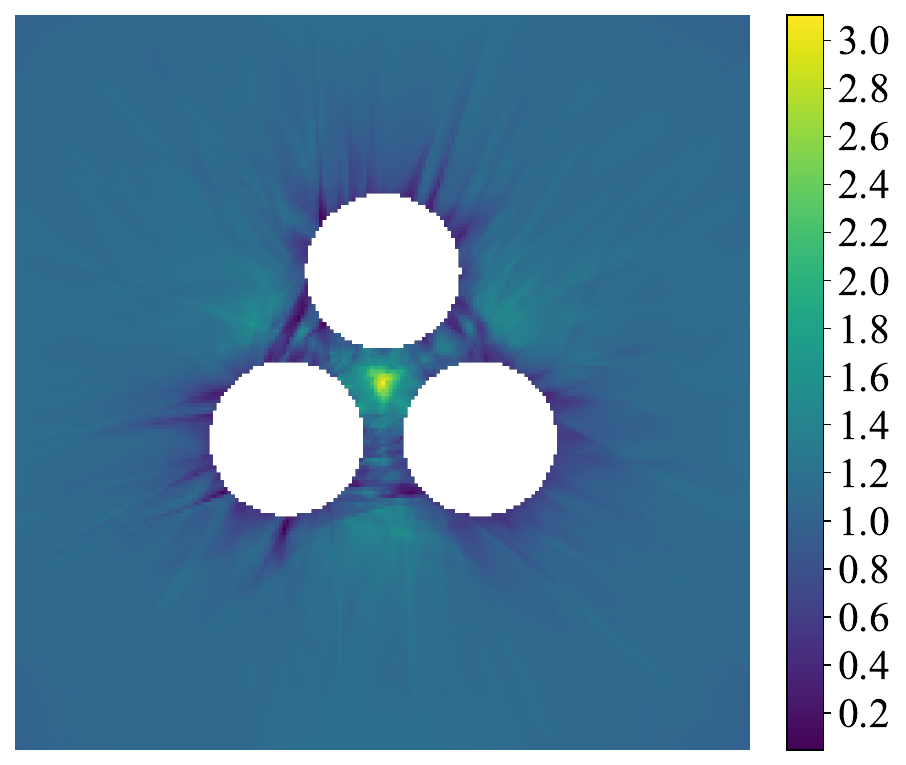}
		\caption{$\rho_{uq}=0.08$}
		\label{fig:three_short_rho0.08}
	\end{subfigure}
	
	% --- Vertical space between rows ---
	\vspace{1em}

	% =========================================
	% Second Row: Long Gap (d=5rc)
	% =========================================
	% --- Subfigure 5 ---
	\begin{subfigure}[b]{0.24\linewidth}
		\centering
		\includegraphics[width=\linewidth]{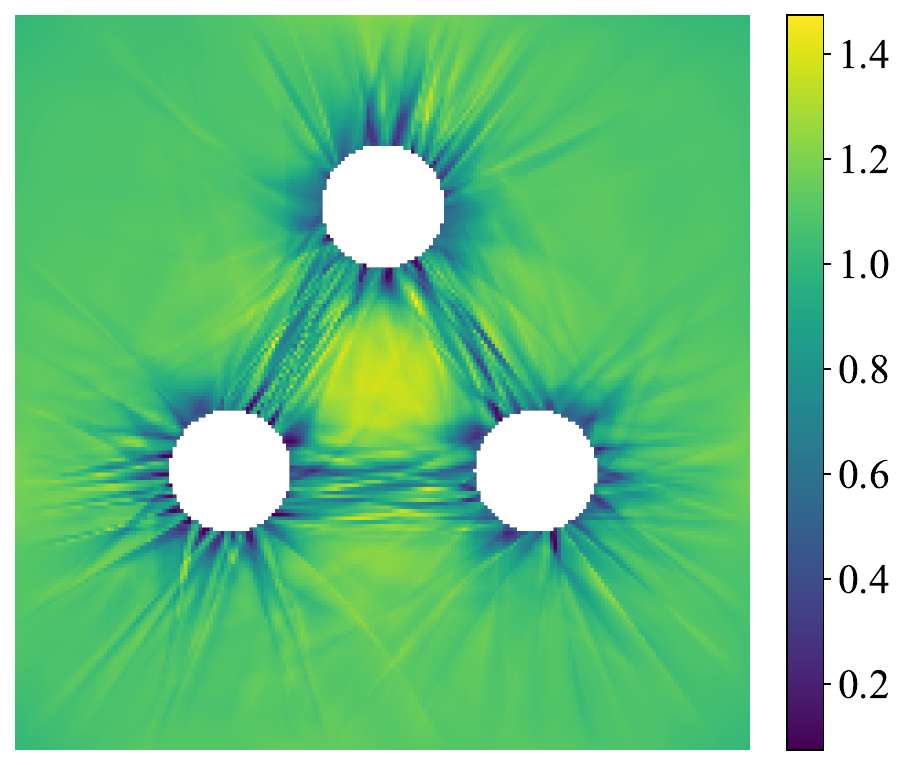}
		\caption{$\rho_{uq}=0.01$}
		\label{fig:three_long_rho0.01}
	\end{subfigure}
	\hfill
	% --- Subfigure 6 ---
	\begin{subfigure}[b]{0.24\linewidth}
		\centering
		\includegraphics[width=\linewidth]{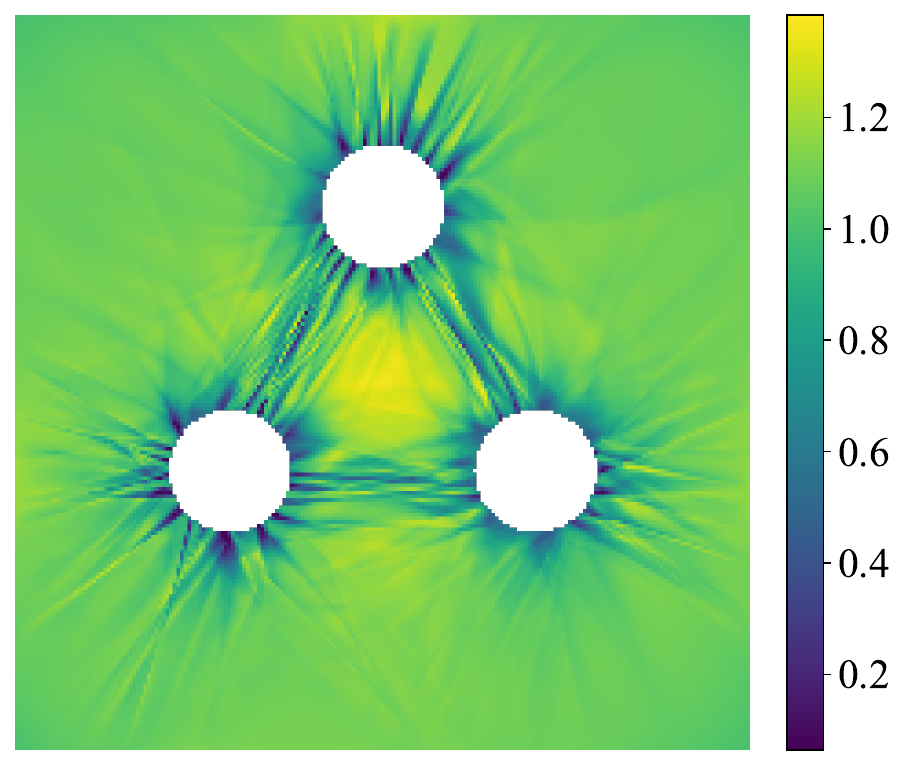}
		\caption{$\rho_{uq}=0.02$}
		\label{fig:three_long_rho0.02}
	\end{subfigure}
	\hfill
	% --- Subfigure 7 ---
	\begin{subfigure}[b]{0.24\linewidth}
		\centering
		\includegraphics[width=\linewidth]{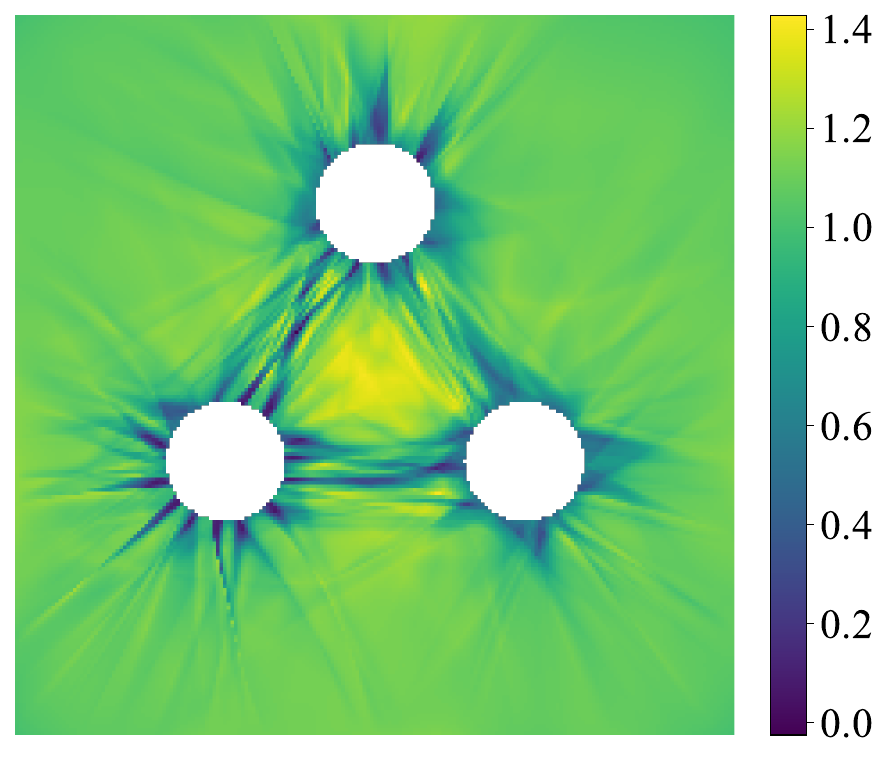}
		\caption{$\rho_{uq}=0.04$}
		\label{fig:three_long_rho0.04}
	\end{subfigure}
	\hfill
	% --- Subfigure 8 ---
	\begin{subfigure}[b]{0.24\linewidth}
		\centering
		\includegraphics[width=\linewidth]{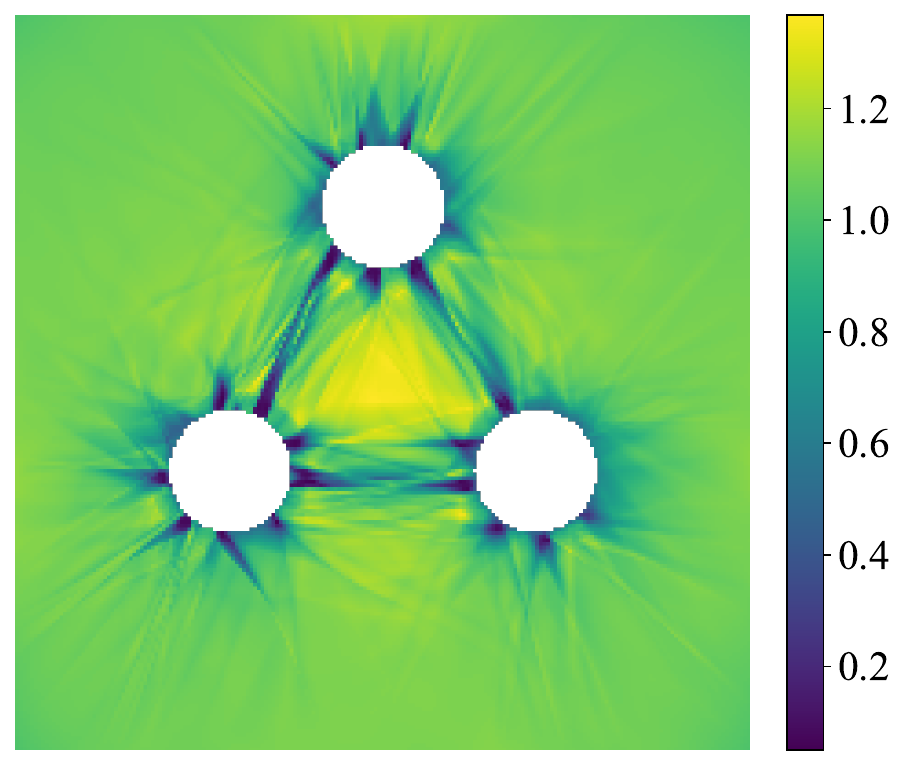}
		\caption{$\rho_{uq}=0.08$}
		\label{fig:three_long_rho0.08}
	\end{subfigure}
	
\caption{Effect of the UQ probe variance $\rho_{\mathrm{uq}}$ in the three-cell non-regularized regime ($\varepsilon_0=0$). The Jacobian determinant $J=\det F$ (with $F=\nabla y_\theta$) is shown for the short-distance regime ($d=2.5\,r_c$; top row) and the long-distance regime ($d=5\,r_c$; bottom row).}
	\label{fig:ed-three-eps0-uqrho}
\end{figure}

\end{document}